\begin{document}
\NewEnviron{problembody}[1][]{%
  \par\noindent%
  \renewcommand{\arraystretch}{1.2}%
  \begin{tabularx}{\textwidth}{lX}%
    \BODY\\
  \end{tabularx}%
  \medskip\par%
}
\makeatother

\newtheorem{corollary}{Corollary}
\newtheorem{definition}{Definition}
\newtheorem{proposition}{Proposition}
\newtheorem{lemma}{Lemma}
\newtheorem{construction}{Construction}
\newtheorem{problem}{Problem}


\definecolor{sweetblue}{HTML}{75CFEB}
\definecolor{sweetdarkblue}{HTML}{5F9AD4}

\makeatletter
\newcommand*{\skipnumber}[2][1]{%
  {\renewcommand*{\alglinenumber}[1]{}\State #2}%
  \addtocounter{ALG@line}{-#1}}
\makeatother

%
%

\newcommand{\GI}{\text{GI}}
\newcommand{\NP}{\text{NP}}
\newcommand{\Pclass}{\text{P}}

\newcommand{\dom}[1]{\textit{Var}(#1)}

\newcommand{\graph}{\mathcal{G}}

\newcommand{\pctext}[2]{\text{\parbox{#1}{\centering #2}}}

\newcommand{\parts}[1]{\mathcal{P}(#1)}

\newcommand{\stripsiso}{\textsf{SI}}
\newcommand{\stripssubiso}{\textsf{SSI}}
\newcommand{\stripssubisogeneral}{\textsf{SSI-H}}
\newcommand{\isomsymb}{\upsilon}
\newcommand{\isompsymb}{\nu}
\newcommand{\isom}[1]{\upsilon\left(#1\right)}
\newcommand{\isomp}[1]{\nu\left(#1\right)}

\newcommand{\stripsembedding}{\textsf{SE}}

\newcommand{\pre}[1]{\textsf{pre}(#1)}
\newcommand{\prep}[1]{\textsf{pre}^+(#1)}
\newcommand{\prem}[1]{\textsf{pre}^-(#1)}
\newcommand{\eff}[1]{\textsf{eff}(#1)}
\newcommand{\effp}[1]{\textsf{eff}^+(#1)}
\newcommand{\effm}[1]{\textsf{eff}^-(#1)}
\newcommand{\presymb}{\textsf{pre}}
\newcommand{\prepsymb}{\textsf{pre}^+}
\newcommand{\premsymb}{\textsf{pre}^-}
\newcommand{\effsymb}{\textsf{eff}}
\newcommand{\effpsymb}{\textsf{eff}^+}
\newcommand{\effmsymb}{\textsf{eff}^-}
\newcommand{\plan}{\pi}
\newcommand{\compos}{\odot}
\newcommand{\bigcompos}[2]{\bigodot_{#1}^{#2}}
\newcommand{\action}[2]{\langle #1, #2\rangle}
\newcommand{\absrel}{\mathcal{S}}  

\newcommand{\lts}{\ensuremath{\Theta}}
\newcommand{\apply}[2]{\ensuremath{#1[#2]}}

\newcommand{\fdist}[2]{\ensuremath{\delta(#1, #2)}}

\newcommand{\moveaction}[2]{\ensuremath{\textsf{move}(#1, #2)}}

\newcommand{\circuitgraph}[1]{\mathcal{C}_{#1}}

\newcommand{\finitemodel}{\ensuremath{M}}
\newcommand{\relation}{\mathcal{R}}

\newcommand{\domainsymb}{\mathcal{D}}
\newcommand{\domain}[1]{\mathcal{D}(#1)}
\newcommand{\opprofile}[1]{\textsf{profile}(#1)}

\newcommand{\fassoc}[2]{\ensuremath{f_{#1}^{#2}}}
\newcommand{\oassoc}[2]{\ensuremath{o_{#1}^{#2}}}
\newcommand{\fuseful}[1]{\ensuremath{u_{#1}}}
\newcommand{\oactive}[1]{\ensuremath{a_{#1}}}
\newcommand{\markedassoc}[2]{\ensuremath{m_{#1}^{#2}}}

\newcommand{\arnaud}[1]{{\color{sweetdarkblue}\textbf{Arnaud:} #1}}
\newcommand{\fred}[1]{{\color{purple}\textbf{Fred:} #1}}

\newcommand{\changes}[1]{{\color{blue}#1}}
\renewcommand{\changes}[1]{#1}

\title{Homomorphisms and Embeddings of STRIPS Planning Models}

\author
{Arnaud Lequen\,\orcidlink{0000-0003-0339-0967} \and Martin\,C. Cooper \orcidlink{0000-0003-4853-053X} \and Frédéric Maris\,\orcidlink{0000-0002-1084-1669}}

\date{IRIT, University of Toulouse, France \\ \vspace{1ex} \textit{Arnaud.Lequen@irit.fr} \\ \vspace{0.5ex} \textit{Martin.Cooper@irit.fr} \\ \vspace{0.5ex} \textit{Frederic.Maris@irit.fr}}

\maketitle
%

\begin{center}
    \textbf{Abstract}%
\end{center}%
\paragraph{}
Determining whether two STRIPS planning instances are isomorphic is the simplest form of comparison between planning instances.
It is also a particular case of the problem concerned with finding an isomorphism between a planning instance $P$ and a sub-instance of another instance $P'$.
One application of such a mapping is to efficiently produce a compiled form containing all solutions to $P$ from a compiled form containing all solutions to $P'$.
We also introduce the notion of \emph{embedding} from an instance $P$ to another instance $P'$, which allows us to deduce that $P'$ has no solution-plan if $P$ is unsolvable.
In this paper, we study the complexity of these problems. We show that the first is GI-complete, and can thus be solved, in theory, in quasi-polynomial time. While we prove the remaining problems to be NP-complete, we propose an algorithm to build an isomorphism, when possible.
We report extensive experimental trials on benchmark problems
which demonstrate conclusively that applying constraint 
propagation in preprocessing can greatly improve the efficiency
of a SAT solver.

\section{Introduction}



Automated planning is concerned with finding a course of action in order to achieve a goal.
In this paper, we are concerned with deterministic, fully-observable planning tasks.
A case in point of such a task is the 3x3x3 Rubik's cube, presented in Figure~\ref{fig:cube_intro:3x3}~\footnote{All images of Rubik's cube in this paper have been generated using \url{https://github.com/Cride5/visualcube}}: even though the puzzle can be described in a few words, there are more than $4.3 \cdot 10^{19}$ different configurations that can be reached through a sequence of legal moves~\cite{rokicki2014}.
As a consequence, without further knowledge, finding a solution for this puzzle is far from trivial, as some configurations might require up to 26 moves to be solved~\cite{rokicki2014}.
However, nowadays, efficient methods for solving a 3x3x3 Rubik's cube are known. But many variants of the puzzle have emerged: consider for instance the 2x2x2 cube, as depicted in Figure~\ref{fig:cube_intro:2x2}. Despite its smaller size, there are approximately $3 \cdot 10^6$ 
different configurations. To tackle this puzzle, rather than trying to find a solution \emph{ex nihilo}, 
one could make the simple observation that a 2x2x2 cube can be seen as a 3x3x3 cube on which edges are ignored. This allows us to transfer the knowledge that we have on the bigger cube to the smaller. In particular, one can readily transform a solution for the 3x3x3 cube into a solution for the 2x2x2 cube, given that the corner pieces are in a similar configuration.


\begin{figure}
    \centering
    \begin{subfigure}[b]{0.25\textwidth}
         \centering
         \includegraphics[width=\textwidth]{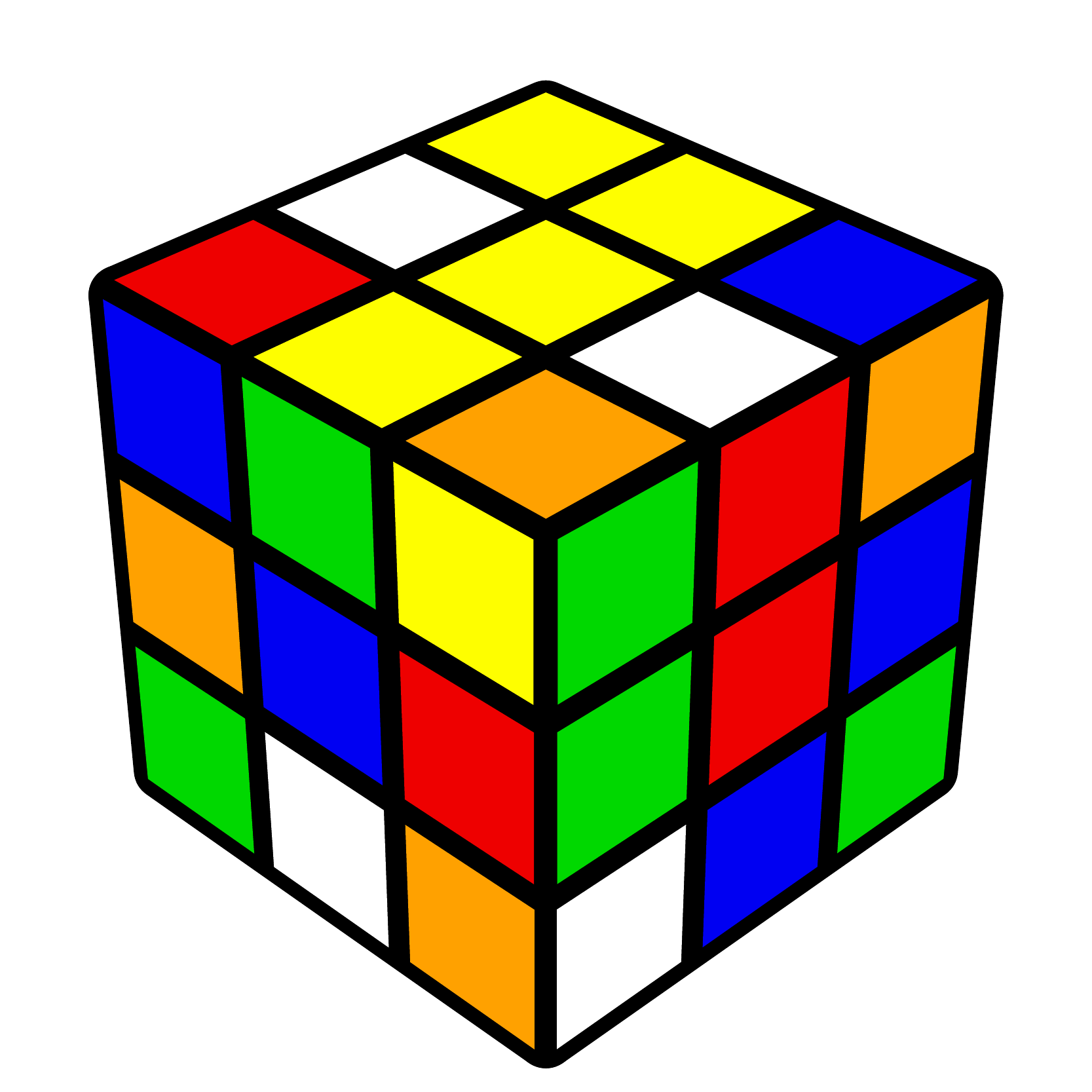}
         \caption{A 3x3x3 Rubik's cube}
         \label{fig:cube_intro:3x3}
     \end{subfigure}
     \begin{subfigure}[b]{0.25\textwidth}
         \centering
         \includegraphics[width=\textwidth]{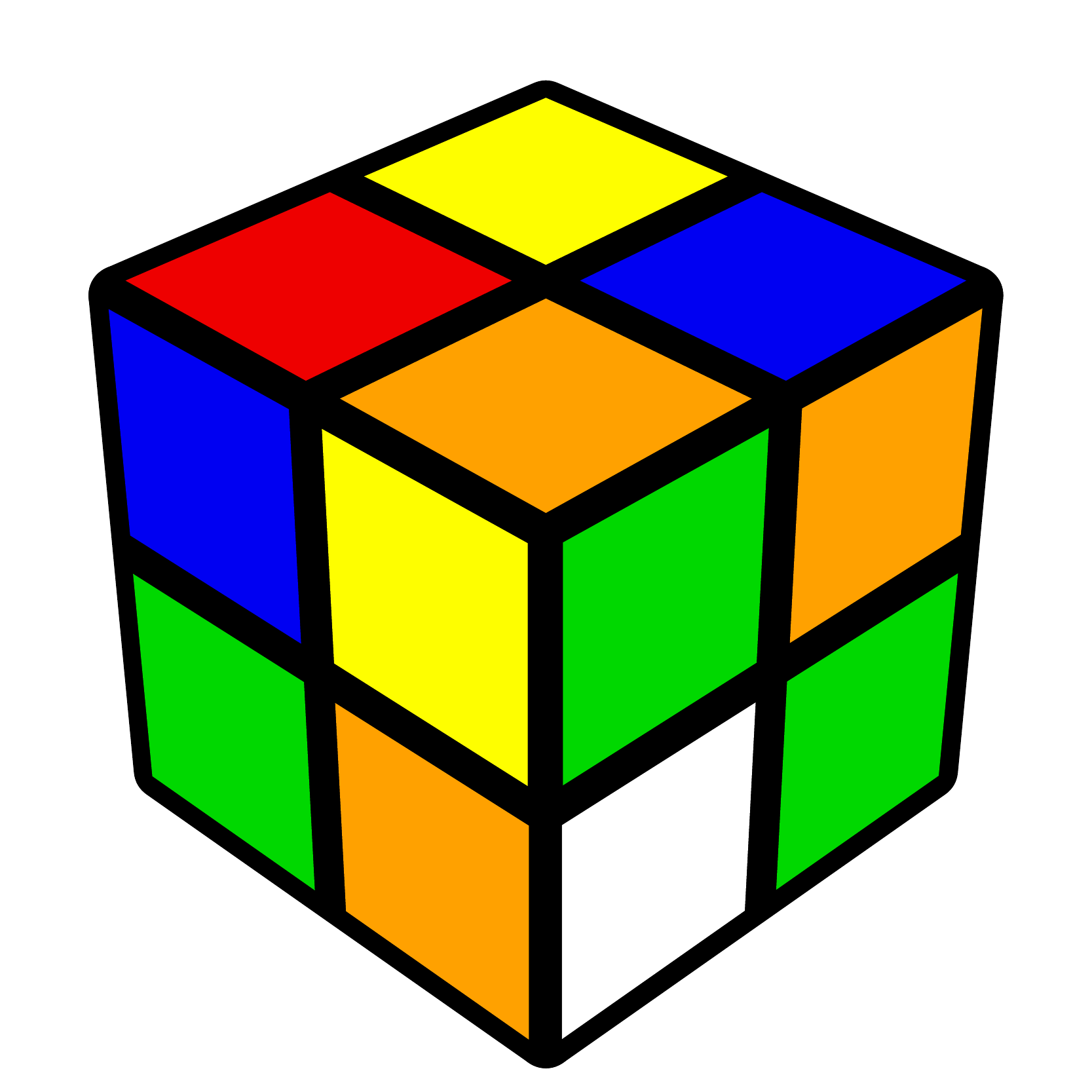}
         \caption{A 2x2x2 Rubik's cube}
         \label{fig:cube_intro:2x2}
     \end{subfigure}
    \caption{A pair of Rubik's cubes. For both of these puzzles, each face can be rotated 90 degrees, clockwise or counter-clockwise, any number of times. The goal is to reach a configuration where for all six colors, all facets of that color are on the same face, so that faces have a homogeneous color. Note that the respective configurations of the corner pieces of these cubes are similar: any sequence of moves that solves the 3x3x3 cube also solves the 2x2x2 cube.}
    \label{fig:cube_intro}
\end{figure}

In the more general case of automated planning, planning tasks expressed in STRIPS~\cite{FikesN71} encode sizeable state-spaces that can rarely be represented explicitly, but that have a clear and somewhat regular structure. Parts of this structure can be, however, common to multiple planning instances, although this similarity is often far from immediate to identify by looking at the STRIPS representation.
Indeed, finding whether or not an instance $P$ is equivalent to a sub-instance of another instance $P'$ requires 
finding a mapping between fluents and operators of the two instances,
while respecting a morphism property. This requires the exploration of the exponential search space of mappings from $P$ to $P'$. 
Finding such a mapping, however, allows us to carry over significant pieces of information (such as solvability) from one problem to the other. 

A classical technique in constraint programming is to store all
solutions to a CSP or SAT instance in a compact compiled form~\cite{DBLP:journals/ai/AmilhastreFM02}.
This is performed off-line. A compilation map indicates which
operations and transformations can be performed in polynomial
time during the on-line stage~\cite{DBLP:journals/jair/DarwicheM02}. STRIPS fixed-horizon planning
can be coded as a SAT instance using the classical SATPLAN
encoding~\cite{DBLP:conf/ecai/KautzS92}. So, for a given instance, all plans \changes{up to a given length} can be stored in
a compiled form, at least in theory. In practice, the compiled
form will often be too large to be stored. Types of planning
problems which are nevertheless amenable to compilation are
those where the number of plans is small or, at the other
extreme, there are few constraints on the order of operators.
If we have a compiled form $C'$ representing all \changes{shortest} solution-plans
to an instance $P'$ and we encounter a similar problem $P$,
it is natural to ask whether we can synthesize a plan for $P$
from $C'$. If $P$ is isomorphic to a subinstance of $P'$, then
it suffices to apply a sequence of conditioning operations to
$C'$ to obtain a compiled form $C$ representing all solutions
to $P$. This was our initial motivation for studying isomorphisms
between subproblems. 
A trivial but important special case occurs
when $C'$ is empty, i.e. $P'$ has no solution. In this case,
an isomorphism from $P$ to a subproblem of $P'$ is a proof that
$P$ has no solution.



In this paper, we will introduce two notions of homomorphisms between planning models. In general, any mapping from a model (a planning instance in our case) to another model that carries over (part of) the structure of the original model is called a homomorphism. If, in addition, the mapping is bijective, then it can be referred to as an isomorphism. We
first focus on problem \stripsiso{}, which is concerned with finding an isomorphism between two STRIPS instances of identical size. As we show that the problem is \GI{}-complete, we prove that a quasi-polynomial time algorithm theoretically exists~\cite{babai2018group}.

We then consider problem \stripssubiso{}, which is concerned with finding an isomorphism between a STRIPS instance \changes{$P$} and a subinstance of another STRIPS instance \changes{$P'$}.
\changes{Given a STRIPS instance $P'$, we call subinstance of $P'$ any STRIPS instance $P'_s$ which shares the same initial state and goal, and whose set of fluents $F'_s$ (resp. operators $O'_s$) forms a subset of the fluents $F'$ (resp. operators $O'$) of $P'$, so that the operators of $O'_s$ do not contain fluents of $F' \setminus F'_s$.}
\changes{An isomorphism between $P$ and some subinstance $P'_s$ of $P'$ is called} a \emph{subinstance isomorphism}. After showing that 
\changes{the problem of finding such a mapping} is \NP{}-complete, we propose an algorithm that finds a subinstance isomorphism if one exists, or that detects that none exists. This algorithm is based on constraint propagation techniques, that allow us to prune impossible associations between elements of $P$ and $P'$, as well as on a reduction to SAT.

So far we have assumed that the two planning instances $P$ and $P'$ 
have the same initial states and goals (modulo the isomorphism).
Even when this is not the case, an isomorphism from $P$ to a subinstance
of $P'$ can still be of use. For example, if $\pi$ is a solution-plan 
for $P$, then its image in $P'$ can be converted to a single new operator \changes{(commonly called a \emph{macro-operator})} which could be
added to $P'$ to facilitate its resolution. We therefore
also consider this weaker notion of subinstance isomorphism, that we call \emph{homogeneous subinstance isomorphism}, and the corresponding 
computational problem \stripssubisogeneral{}. 

In addition, we also introduce another form of comparison between two STRIPS planning instances, that we call \emph{embedding}. An embedding of $P'$ into a bigger instance $P$ is a form of homomorphism that only takes into account operators of $P$ that are not trivialized by the induced transformation. It is useful in proving the unsolvability of an instance, as an embedding of an unsolvable instance $P'$ into $P$ is a proof that $P$ is also unsolvable. To quickly see an essential difference
between the notions of subinstance-isomorphism and embedding, consider an
instance $P$, an instance $Q$ obtained by deleting some operators from $P$
and another instance $R$ obtained by deleting some goals from $P$: $Q$ is
isomorphic to a subinstance of $P$ whereas $R$ embeds in $P$. This simple
example illustrates the important difference that an instance which is
isomorphic to a subinstance (such as $Q$) is harder to solve than the original
instance $P$, whereas embedded instances (such as $R$) are easier.

Previous work investigated the complexity of various problems related to finding solution-plans for STRIPS planning instances~\cite{bylander1994computational}, or focused on the complexity of solving instances from specific domains~\cite{helmert2003complexity}. 
More scarcely, problems focused on altering planning models have been studied from a complexity theory point of view, such as the problem concerned with adapting a planning model so that some user-specified plans become feasible~\cite{lin2021change}.
The present paper follows an orthogonal track in that we detect relationships
between planning instances rather than directly solving them. Independently of
any computational problem, the notions of subinstance isomorphism and embedding
may be of interest in explainable AI: a minimal solvable isomorphic subinstance
of a solvable instance can be viewed as an explanation of solvability,
whereas a minimal unsolvable embedded subinstance can be seen as an explanation
of unsolvability.

The paper is organized as follows. In Section~\ref{sec:preliminaries}, we introduce general notation, concepts and constructions that we use throughout this paper. In Section~\ref{sec:strips_iso_complexity} and Section~\ref{sec:strips_subiso_complexity}, we present our complexity results, for \stripsiso{} and \stripssubiso{} respectively. In Section~\ref{sec:ssi_algorithmic}, we present the outline of our algorithm for \stripssubiso{}. We follow the same pattern for \stripsembedding{}, as we present our complexity result for the problem in Section~\ref{sec:strips_embedding_complexity}, and the adapted algorithm in Section~\ref{sec:se_algorithmic}. Section~\ref{sec:experimental_evaluation} is then dedicated to the experimental evaluation and discussion. The present paper is a considerably extended version
of a conference paper which studied the notion of subinstance isomorphism
but not of embedding~\cite{CP2022}.



\section{Preliminaries}
\label{sec:preliminaries}

In this section, we present generalities about automated planning, and we give some background on the complexity class \GI{}. In addition, Section~\ref{sec:graph_constructions} presents some constructions that we require for the rest of this paper.

\subsection{Automated Planning}

In this paper, we encode planning tasks in STRIPS. A STRIPS planning instance is a tuple $P = \langle F, I, O, G \rangle$ such that $F$ is a set of \emph{fluents} 
(propositional variables whose values can change over time), $I$ and $G$ are 
sets of fluents of $F$, called the \emph{initial state} and \emph{goal},
and $O$ is a set of \emph{operators}. Operators are of the form $o = \langle \pre{o}, \eff{o} \rangle$. $\pre{o} \subseteq F$ is the \emph{precondition} of $o$ and $\eff{o}$ is the \emph{effect} of $o$, which is a set of literals of $F$. We will denote $\effp{o} = \{f \in F \mid f \in \eff{o}\}$, and $\effm{o} = \{f \in F \mid \neg f \in \eff{o}\}$.
\changes{For any set $S$ of literals of $F$, we denote $\neg S = \{\neg l \mid l \in S\}$.}
By a slight abuse of notation, we will denote $\presymb: O \rightarrow 2^F$ the function $o \mapsto \presymb(o)$, and use similar notation with $\effpsymb$ and $\effmsymb{}$, as well as with $\effsymb: O \rightarrow 2^F \cup 2^{\neg F}$.  In the rest of this paper, we will note $\mathcal{C} = \{\presymb{}, \effpsymb{}, \effmsymb{}\}$. In addition, we apply functions to sets in the natural way: for a function $u : A \rightarrow B$, if $C \subseteq A$,
then $u(C)$ represents the set $\Set{u(x) | x \in C}$. 

Note that even if the formalism we present does not allow negative literals in the preconditions, we do not lose in generality: any STRIPS instance with negative preconditions can be translated into an equivalent instance in the formalism we work with, in linear time~\cite{geffner2013concise}. 

A state $s$ is an assignment of truth values to all fluents in $F$.
For notational convenience, we associate $s$ with the set of literals
of $F$ which are true in $s$\changes{, so that $2^F$ is the set of all states}. Given an instance $P = \langle F, I, O, G \rangle$, 
\changes{the state that results from the application of $o \in O$ to some state $s \in 2^F$ is the state $\apply{s}{o} = \left( s \setminus \effm{o} \right) \cup \effp{o}$, which we only define when $o$ is \emph{applicable} in $s$, which means that $\pre{o} \subseteq s$. A}
\emph{solution-plan} is a sequence of operators $o_1,\ldots,o_k$
from $O$ such that the   
sequence of states $s_0,\ldots,s_k$ defined by  $s_0 = I$ and 
\changes{$s_i = \apply{s_{i-1}}{o}$} 
(for all $i \in \{1,\ldots,k\}$)
satisfies $\pre{o_i} \subseteq s_{i-1}$ (for all $i \in \{1,\ldots,k\}$) and
$G \subseteq s_k$. A \emph{plan} is defined similarly but without the conditions concerning $I$ and $G$.

For example, in a STRIPS formalisation of a 2x2x2 cube, fluents would be used to represent the different states of the cube, while the operators would model the action of rotating a face. A possible way to model the puzzle is sketched in Figure~\ref{fig:cube_strips}.
Coding a Rubik’s cube in STRIPS would be unwieldy, due to the large number of actions required, but it will allow us to easily illustrate certain notions since it is a well-known puzzle that requires little explanation. \changes{Various encodings of the problem into a planning language have been proposed, and both the 2x2x2 and the 3x3x3 Rubik's cube have been modeled in SAS$^+$~\cite{buchner2022comparison} (which is a more succinct planning language where variables are multivalued, whereas in STRIPS they are binary since a fluent is either true or false) or in PDDL with conditional effects~\cite{muppasani2023solving}(which is also significantly richer than STRIPS).}

Another generic problem that can be compactly coded in STRIPS is the path-finding problem on a graph, described below in Section~\ref{sec:graph_constructions} which could easily be generalized to a multi-agent version. 

\begin{figure}
    \centering
    \begin{center}
        \includegraphics[width=0.2\textwidth]{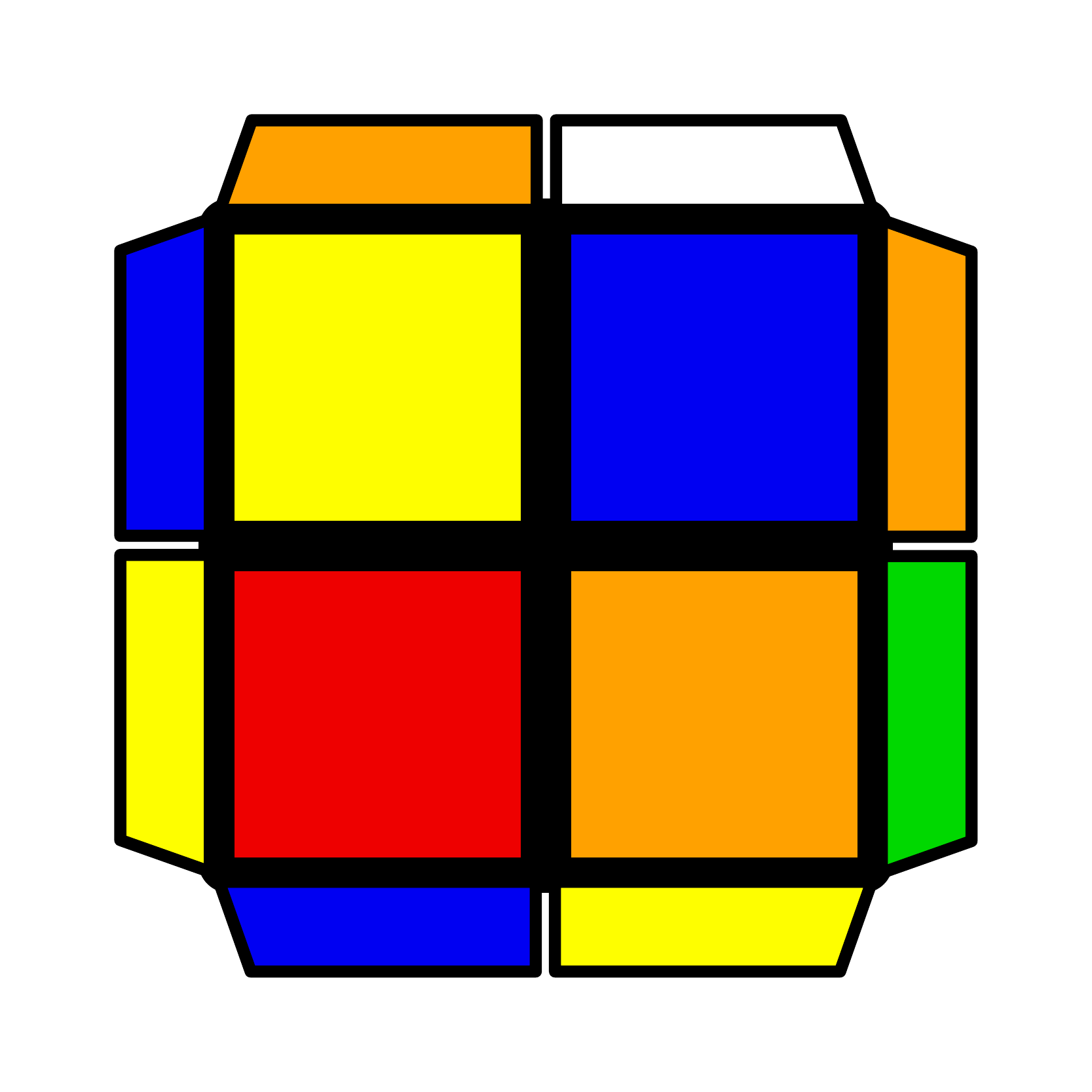}
        \includegraphics[width=0.2\textwidth]{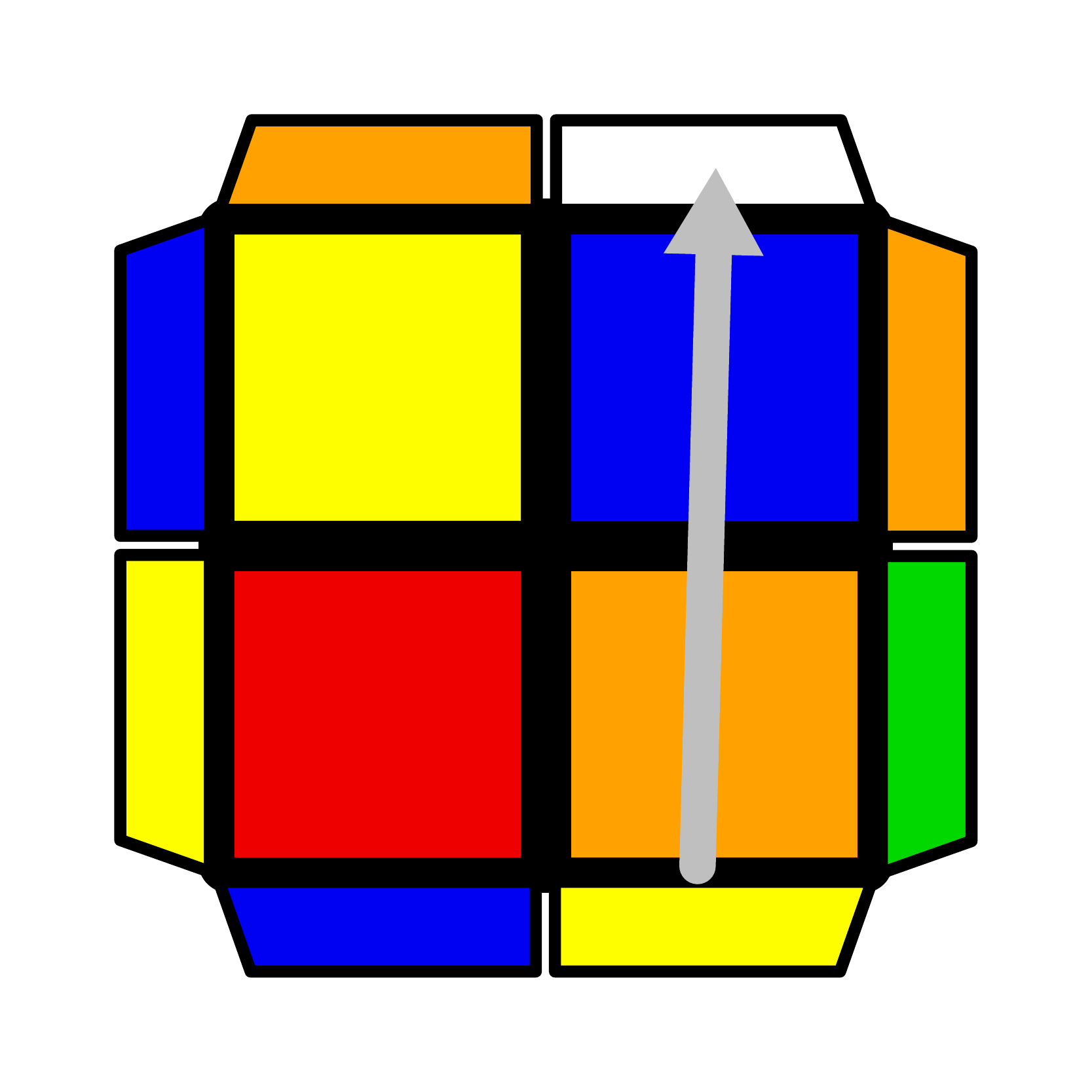}
        \includegraphics[width=0.2\textwidth]{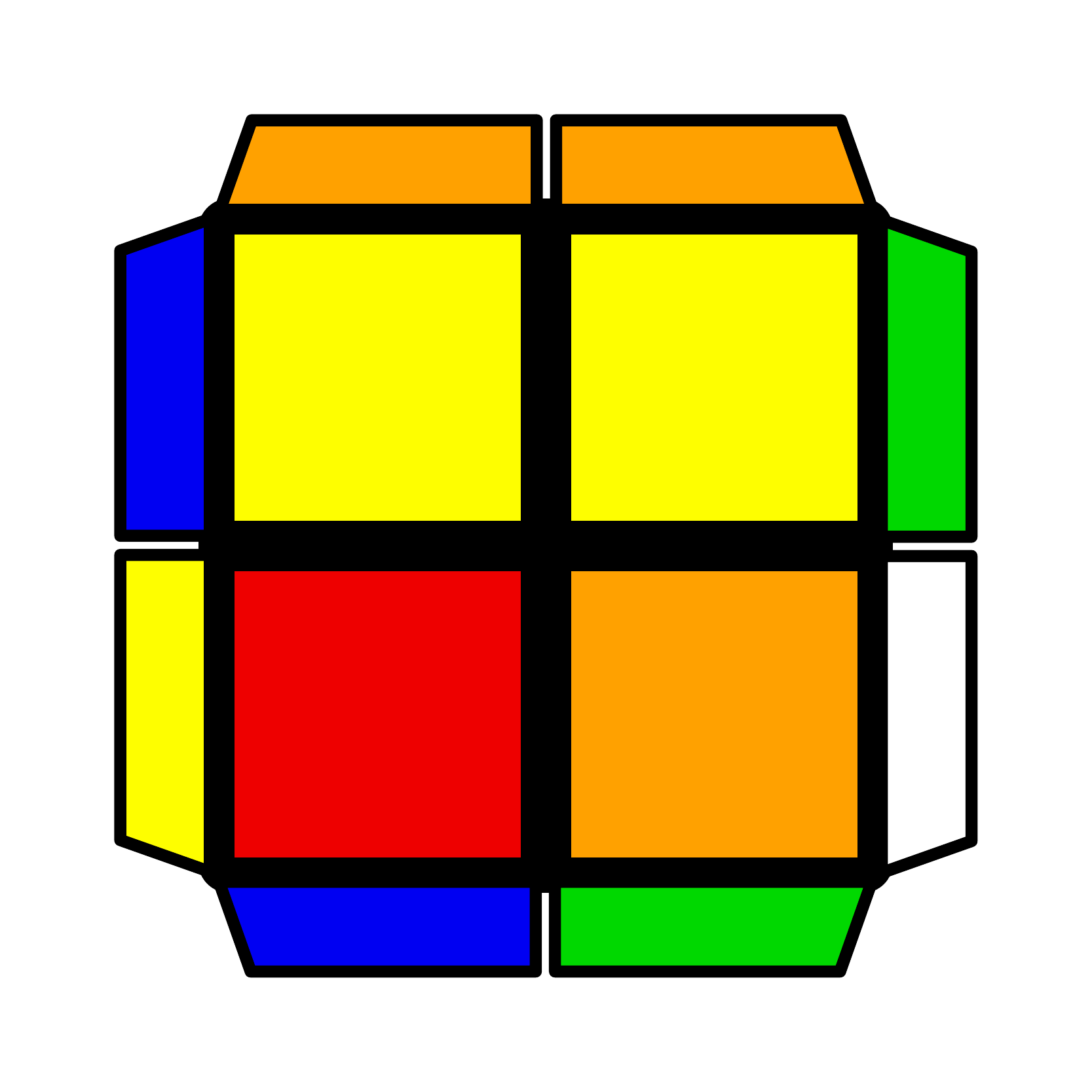}
    \end{center}
    
    $\Set{\begin{array}{c}\text{col}(\text{up-top-right, \textcolor{blue}{blue}}),\\
    \text{col}(\text{up-low-left, \textcolor{red}{red}}), \\
    \ldots
    \end{array}}
    \hspace{7ex}\xrightarrow{\hspace{1ex}\changes{\lstinline{Operator o}}\hspace{1ex}}\hspace{6ex}
    \Set{\begin{array}{c}\text{col}(\text{up-top-right, \textcolor{orange}{yellow}}),\\
    \text{col}(\text{up-low-left, \textcolor{red}{red}}), \\
    \ldots
    \end{array}}
    $
    \caption{Top view of the 2x2x2 cube presented earlier, as well as the resulting configuration after the right face is rotated of a quarter of turn clockwise. Below both configurations, we partially represent the associated state resulting from a formalisation into STRIPS of the puzzle. For instance, the fluent \lstinline|col(up-top-right, blue)|, which is true in the first state, represents the fact that, on the upper face of the cube, the top right facet is blue. After \changes{some operator \lstinline{o} (which, in this case, rotates the right face clockwise)} is applied to the state, the facet is no longer blue, but has been replaced by a yellow one, as reflected in the resulting state.}
    \label{fig:cube_strips}
\end{figure}

\changes{A STRIPS planning instance represents succinctly a more expansive state-space, which we formalise as a \emph{labeled transition system} (LTS). A labeled transition system is a tuple $\lts{} = \langle S, L, T, s_I, S^G\rangle$, where $S$ is a finite set of \emph{states}, $L$ is a finite set of \emph{labels}, $T \subseteq S \times L \times S$ is the set of \emph{transitions}, $s_I \in S$ is the \emph{initial state}, and $S^G \subseteq S$ is the set of \emph{goal states}. When given a STRIPS planning instance $P = \langle F, I, O, G \rangle$, the underlying state-space is naturally represented by the LTS $\lts^P = \langle 2^F, O, T^P, I, S^G \rangle$, where the states of $\lts^P$ are exactly the states of $P$, the initial state of $\lts^P$ is the same as for $P$, the labels are the operators, and the transitions $T^P$ of $\lts^{P}$ between two states $s_1, s_2 \in 2^F$ are the ones that are possible by the application of an operator of $P$. More formally, we have $T^S = \left\{\langle s_1, o, s_2\rangle \in 2^F \times O \times 2^F \mid \pre{o} \subseteq s_1 \text{ and } s_2 = \apply{s_1}{o} \right\}$. The goal states of $\lts^P$ are the ones that satisfy the goal condition of $P$, so that $S^G = \{ s \in 2^F \mid G \subseteq s\}$.}

\subsection{\changes{The} Complexity Class \GI{}}

This section introduces the complexity class \GI{}, for which \stripsiso{} is later shown to be complete. \GI{} is built around the Graph Isomorphism problem,
\changes{which consists in determining whether two graphs have the same structure.}
\changes{Formally, a graph $\graph$ is a pair $(V, E)$, where $V$ is a set of elements called \emph{vertices} and $E$ is a set of (unordered) pairs called \emph{edges}. $\graph$ is said to be \emph{undirected} when edges are unordered pairs (denoted $\{v_1, v_2\}$ with $v_1, v_2 \in V$), and \emph{directed} when edges are ordered pairs (denoted $(v_1, v_2)$).} 

\changes{The Graph Isomorphism problem} consists in determining the existence of 
a bijection $u: V \rightarrow V'$ between the vertices of two \changes{undirected} graphs $\graph{}(V, E)$ and $\graph{}'(V', E')$, such that the images of vertices linked by an edge in $\graph{}$ are also linked by an edge in $\graph{}'$ (and vice-versa). Formally, we require that the following condition holds:
\begin{equation}
    \label{eq:gi_morphism}
    \{x, y\} \in E \text{ iff } \{u(x), u(y)\} \in E'
\end{equation}



\begin{definition}
The complexity class \GI{} is the class of problems with a polynomial-time Turing reduction to the Graph Isomorphism problem.
\end{definition}

Complexity class $\GI{}$ contains numerous problems concerned with the existence of an isomorphism between two non-trivial structures encoded explicitly. Such problems are often complete for the class. For instance, the problems of finding an isomorphism between colored graphs, hypergraphs, automata, etc. are $\GI{}$-complete~\cite{zemlyachenko1985graph}.
In particular, we later use the following result:

\begin{proposition}[Zemlyachenko \emph{et al.} 1985\cite{zemlyachenko1985graph}, Ch. 4, Sec. 15]
    The \emph{\changes{Directed} Graph Isomorphism problem} is \GI{}-complete.
\end{proposition}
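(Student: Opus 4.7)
The plan is to prove $\GI{}$-completeness in the two standard directions: membership in $\GI{}$, and hardness for $\GI{}$.

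Hardness is immediate. Given two undirected graphs $\graph_1(V_1, E_1)$ and $\graph_2(V_2, E_2)$, I would construct directed graphs $\graph_1^d$ and $\graph_2^d$ by replacing each undirected edge $\{x, y\}$ with the two arcs $(x, y)$ and $(y, x)$. A bijection on the vertices satisfies the edge-preservation condition~(\ref{eq:gi_morphism}) for the undirected graphs if and only if it satisfies the analogous condition for the directed graphs, so $\graph_1 \cong \graph_2$ iff $\graph_1^d \cong \graph_2^d$. This is a polynomial-time many-one reduction, hence in particular a Turing reduction.

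For membership, I need to reduce Directed Graph Isomorphism to (undirected) Graph Isomorphism. The idea is to encode the orientation of each arc using a small rigid gadget that distinguishes its two endpoints by local structure. Given a directed graph $\graph(V, E)$, I would construct an undirected graph $\graph^u$ as follows: keep the vertex set $V$, and for each arc $(u, v) \in E$ introduce two fresh vertices $a_{uv}$ and $b_{uv}$, together with the undirected edges $\{u, a_{uv}\}$, $\{a_{uv}, b_{uv}\}$ and $\{b_{uv}, v\}$. To make $a_{uv}$ and $b_{uv}$ locally distinguishable, I would attach a pendant ``tag'' subgraph to $a_{uv}$ (say, a single pendant leaf, marking the ``tail'' side) while leaving $b_{uv}$ untagged. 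The gadget is rigid in the sense that any isomorphism between $\graph^u$ and $\graph'^u$ must map original vertices of $V$ to original vertices of $V'$ (they are the only ones of high degree when $V$ itself is chosen large enough, or one can further pad with anchors to enforce this), and within each gadget it must send tagged vertices to tagged vertices. Consequently, the direction of every arc is preserved, and $\graph^u \cong \graph'^u$ iff $\graph \cong \graph'$.

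The main obstacle is ensuring the gadget is truly rigid, so that no spurious isomorphism can arise either by swapping the two auxiliary vertices of a gadget (which would reverse an arc) or by mapping an original vertex onto a gadget vertex. The first issue is handled by the asymmetric tagging of $a_{uv}$ versus $b_{uv}$; the second is handled by attaching to every vertex of $V$ a distinctive ``anchor'' subgraph that is larger than anything appearing in a gadget, so that degree and local structure force originals to be mapped to originals. Once rigidity is verified, the reduction is polynomial in the size of $\graph$, which concludes the proof.
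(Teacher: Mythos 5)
The paper does not actually prove this proposition: it is imported verbatim from Zemlyachenko \emph{et al.} (Ch.~4, Sec.~15) as a known fact, so there is no internal proof to compare against. Your sketch supplies the standard self-contained argument, and both directions are essentially right. The hardness direction (replace each undirected edge by a pair of anti-parallel arcs) is airtight and is in fact a many-one reduction, which is stronger than the Turing reduction required by the definition of \GI{}. The membership direction is the classical gadget reduction of directed to undirected isomorphism, and your gadget (subdivide each arc $(u,v)$ into $u$--$a_{uv}$--$b_{uv}$--$v$ with an asymmetric tag on the tail side) does work once rigidity is secured: the tag gives $a_{uv}$ and $b_{uv}$ different degrees, so no isomorphism can reverse an arc within a gadget. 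The one point to tighten is your parenthetical that original vertices are "the only ones of high degree when $V$ is chosen large enough" --- this is not true on its own, since a vertex of $\graph$ with total degree one or two ends up with the same degree as a gadget vertex; the anchor padding you then propose is therefore not an optional reinforcement but a necessary part of the construction, and its own rigidity (anchors not confusable with gadgets, nor with each other's attachment points) must be checked, e.g.\ by using a pendant clique or triangle, which cannot occur inside the tree-shaped arc gadgets. An alternative that avoids these case distinctions is to factor the reduction through colored graph isomorphism (color classes for original vertices, tail vertices, and head vertices), whose \GI{}-completeness is established in the same reference the paper cites.
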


As with the Graph Isomorphism problem, an isomorphism between \changes{directed} graphs $\graph{}(V, E)$ and $\graph{}'(V', E')$ is a bijection $u: V \rightarrow V'$ such that 
 $(x, y) \in E \text{ iff } (u(x), u(y)) \in E'$.

In this paper, we consider another category of structures, called Finite Model, defined below. Finite models are also such that the related isomorphism existence problem is \GI{}-complete.

\begin{definition}
\label{def:finite_model}
A Finite Model is a tuple $M = \langle V, R_1, \ldots, R_n \rangle$ where $V$ is a finite non-empty set and each $R_i$ is a relation on elements of $V$ with a finite number of arguments.
\end{definition}

Let $M = \langle V, R_1, \ldots, R_n \rangle$ and $M' = \langle V', R'_1, \ldots, R'_n \rangle$ be two finite models. An isomorphism between $M$ and $M'$ is a bijection $u: V \rightarrow V'$ such that, for any $i \in \{1, \ldots, n\}$, for any set of elements $v_1, \ldots, v_m$ with $m$ the arity of $R_i$, $R_i(v_1, \ldots, v_m)$ iff $R'_i\left(u(v_1), \ldots, u(v_m)\right)$.

\begin{proposition}[Zemlyachenko \emph{et al.} 1985\cite{zemlyachenko1985graph}, Ch. 4, Sec. 15]
    The \emph{Finite Model Isomorphism problem} is \GI{}-complete.
\end{proposition}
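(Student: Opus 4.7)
The plan is to establish $\GI$-completeness in the standard way, by proving both a reduction from Graph Isomorphism to Finite Model Isomorphism (hardness) and a reduction in the opposite direction (membership).

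For the hardness direction, I would give a trivial reduction from the (undirected) Graph Isomorphism problem: an undirected graph $\graph = (V, E)$ is simply a Finite Model $\langle V, R\rangle$ with one symmetric binary relation $R$ such that $R(x, y)$ holds iff $\{x, y\} \in E$. Two graphs are isomorphic in the graph-theoretic sense iff the corresponding two finite models are isomorphic in the sense defined above, so this mapping is both polynomial-time computable and a correct reduction. This step should be straightforward.

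For the membership direction, the goal is to show that, given a finite model $M = \langle V, R_1, \ldots, R_n\rangle$, one can compute in polynomial time a graph $\graph_M$ such that $M \cong M'$ iff $\graph_M \cong \graph_{M'}$. The natural route is to first reduce to vertex-\emph{colored} Graph Isomorphism, which is already known to be $\GI$-complete, and then absorb the colors using rigid-gadget attachments (e.g. replacing each color class by attaching to each vertex of that class a uniquely-shaped tree). For the colored-graph construction I would introduce: (i) one ``element'' vertex for each $v \in V$, all sharing a single color; (ii) for each relation $R_i$ of arity $m_i$ and each tuple $\bar{v} = (v_1, \ldots, v_{m_i})$ in $R_i$, a ``tuple'' vertex $t_{\bar{v}, i}$ colored to identify the relation $R_i$; (iii) for each position $j \in \{1, \ldots, m_i\}$, an auxiliary ``position'' vertex $p_{j, \bar{v}, i}$ colored by the pair $(i, j)$, with edges $\{t_{\bar{v}, i}, p_{j, \bar{v}, i}\}$ and $\{p_{j, \bar{v}, i}, v_j\}$. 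The coloring forces any isomorphism to map elements to elements, tuple vertices of $R_i$ to tuple vertices of $R_i$, and position-$j$ vertices of $R_i$ to position-$j$ vertices of $R_i$, which exactly captures the semantics of Definition~\ref{def:finite_model}.

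The main obstacle is arguing the correctness of this second reduction, i.e.\ showing that every graph isomorphism $\graph_M \to \graph_{M'}$ restricts to a bijection $u : V \to V'$ that is a finite-model isomorphism, and conversely that every finite-model isomorphism extends to a graph isomorphism. The forward direction requires a careful case analysis by color class (tuple vertices, position vertices, element vertices) to conclude that a tuple $(v_1, \ldots, v_{m_i}) \in R_i$ in $M$ is mapped, position by position, to a tuple $(u(v_1), \ldots, u(v_{m_i})) \in R'_i$ in $M'$; the converse direction is immediate since an isomorphism $u$ of models lets us transport the whole gadget structure verbatim. Finally, the standard rigid-tree trick removes the colors while preserving isomorphism, completing the polynomial-time many-one (hence Turing) reduction to Graph Isomorphism.
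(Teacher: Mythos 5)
The paper does not actually prove this proposition: it is imported verbatim from Zemlyachenko \emph{et al.} as a known result, so there is no in-paper argument to compare yours against. Your sketch is a correct, self-contained proof of the cited fact, and it follows the standard textbook route. The hardness direction (a graph \emph{is} a finite model with one symmetric binary relation) is immediate and correct. The membership direction via tuple and position gadgets in a vertex-colored graph is also sound: the color classes force any graph isomorphism to respect the three sorts of vertices, and since each tuple vertex for $R_i$ has exactly one neighbour of color $(i,j)$ for each position $j$, which in turn has exactly one element neighbour, a graph isomorphism necessarily carries the tuple $(v_1,\ldots,v_{m_i})\in R_i$ to $(u(v_1),\ldots,u(v_{m_i}))\in R'_i$, and conversely a model isomorphism extends gadget-by-gadget. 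Two points are worth making explicit if you write this up in full: first, the construction is polynomial in the input only because each relation $R_i$ is given as an explicit list of tuples (as Definition~\ref{def:finite_model} intends), so the number of tuple and position vertices is linear in the encoding of $M$; second, in the color-removal step the attached rigid trees must be chosen pairwise non-isomorphic and distinguishable from every subgraph arising in the gadget construction itself (e.g.\ by making them longer than any path already present), which is standard but is precisely where a careless choice would break the ``only if'' direction. Note also that the class $\GI$ as defined in the paper only requires a polynomial-time Turing reduction to Graph Isomorphism, so your many-one reduction is more than sufficient for membership.
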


Class $\GI$ is believed to be an intermediate class between $\Pclass$ and $\NP$: the Graph Isomorphism problem can indeed be solved in quasi-polynomial time~\cite{babai2018group}. Although the problem is thought not to be $\NP$-complete, no polynomial time algorithm is known.

\subsection{Graph encodings into STRIPS}
\label{sec:graph_constructions}

In this section, we present two ways to encode a graph $\graph{} = (V,E)$ into a planning problem $P=\langle F, I, O, G \rangle$. These constructions are needed at various points in the rest of this paper, and only differ in that \changes{the first one concerns directed graphs, while the second one applies to undirected graphs.}

The intuition behind these constructions is that they model an agent that can move on the graph, resting on vertices and moving along edges. An agent being on vertex $v$ would thus be denoted by the state $\{v\}$, where all fluents other than $v$ are false.

In order to make the construction and resulting proofs simpler to read, for any pair $(v_s, v_t) \in F^2$, we will denote $\moveaction{v_s}{v_t}$ the operator that represents a movement from vertex $v_s$ to vertex $v_t$. Keeping in mind that $F=V$, we have, more formally:
\[
    \moveaction{v_s}{v_t} = \action{\{v_s\}}{\{v_t \} \cup \neg(V \setminus \{v_t\})}
\]
In the following construction, the vertices (resp. edges) of $\graph{}$ are in bijection with the fluents (resp. operators) of $P$. In particular, we do not allow multi-edges.


\begin{construction}
\label{cons:oriented_graph}
    Let $\graph{} = (V, E)$ be \changes{a directed} graph. Let us build the planning problem $P_\graph{} = \langle F, I, O, G \rangle$, where:
    \begin{align*}
        F &= V \\
        O &= \left\{ \moveaction{v_s}{v_t} \mid (v_s, v_t) \in E \right\} \\
        G &= I = \varnothing
    \end{align*}
\end{construction}

In the case of \changes{undirected} graphs, the construction is essentially the same, except that moves are possible in both directions.
This gives us the following definition:
\begin{construction}
\label{cons:symmetric_graph}
    Let $\graph{} = (V, E)$ be \changes{an undirected} graph. Let us build the planning problem $P_\graph{} = \langle F, I, O, G \rangle$, where $F$, $I$ and $G$ are defined as in Construction~\ref{cons:oriented_graph}, but where:
    \begin{equation*}
        O = \left\{ \moveaction{v_s}{v_t} \mid \{v_s, v_t\} \in E \right\} 
    \end{equation*}
\end{construction}

Note that in Construction~\ref{cons:symmetric_graph}, we have $\moveaction{v_s}{v_t} \in O$ iff $\moveaction{v_t}{v_s} \in O$.


\section{STRIPS Isomorphism Problem}
\label{sec:strips_iso_complexity}

This section is concerned with the problem of finding an isomorphism between two STRIPS planning problems. After introducing the notion of isomorphism between STRIPS instances that we use throughout this paper, we define problem \stripsiso{} more formally, and settle its complexity.

Intuitively, two STRIPS planning instances are isomorphic when they encode the exact same problem, but do not necessarily agree on the names given to their fluents and operators. For instance, let us consider the pair of 3x3x3 cubes depicted in Figure~\ref{fig:cube_isom}, where the one on the right-hand side is a the exact same cube as the other, but recolored: blue facets have been recolored in light green, yellow facets in purple, etc. The STRIPS instance that models the right cube is the same as the STRIPS instance of the left cube, except that fluents of the form \lstinline|color(X, yellow)| have been replaced by fluents of the form \lstinline|color(X, purple)|, where X designates a facet. An isomorphism between both instances is then a bijective mapping that associates each fluent and operator of a problem to its counterpart in the other.

\begin{figure}
    \centering
    \subfloat{\tikz[remember
    picture]{\node(1ALcubeisom){\includegraphics[width=4cm]{images/3x3.pdf}};}}%
    \hspace*{8ex}%
    \subfloat{\tikz[remember picture]{\node(1ARcubeisom){\includegraphics[width=4cm]{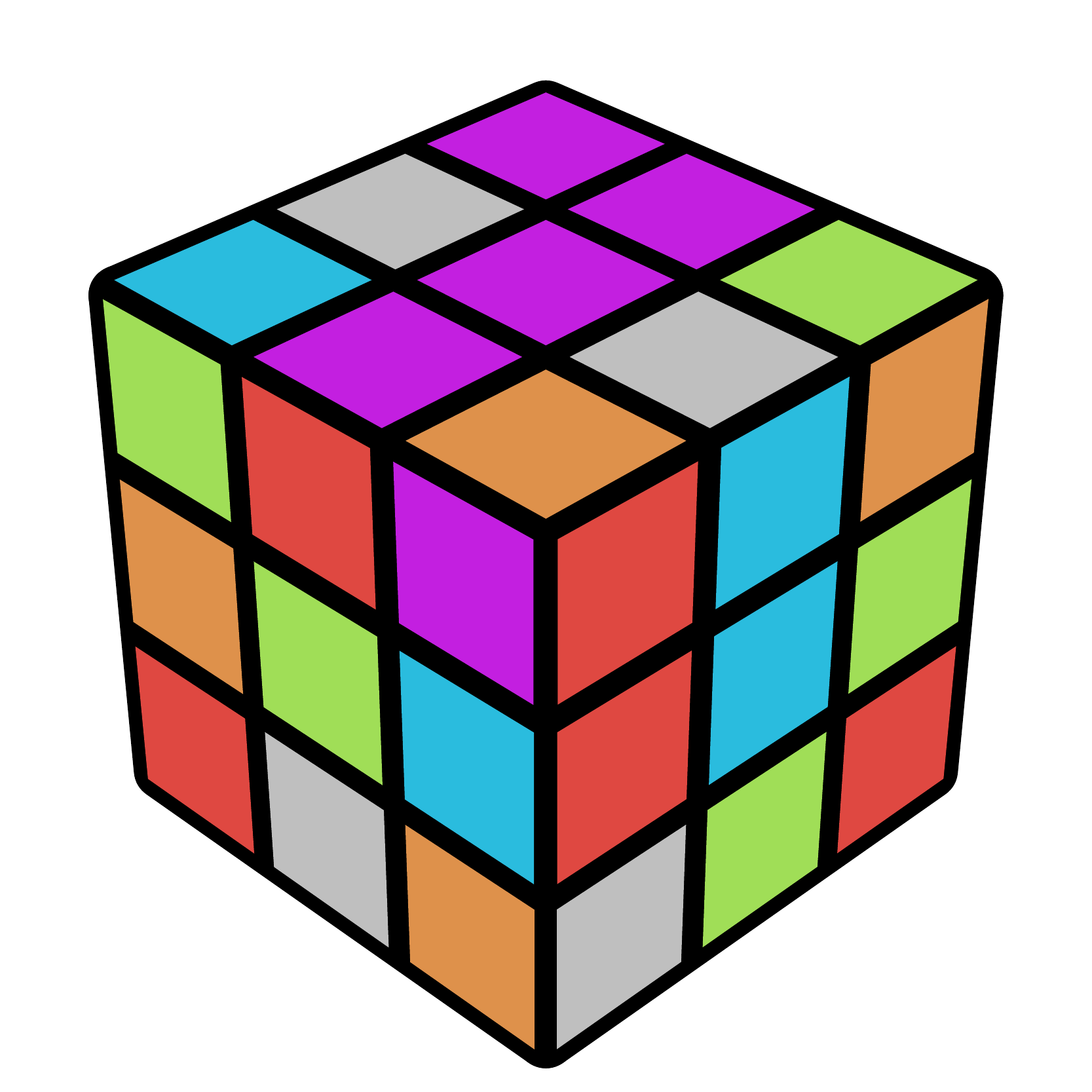}};}}
    \tikz[overlay,remember picture]{\draw[-latex,thick] (1ALcubeisom) -- (1ALcubeisom-|1ARcubeisom.west) node[midway,below,text width=2.5cm]{\hspace{7ex}$\cong$};}
    \caption{Two Rubik's cubes of the same shape, that are isomorphic. The one on the right is a copy of the left one, except that colors have been replaced consistently.}
    \label{fig:cube_isom}
\end{figure}

%
\begin{definition}[Isomorphism between STRIPS instances]
    \label{def:stripsiso}
    Let $P = \langle F, I, O, G \rangle$ and $P' = \langle F', I', O', G' \rangle$ be two STRIPS instances. An \emph{isomorphism} from $P$ to $P'$ is a pair $(\isomsymb{}, \isompsymb{})$ of bijections 
    $\isomsymb{}: F \rightarrow F'$ and $\isompsymb{}: O \rightarrow O'$ that respect the following three conditions:%
    \begin{align}
        &\forall o \in O, \, \isomp{o} = \langle \isom{\pre{o}}, \isom{\eff{o}} \rangle \label{eq:def_strips_iso_m}\\
        &\isom{I} = I' \label{eq:def_strips_iso_i}\\
        &\isom{G} = G' \label{eq:def_strips_iso_g}
    \end{align}
\end{definition}%
Where, by a slight abuse of notation, for any two sets $F_1$ and $F_2$ of fluents of $F$,
%
%
\begin{equation*}
    \isom{ F_1 \cup \neg F_2 }
    = \isom{F_1} \cup \neg \isom{F_2}
\end{equation*}
An immediate property of this definition is that it carries over all plans: any sequence $o_1, \ldots, o_n$ of operators of $O$ is a plan for $P$ if, and only if, the corresponding plan $\isomp{o_1}, \ldots, \isomp{o_n}$ is a plan for $P'$. This homomorphism property is enforced by equation~(\ref{eq:def_strips_iso_m}). Similarly, all solution-plans carry over, as enforced by the additional conditions defined in equations~(\ref{eq:def_strips_iso_i}) and (\ref{eq:def_strips_iso_g}).

We can now introduce the problem \stripsiso{} formally, in order to
analyze its complexity:

\begin{problem}{STRIPS Isomorphism problem \stripsiso{}}
    \label{prob:stripsiso}
    \begin{problembody}
        \emph{\textbf{Input}}: & Two STRIPS instances $P$ and $P'$\\
        \emph{\textbf{Output}}: & An isomorphism $(\isomsymb{}, \isompsymb)$ between $P$ and $P'$, if one exists 
    \end{problembody}
\end{problem}
%
%
\begin{proposition}
    The decision problem corresponding to \stripsiso{} is \GI{}-complete
\end{proposition}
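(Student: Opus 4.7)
The plan is to establish \GI{}-completeness in the standard two directions: membership in \GI{} via a reduction from \stripsiso{} to the Finite Model Isomorphism problem, and \GI{}-hardness via a reduction from the Directed Graph Isomorphism problem to \stripsiso{}. Both the source and target problems are stated to be \GI{}-complete in Section~\ref{sec:preliminaries}, and both reductions can be made polynomial-time many-one, which suffices.

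For membership, I would encode a STRIPS instance $P = \langle F, I, O, G \rangle$ as a finite model $M_P$ in the sense of Definition~\ref{def:finite_model}, with universe $F \cup O$ (taken disjoint, tagging elements if necessary) and the following relations: two unary relations separating fluents from operators, two unary relations picking out $I$ and $G$, and three binary relations $R_{\presymb{}}$, $R_{\effpsymb{}}$, $R_{\effmsymb{}}$, where for instance $R_{\presymb{}}(o, f)$ holds iff $f \in \pre{o}$. A bijection between the universes of $M_P$ and $M_{P'}$ that preserves the two ``sort'' relations necessarily splits into a pair of bijections $\isomsymb: F \to F'$ and $\isompsymb: O \to O'$; the preservation of $R_{\presymb{}}$, $R_{\effpsymb{}}$, $R_{\effmsymb{}}$ together with the two unary relations for the initial state and goal then translates literally into the three conditions of Definition~\ref{def:stripsiso}. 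Hence $P$ and $P'$ are STRIPS-isomorphic iff $M_P$ and $M_{P'}$ are isomorphic as finite models, placing \stripsiso{} in \GI{}.

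For hardness, given two directed graphs $\graph = (V, E)$ and $\graph' = (V', E')$, I would apply Construction~\ref{cons:oriented_graph} to obtain STRIPS instances $P_{\graph{}}$ and $P_{\graph{}'}$ in polynomial time, and then show $\graph \cong \graph'$ iff $P_{\graph{}} \cong P_{\graph{}'}$. The forward direction is straightforward: any graph isomorphism $u: V \to V'$ lifts by setting $\isomsymb = u$ on fluents and $\isompsymb(\moveaction{v_s}{v_t}) = \moveaction{u(v_s)}{u(v_t)}$ on operators; conditions~(\ref{eq:def_strips_iso_i}) and (\ref{eq:def_strips_iso_g}) are vacuous since $I = G = \varnothing$, and condition~(\ref{eq:def_strips_iso_m}) follows from the uniform shape of the \textsf{move} operators. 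The converse direction is where the main (but mild) work lies and is the step to be careful about. Given a STRIPS isomorphism $(\isomsymb, \isompsymb)$, I would use the rigidity of the construction: every operator $\moveaction{v_s}{v_t}$ has precondition $\{v_s\}$, positive effect $\{v_t\}$, and negative effect $V \setminus \{v_t\}$, so condition~(\ref{eq:def_strips_iso_m}) forces $\isompsymb(\moveaction{v_s}{v_t})$ to have precondition $\{\isom{v_s}\}$, positive effect $\{\isom{v_t}\}$, and negative effect $V' \setminus \{\isom{v_t}\}$. The only operator of $P_{\graph{}'}$ with this signature is $\moveaction{\isom{v_s}}{\isom{v_t}}$, which belongs to $O'$ iff $(\isom{v_s}, \isom{v_t}) \in E'$. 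Since $\isompsymb$ is a bijection onto $O'$ and operators are in bijection with edges, this yields $(v_s, v_t) \in E$ iff $(\isom{v_s}, \isom{v_t}) \in E'$, so $\isomsymb$ is a graph isomorphism. I expect no real obstacle beyond checking this bookkeeping carefully for the backward direction.
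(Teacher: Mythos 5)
Your proposal is correct and follows essentially the same route as the paper: membership via the same finite-model encoding (universe $F \cup O$ with unary sort/initial/goal relations and binary relations for $\presymb$, $\effpsymb$, $\effmsymb$), and hardness via Construction~\ref{cons:oriented_graph} from Directed Graph Isomorphism, with the backward direction resting on the rigidity of the \textsf{move} operators exactly as the paper argues through its edge--operator bijections.
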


The rest of this section is dedicated to the proof of this result. We first show the \GI{}-hardness of the problem, and then that it belongs to \GI{}.

\begin{lemma}
\stripsiso{} is \GI{}-hard
\end{lemma}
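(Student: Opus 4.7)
The plan is to reduce Directed Graph Isomorphism, which is $\GI{}$-complete by the cited result, to \stripsiso{}. Given two directed graphs $\graph{} = (V, E)$ and $\graph{}' = (V', E')$, I would apply Construction~\ref{cons:oriented_graph} to each to obtain STRIPS instances $P_\graph{}$ and $P_{\graph{}'}$. This transformation is clearly polynomial-time (each vertex yields one fluent and each edge yields one operator, with empty $I$ and $G$). The core claim to prove is that $\graph{} \cong \graph{}'$ if and only if $P_\graph{} \cong P_{\graph{}'}$.

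For the forward direction, given a graph isomorphism $u : V \to V'$, I would define $\isomsymb{} = u$ on fluents and, on operators, $\isompsymb{}(\moveaction{v_s}{v_t}) = \moveaction{u(v_s)}{u(v_t)}$. This is well-defined and bijective because $(v_s, v_t) \in E$ iff $(u(v_s), u(v_t)) \in E'$, so $\isompsymb{}$ is a bijection between the operator sets of $P_\graph{}$ and $P_{\graph{}'}$. Condition~(\ref{eq:def_strips_iso_m}) follows by direct computation: the precondition $\{v_s\}$ maps to $\{u(v_s)\}$, and the effect $\{v_t\} \cup \neg(V \setminus \{v_t\})$ maps to $\{u(v_t)\} \cup \neg(V' \setminus \{u(v_t)\})$, which are exactly the precondition and effect of $\moveaction{u(v_s)}{u(v_t)}$. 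Conditions~(\ref{eq:def_strips_iso_i}) and~(\ref{eq:def_strips_iso_g}) are trivial since $I = G = \varnothing$ in both instances.

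For the reverse direction, suppose $(\isomsymb{}, \isompsymb{})$ is a STRIPS isomorphism. Since $F = V$ and $F' = V'$, $\isomsymb{}$ is already a bijection from $V$ to $V'$. I would show that $(v_s, v_t) \in E$ iff $(\isom{v_s}, \isom{v_t}) \in E'$. If $(v_s, v_t) \in E$, then $\moveaction{v_s}{v_t} \in O$, so $\isomp{\moveaction{v_s}{v_t}} \in O'$. By condition~(\ref{eq:def_strips_iso_m}), this operator has precondition $\isom{\{v_s\}} = \{\isom{v_s}\}$ and positive effect $\{\isom{v_t}\}$. By Construction~\ref{cons:oriented_graph}, every operator in $O'$ has the form $\moveaction{v'_s}{v'_t}$, and such an operator is uniquely determined by its singleton precondition and singleton positive effect; hence $\isomp{\moveaction{v_s}{v_t}} = \moveaction{\isom{v_s}}{\isom{v_t}}$, which forces $(\isom{v_s}, \isom{v_t}) \in E'$. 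The converse implication follows symmetrically from $\isompsymb{}^{-1}$.

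I do not expect a serious obstacle: the constructions are deliberately rigid (operator preconditions are singletons and positive effects are singletons), so the recovery of a graph isomorphism from a STRIPS isomorphism is essentially forced. The only point requiring care is that $\isompsymb{}$ acts consistently with $\isomsymb{}$ on the operator names, which is guaranteed by condition~(\ref{eq:def_strips_iso_m}) together with the uniqueness of the move operator determined by its pre- and post-conditions.
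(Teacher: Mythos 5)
Your proposal is correct and follows essentially the same route as the paper: a polynomial-time reduction from Directed Graph Isomorphism via Construction~\ref{cons:oriented_graph}, identifying the graph isomorphism $u$ with the fluent bijection $\isomsymb$ and exploiting the rigidity of the $\moveaction{v_s}{v_t}$ operators (singleton precondition determining the source, singleton positive effect determining the target) to transfer the edge condition in both directions. The paper phrases the reverse direction via explicit bijections $g: E \rightarrow O$ and $g': E' \rightarrow O'$, but the substance is identical to your uniqueness argument.
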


\begin{proof}
The proof consists in a reduction from the \changes{Directed} Graph Isomorphism problem to \stripsiso{}. 
Let $(\graph, \graph')$ be an instance of the \changes{Directed} Graph Isomorphism problem, where $\graph = (V, E)$ and $\graph' = (V', E')$ \changes{are directed graphs}. 
The proof relies on Construction~\ref{cons:oriented_graph}, which gives us in polynomial time the STRIPS planning problems $P_\graph{}$ and $P_{\graph{}'}$.
%

We show that there exists an isomorphism $u: V \rightarrow V'$ between $\graph{}$ and $\graph{}'$ iff there exists an isomorphism $(\isomsymb{}, \isompsymb{})$ between $P_\graph{}$ and $P_{\graph{}'}$.
The main idea consists in, first, identifying mappings $u$ and $\isomsymb{}$, and second, showing that the morphism condition between the edges of graphs $\graph{}$ and $\graph{}'$ is enforced by the morphism condition on the operators of STRIPS instances $P_{\graph{}}$ and $P_{\graph{}'}$, and vice-versa.

$(\Rightarrow)$ Suppose that there exists a graph isomorphism $u: V \rightarrow V'$ between $\graph{}$ and $\graph{}'$, and let us show that there exists an isomorphism between $P_\graph{}$ and $P_{\graph{}'}$. 
We define the transformation $\isompsymb{}$ on elements of $O$ by $\isomp{\action{\pre{o}}{\eff{o}}} = \action{u(\pre{o})}{u(\eff{o})}$.
We will show that $\isompsymb{}: O \rightarrow O'$ is well-defined and that the pair $(u, \isompsymb{})$ forms an isomorphism between $P_\graph{}$ and $P_{\graph{'}}$.
For any $o\in O$, by construction, there exists a unique pair $(v_1, v_2) \in V^2$ such that $o = \moveaction{v_1}{v_2}$. Thus, we have that
%
\begin{align*}
    o \in O
    &\textit{ iff } (v_1, v_2) \in E  \\
    &\textit{ iff } (u(v_1), u(v_2)) \in E' \ \ \ \ \ \changes{\text{(since $u$ is a graph isomorphism)}} \\
    &\textit{ iff } \moveaction{u(v_1)}{u(v_2)} \in O' \ \ \ \ \ \changes{\text{(by definition of \textsf{move} in Construction~\ref{cons:oriented_graph})}} \\
    &\textit{ iff } \isomp{\moveaction{v_1}{v_2}} \in O' \ \ \ \ \ \changes{\text{(by our definition of $\isompsymb$)}} \\
    &\textit{ iff } \isomp{o} \in O'
\end{align*}
%
Thus, we have shown that $P_\graph{}$ and $P_{\graph{'}}$ are isomorphic.

$(\Leftarrow)$ Suppose that $P_\graph{}$ and $P_{\graph{}'}$ are isomorphic, and that there exists an isomorphism $(\isomsymb{}, \isompsymb{})$ between them. We will show that there exists an isomorphism between $\graph{}$ and $\graph{'}$. By hypothesis, we have $\isomsymb{}: F \rightarrow F'$ (or $\isomsymb{}: V \rightarrow V'$) and $\isompsymb{}: O \rightarrow O'$ two bijections.

In the following, we will denote by $g$ and $g'$ the bijections $g: E \rightarrow O$ and $g': E' \rightarrow O'$, that exist by the  construction
(e.g. $g((v_1, v_2)) = \moveaction{v_1}{v_2}$).

Let us show that the function $\isomsymb{}$ is a graph isomorphism between $\graph{}$ and $\graph{'}$. We have that, for any $e = (v_1, v_2) \in E$, $g(e) = \moveaction{v_1}{v_2} \in O$. So $\isompsymb{} \circ g(e) = \isomp{\moveaction{v_1}{v_2}}$, and as such, $\isompsymb{} \circ g(e) = \moveaction{\isom{v_1}}{\isom{v_2}} \in O'$. Then, $g'^{-1} \circ \isompsymb{} \circ g(e) = (\isom{v_1}, \isom{v_2})$, but also $g'^{-1} \circ \isompsymb{} \circ g(e) \in E'$. As a consequence, $(\isom{v_1}, \isom{v_2}) \in E'$.

With similar arguments, as $g$, $g'$ and $\isompsymb{}$ are bijections, the converse can be shown. As a consequence, $\isompsymb{}$ is a graph isomorphism between $\graph{}$ and $\graph{'}$.
\end{proof}

\begin{lemma}
The decision problem corresponding to \stripsiso{} is in \GI{}
\end{lemma}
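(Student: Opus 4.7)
The plan is to give a polynomial-time many-one reduction from \stripsiso{} to the Finite Model Isomorphism problem, which is known to be \GI{}-complete by the proposition cited from~\cite{zemlyachenko1985graph}. Since many-one reductions are a special case of Turing reductions, this will place \stripsiso{} in \GI{}.

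Given a STRIPS instance $P = \langle F, I, O, G \rangle$, I would construct a finite model $\finitemodel_P = \langle V, R_F, R_O, R_I, R_G, R_{\presymb}, R_{\effpsymb}, R_{\effmsymb} \rangle$ as follows. The universe is $V = F \sqcup O$, treating fluents and operators as disjoint elements (renaming if necessary, which can be done in polynomial time). I would use the unary relations $R_F$ and $R_O$ to tag elements as fluents or operators respectively, and the unary relations $R_I$ and $R_G$ to tag fluents belonging to the initial state and goal. I would then use three binary relations on $O \times F$: $R_{\presymb}(o, f)$ holding iff $f \in \pre{o}$, $R_{\effpsymb}(o, f)$ holding iff $f \in \effp{o}$, and $R_{\effmsymb}(o, f)$ holding iff $f \in \effm{o}$. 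The construction is clearly polynomial-time.

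Next I would show that $P$ and $P'$ are isomorphic STRIPS instances iff $\finitemodel_P$ and $\finitemodel_{P'}$ are isomorphic finite models. For the forward direction, given a STRIPS isomorphism $(\isomsymb{}, \isompsymb{})$, I would define $u: V \to V'$ by $u(x) = \isom{x}$ when $x \in F$ and $u(x) = \isomp{x}$ when $x \in O$. Since $\isomsymb{}$ and $\isompsymb{}$ are bijections between disjoint sets, $u$ is a bijection; the tagging relations $R_F, R_O$ are preserved by construction, while $R_I, R_G, R_{\presymb}, R_{\effpsymb}, R_{\effmsymb}$ are preserved because of equations~(\ref{eq:def_strips_iso_m}), (\ref{eq:def_strips_iso_i}), and (\ref{eq:def_strips_iso_g}). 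For the converse, given a finite model isomorphism $u: V \to V'$, the fact that $u$ respects $R_F$ and $R_O$ forces it to split into two bijections $\isomsymb{} = u|_F : F \to F'$ and $\isompsymb{} = u|_O : O \to O'$; preservation of $R_{\presymb}, R_{\effpsymb}, R_{\effmsymb}$ then gives equation~(\ref{eq:def_strips_iso_m}), while preservation of $R_I$ and $R_G$ gives equations~(\ref{eq:def_strips_iso_i}) and (\ref{eq:def_strips_iso_g}).

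The main obstacle is conceptual rather than technical: one must ensure that the reduction faithfully forces the mapping to respect the two-sorted nature of STRIPS (fluents map to fluents, operators to operators), which is handled cleanly by the unary tagging relations $R_F$ and $R_O$. The equivalence between the binary relations of the finite model and the set-based operator structure of STRIPS is then a direct unpacking of definitions, since $\presymb{}$, $\effpsymb{}$, and $\effmsymb{}$ are already set-valued maps that correspond exactly to their relational counterparts. This completes the proof that \stripsiso{} is in \GI{}.
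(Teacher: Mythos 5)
Your proposal is correct and follows essentially the same route as the paper: the paper also reduces \stripsiso{} to Finite Model Isomorphism by building the model $\langle V, \relation_F, \relation_I, \relation_O, \relation_G, \relation_{\presymb}, \relation_{\effpsymb}, \relation_{\effmsymb}\rangle$ over $V = F \cup O$ with unary tagging relations and binary relations encoding preconditions and effects, and then verifies both directions of the equivalence exactly as you do. The only cosmetic difference is your explicit remark that $F$ and $O$ must be made disjoint, which the paper handles by simply noting that $F$ and $O$ partition $V$.
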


\begin{proof}
The proof follows a reduction from \stripsiso{} to the Finite Model isomorphism problem, as introduced in Definition~\ref{def:finite_model}.

The demonstration is based on the following construction: for any planning problem $P = \langle F, I, O, G \rangle$, we build the finite model
\begin{align*}
    &M_P = \langle V, \relation_F, \relation_I, \relation_O, \relation_G, \relation_{\presymb}, \relation_{\effpsymb}, \relation_{\effmsymb} \rangle \\
    &V = F \,\changes{\cup}\, O \\
    &\text{For } X \in \{F, I, O, G\}\text{, }\relation_X = X \\
    &\text{For each } \mathcal{S} \in \mathcal{C}\text{, }\relation_{\mathcal{S}} = \left\{ (o, f) \in V^2 \mid o \in O \textit{ and } f \in \mathcal{S}(o) \right\}
\end{align*}
where we recall that $\mathcal{C} = \{\presymb{}, \effpsymb{}, \effmsymb{}\}$. \changes{Note that $F$ and $O$ form a partition of $V$.} We will show that any two STRIPS planning problems $P$ and $P'$ are isomorphic iff $M_P$ and $M_{P'}$ are isomorphic.   

Let us denote $M_P = \langle V, \relation_F, \ldots, \relation_{\effmsymb{}}\rangle$ and $M_{P'} = \langle V', \relation'_{F}, \ldots, \relation'_{\effmsymb{}}\rangle$

$(\Rightarrow)$ Suppose that there exists an isomorphism $(\isomsymb{}, \isompsymb{})$ between $P$ and $P'$. Define the mapping $g : V \rightarrow V'$ such that, for $x\in V$,
\begin{equation}
    g(x) = 
    \begin{cases}
        \isom{x} & \text{ if } x \in F \\
        \isomp{x} & \text{ if } x \in O
    \end{cases}
\end{equation}
$g$ is immediately a bijection, by hypothesis on $(\isomsymb{}, \isompsymb{})$. In addition, for $X \in \{F, I, O, G\}$, $\relation_X(v)$ iff $\relation'_X(g(v))$, by hypothesis on  $(\isomsymb{}, \isompsymb{})$.

Let $o \in O$, $p \in F$. We have that, for any $\mathcal{S} \in \mathcal{C} = \{\presymb{}, \effpsymb{}, \effmsymb{}\}$,
\begin{align}
    \relation_{\mathcal{S}}(o, p) &\textit{ iff } o\in O \textit{ and } p \in \mathcal{S}(o) \label{eql:r_pre_proof_0}\\
    &\textit{ iff } \isomp{o} \in O' \textit{ and } \isom{p} \in \mathcal{S}(\isomp{o}) \label{eql:r_pre_proof_1}\\
    &\textit{ iff } \relation'_{\mathcal{S}}(\isomp{o}, \isom{p}) \nonumber \\
    &\textit{ iff } \relation'_{\mathcal{S}}(g(o), g(p)) \nonumber
\end{align}
The passage from~(\ref{eql:r_pre_proof_0}) to~(\ref{eql:r_pre_proof_1}) is by definition of the isomorphism. The other equivalences follow mostly by definition. This proves that $\finitemodel_P$ and $\finitemodel_{P'}$ are isomorphic.


$(\Leftarrow)$ Suppose that $\finitemodel_P$ and $\finitemodel_{P'}$ are isomorphic, and that $g: V \rightarrow V'$ is an isomorphism between the two models. Let us define $\isomsymb{} = g_{\vert F}$ (resp. $\isompsymb{} = g_{\vert O}$) the restriction of $g$ on the subdomain $F$ (resp. $O$). Clearly, we have that $\isomsymb{}: F \rightarrow F'$, as otherwise there would exist an element $v \in V$ such that $\relation_F(v)$ but \emph{without} $\relation'_F(g(v))$, violating the isomorphism hypothesis on $g$. Similarly, we have that $\isompsymb{}: O \rightarrow O'$.

With similar arguments as above, we have, for any $o \in O$,
\begin{align}
            &o = \action{\pre{o}}{\eff{o}}  \nonumber \\ 
\textit{ iff }
            &\forall p \in F, \forall \mathcal{S} \in \mathcal{C}, p \in \mathcal{S}(o) \Leftrightarrow \relation_{S}(o, p)   \label{eql:gi_isom_conv_1} \\  
\textit{ iff }
            &\forall p \in F, \forall \mathcal{S} \in \mathcal{C}, p \in \mathcal{S}(o) \Leftrightarrow \relation'_{\mathcal{S}}(g(o), g(p)) \label{eql:gi_isom_conv_2} \\
\textit{ iff }
            &\forall p \in F, \forall \mathcal{S} \in \mathcal{C}, p \in \mathcal{S}(o) \Leftrightarrow g(p) \in \mathcal{S}(g(o)) 
            \label{eql:gi_isom_conv_3} \\
\textit{ iff }
            &g(o) = \action{g(\pre{o})}{g(\eff{o})} \label{eql:gi_isom_conv_5}\\
\textit{ iff }
            &\isomp{o} = \action{\isom{\pre{o}}}{\isom{\eff{o}}} \nonumber
\end{align}

The relations between the first line and~(\ref{eql:gi_isom_conv_1}), as well as between~(\ref{eql:gi_isom_conv_2}) and~(\ref{eql:gi_isom_conv_3}) hold by construction of $\finitemodel_P$ and $\finitemodel_{P'}$. The equivalence between (\ref{eql:gi_isom_conv_1}) and~(\ref{eql:gi_isom_conv_2}) comes from the hypothesis that $g$ is an isomorphism. 

This proves that $(\isomsymb{}, \isompsymb{})$ is a homomorphism, and thus an isomorphism by choice of its domain and codomain.
\end{proof}

The results still hold if we do not enforce conditions~(\ref{eq:def_strips_iso_i}) and (\ref{eq:def_strips_iso_g}), that the initial and goal states of $P$ and $P'$ are in bijection. Indeed, the hardness proof relies on a reduction from the Graph Isomorphism problem, with graphs that do not have initial or goal nodes, which renders trivial the initial and goal states of the construction. Conversely, the proof that \stripsiso{} belongs to class~\GI{} can include, or not, the relations $\mathcal{R}_I$ and $\mathcal{R}_G$ that take into account the information concerning initial and goal states, and still remain correct for the version of $\stripsiso{}$ without conditions on the initial and goal states. This means that 
the hardness of~\stripsiso{} comes from matching the inner structure of the state-space, and that additional properties on some states (like being initial states or goal states) do not impact significantly the complexity of the problem. This is consistent with our intuition of class~\GI{}: it is known that finding a color-preserving isomorphism between colored graphs (i.e., an isomorphism that conserves a given property on nodes) is also a problem that is complete for this class~\cite{zemlyachenko1985graph}. 

\changes{An isomorphism between a planning instance and itself is called an \emph{automorphism}, or \emph{symmetry}. Intuitively, a symmetry is a set of pairs of elements of the model (for instance operators or fluents) such that each element is interchangeable with the element it is paired with, in a way that does not modify the behaviour of the model, but only its description. It is well known that, when it comes to solving a CSP\cite{DBLP:journals/constraints/CohenJJPS06}, a SAT instance~\cite{sakallah2021symmetry}, or a planning instance~\cite{fox1999detection}, finding the symmetries of the model as a preprocessing step can prove fruitful, since it can dramatically reduce the size of the search space (when symmetry-breaking techniques are implemented).
In planning, in particular, finding the symmetries of a STRIPS planning instance boils down to finding operators or fluents that have similar roles, and preventing the planner from exploring branches of the search space that are too similar. 

A common example (due to Pochter et al.~\cite{pochter2011exploiting}), that illustrates the need for breaking symmetries in search, occurs in the Gripper domain (as found in the International Planning Competition). In a Gripper instance, a two-armed robots has to move all balls from a room to the room next door, grabbing and moving them individually with its two grippers. Since all balls are identical, they are all interchangeable (and are said to be symmetric). This is usually a problem for planners: since the $n$ balls can be brought to the other room in any order, there are $n!$ different shortest paths to the goal state, which are all explored simultaneously, thus severely harming the performance of the planner.

Symmetries of a planning instance are closely related to the symmetries of the Problem Description Graph (PDG)~\cite{pochter2011exploiting}, which is a colored graph that describes the instance (expressed in Finite-Domain Representation (FDR)). Any symmetry of the PDG is equivalent to a symmetry of the associated planning instance~\cite{shleyfman2021}: as a consequence, to compute the symmetries of the planning task, one can resort to a solver that computes a set of generators for the group of symmetries of a colored graph. In this context, a set of generators for the group of symmetries is a set $\Gamma$ of symmetries that can be applied one after the other, such that any other symmetry can be expressed in terms of a composition of symmetries of $\Gamma$. Computing a set of generators of the symmetries of a planning instance $P$ is known to be $\GI$-complete~\cite{shleyfman2021}, even with drastic syntactical restrictions on the instance (such as limiting the number of preconditions and effects of each operator).

Even though isomorphisms, as defined here, are closely related to symmetries, they are much more general. This is why there is no immediate group structure on the set of isomorphisms between two planning instances $P$ and $P'$, when $P \not= P'$: as a consequence, it does not make sense to seek a set of generators for the isomorphisms between $P$ and $P'$. However, the PDGs of the planning instances at hand can be still used to find an isomorphism between $P$ and $P'$, when passed to a graph isomorphism solver. In the following sections, we will introduce other more general notions of homomorphisms, that are less restrictive while still carrying over some properties from one instance to the other.
}


\section{The STRIPS Subinstance isomorphism problem}
\label{sec:strips_subiso_complexity}

Let us now introduce problems $\stripssubisogeneral$ and $\stripssubiso$, which are concerned with finding (different kinds of) isomorphisms between a planning instance $P$ and some subinstance of another STRIPS instance $P'$. In this section, we settle the complexity of both problems, and show that they are \NP{}-complete. We use this result in order to propose, in the next section, an algorithm for $\stripssubiso$ and $\stripssubisogeneral$. This algorithm is based on a reduction to SAT, assisted by a preprocessing phase that relies on constraint propagation.

\changes{In essence, a subinstance of a planning instance $P = \langle F, O, I, G \rangle$ is a planning instance $P_s$ which shares the same initial state and goal as $P$, but whose fluents and operators $F_s$ and $O_s$ are subsets of the fluents and operators $F$ and $O$ of $P$. Note that any operator that occurs in $O_s$ also appears \emph{as is} in $O$, and is solely defined on the fluents $F_s$. 
}


We begin by introducing the notion of \emph{homogeneous subinstance isomorphism}. It is concerned with finding an isomorphism between $P$ and a subinstance of $P'$, which does not necessarily conserve the initial state and goal. That is to say, it maps operators of $P$ to operators of $P'$, and fluents alike, in a way that ensures that all plans (and not just solution-plans) can be transferred from $P'$ to $P$. 
In the case of such a homomorphism between two versions of the Rubik's cube, it would consist in mapping each operator of $P$ to an equivalent operator in $P'$ even though $P'$ may have extra operators (such as 180-degree rotations \changes{for instance, which are equivalent to a sequence of two operators that rotate the same face}).

\begin{definition}[Homogeneous subinstance isomorphism]
    \label{def:stripssubiso_gen} Consider two STRIPS instances 
    $P = \langle F, I, O, G \rangle$ and $P' = \langle F', I', O', G' \rangle$. A \emph{homogeneous subinstance isomorphism} from $P$ to $P'$ is a pair $(\isomsymb{}, \isompsymb{})$ consisting of the \emph{injective} mapping $\isomsymb{}: F  \rightarrow F'$ and the mapping $\isompsymb{}: O \rightarrow O'$ that respect condition (\ref{eq:def_strips_iso_m}) of Definition~\ref{def:stripsiso}.
\end{definition}
\begin{problem}{STRIPS Homogeneous Subinstance Isomorphism \stripssubisogeneral{}}
    \label{prob:stripssubisogeneral}
    \begin{problembody}
        \emph{\textbf{Input}}: & Two STRIPS instances $P$ and $P'$\\
        \emph{\textbf{Output}}: & A \emph{homogeneous subinstance isomorphism} $(\isomsymb{}, \isompsymb)$ between $P$ and $P'$, if one exists 
    \end{problembody}
\end{problem}
A homogeneous subinstance isomorphism between $P$ and $P'$ is useful, for instance, in the case 
when the smaller problem $P$ has already been solved: the image of a solution plan for $P$
could be added as an operator to $P'$ as a high-level action.
In the case of a Rubik's cube, suppose that $P$ and $P'$ have the same set of operators, but the goal of $P$ is simply to have one face all red whereas $P'$ has the traditional goal that all faces are a uniform colour. Then there is a homogeneous subinstance isomorphism from $P$ to $P'$ and a solution-plan for $P$ can be converted into an \changes{operator} that could be added to $P'$.
\changes{Such additional operators (often called \emph{macro-operators}~\cite{korf1985macro}) have been widely studied, as they have the potential to significantly speed up search~\cite{korf1985macro}. The problem of learning them has been previously addressed~\cite{korf1983learning,botea2005macro,minton1985selectively}, and indeed, a common approach is to reuse plans from a problem that has already been solved~\cite{fikes1972learning}. Our work provides a formal basis to detect instances whose solution-plans can be translated into a macro-operator for another instance. 
}
\changes{Indeed, let $P''$ be a modified version of $P'$, so that a plan for $P$ has been added to $P''$ as a new operator.} Then there is a subinstance isomorphism between $P'$ and $P''$ but in this case, the two problems have the same initial states and goals.
This leads \changes{in turn} to the following more precise notion of subinstance isomorphism which takes into account the information provided by the initial state and goal. This allows us to carry over \emph{solution-plans} from one problem to the other:
a solution for the smaller problem directly provides a solution for the larger problem. 


\begin{definition}[Subinstance isomorphism]
    \label{def:stripssubiso}
    A \emph{subinstance isomorphism} from $P$ to $P'$ is a homogeneous subinstance isomorphism that respects conditions (\ref{eq:def_strips_iso_i}) and (\ref{eq:def_strips_iso_g}) of Definition~\ref{def:stripsiso}.
\end{definition}

\begin{proposition}
    \label{prop:stripsubiso_sol}
    If there is a subinstance isomorphism from $P$ to $P'$ and $P$ has a solution-plan,
    then so does $P'$ (the image of the plan under the subinstance isomorphism).
\end{proposition}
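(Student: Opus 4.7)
The plan is to induct on the length of the plan, lifting $P$'s trajectory through $\isomsymb{}$ to obtain a valid trajectory in $P'$. Let $o_1, \ldots, o_k$ be a solution-plan for $P$ with trajectory $s_0 = I, s_1, \ldots, s_k$, and tentatively set $s'_0 = I'$ and $s'_i = \apply{s'_{i-1}}{\isomp{o_i}}$ for $i \geq 1$ (provided each application is defined). The key invariant I would prove by induction on $i$ is
\[
    s'_i = \isom{s_i}.
\]
Once established, this directly implies the proposition: applicability at step $i$ will follow from the invariant at step $i-1$, and the goal inclusion $G' \subseteq s'_k$ will follow from the invariant at $i = k$ together with condition~(\ref{eq:def_strips_iso_g}).

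For the base case, condition~(\ref{eq:def_strips_iso_i}) gives $s'_0 = I' = \isom{I} = \isom{s_0}$. For the inductive step, I assume $s'_{i-1} = \isom{s_{i-1}}$. The morphism condition~(\ref{eq:def_strips_iso_m}), combined with the convention on how $\isomsymb{}$ acts on signed literals, yields $\pre{\isomp{o_i}} = \isom{\pre{o_i}}$, $\effp{\isomp{o_i}} = \isom{\effp{o_i}}$, and $\effm{\isomp{o_i}} = \isom{\effm{o_i}}$. Applicability of $\isomp{o_i}$ in $s'_{i-1}$ then follows by taking images of $\pre{o_i} \subseteq s_{i-1}$. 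Expanding $s'_i$ gives
\[
    s'_i = \bigl(\isom{s_{i-1}} \setminus \isom{\effm{o_i}}\bigr) \cup \isom{\effp{o_i}},
\]
and the target is to match this with $\isom{s_i} = \isom{(s_{i-1} \setminus \effm{o_i}) \cup \effp{o_i}}$.

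The one delicate point, which I expect to be the main obstacle, is the commutation $\isom{A} \setminus \isom{B} = \isom{A \setminus B}$ for subsets $A, B$ of $F$. This equality fails for arbitrary functions, but it holds for \emph{injective} ones, and injectivity of $\isomsymb{}$ is precisely what Definition~\ref{def:stripssubiso_gen} requires: without it, a fluent in $s_{i-1} \setminus \effm{o_i}$ could have its image collide with some image in $\isom{\effm{o_i}}$ and be wrongly removed. Once injectivity is invoked, the computation above collapses to $s'_i = \isom{s_i}$, closing the induction. The conclusion then follows by combining the invariant at step $k$ with $G \subseteq s_k$ and condition~(\ref{eq:def_strips_iso_g}) to obtain $G' = \isom{G} \subseteq \isom{s_k} = s'_k$, so $\isomp{o_1}, \ldots, \isomp{o_k}$ is a solution-plan for $P'$.
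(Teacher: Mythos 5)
Your proof is correct. The paper itself states Proposition~\ref{prop:stripsubiso_sol} without proof, treating it as an immediate consequence of the definitions (it remarks after Definition~\ref{def:stripsiso} that plans and solution-plans ``carry over''), so your induction on the plan length with invariant $s'_i = \isom{s_i}$ is precisely the formalization the paper leaves implicit. You are also right to flag the one non-trivial step: the identity $\isom{s_{i-1} \setminus \effm{o_i}} = \isom{s_{i-1}} \setminus \isom{\effm{o_i}}$ fails for general maps, and injectivity of $\isomsymb$ (the one property of a bijection that Definition~\ref{def:stripssubiso_gen} retains) is exactly what makes the delete-effects commute with the image; without it a surviving fluent's image could collide with the image of a deleted fluent and be erroneously removed. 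This is a genuine subtlety that the paper's ``immediate'' framing glosses over, and your write-up makes the dependence on injectivity explicit.
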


One application of Proposition~\ref{prop:stripsubiso_sol} is in plan compilation. Suppose that there is a subinstance homomorphism from $P$ to $P'$ and that all solutions to $P'$ \changes{(up to a certain length $L$)} are stored in a compiled form that allows conditioning in polynomial time~\cite{DBLP:journals/jair/DarwicheM02}. Then we can find the compiled form of all solutions \changes{of length at most $L$} to (the isomorphic image of) $P$ in polynomial time. 
Note, however, that if the aim is to detect unsolvable instances, the notion of embedding introduced in Section~\ref{sec:strips_embedding_complexity} provides a stronger test than the contrapositive of Proposition~\ref{prop:stripsubiso_sol}.

\begin{problem}{STRIPS Subsintance Isomorphism \stripssubiso{}}
    \label{prob:stripssubiso}
    \begin{problembody}
        \emph{\textbf{Input}}: & Two STRIPS instances $P$ and $P'$\\
        \emph{\textbf{Output}}: & A subinstance isomorphism $(\isomsymb{}, \isompsymb)$ between $P$ and $P'$, if one exists 
    \end{problembody}
\end{problem}

The main difference between \stripsiso{} and \stripssubiso{} is that, in \stripssubiso{}, we relax the condition on the bijectivity of $\isomsymb{}$ and $\isompsymb{}$, to account for the possible difference in size between $P$ and $P'$. Injectivity of $\isomsymb{}$ is still required in order to prevent fluents from being merged together by the mapping 
(but injectivity of $\isompsymb{}$ and surjectivity of both mappings are relaxed). All other conditions remain the same. 
\changes{Note that, even when the injectivity of the mapping $\isompsymb{}$ over the operators is not enforced, $\isompsymb{}$ is still often injective. This is due to the injectivity of $\isomsymb{}$: two operators of $O$ that have the same image by $\isompsymb{}$ must also have the same preconditions, the same positive effects, and the same negative effects. Thus, except in the case where two operators are identical (except maybe for their names), the injectivity of $\isompsymb{}$ is a consequence of the injectivity of $\isomsymb{}$, even when not explicitely enforced.}

The main result of this section is presented below. The proof is based on a reduction from the Subgraph Matching problem~\cite{DBLP:journals/jacm/Ullmann76}, which is known to be \NP{}-complete~\cite{DBLP:conf/stoc/Cook71}. 
As such, we introduce that problem before stating our result. Essentially, it consists in finding a mapping $g$, that defines an isomorphism between \changes{an undirected graph} $\graph{}$ and the subgraph $(g(V), E' \, \cap \, g(V) \times g(V))$ of $\graph{'}$ (i.e. the induced subgraph
of $\graph{'}$ on $g(V)$). 


\begin{problem}{Subgraph Matching problem}
    \label{prob:subgraph_matching}
    \begin{problembody}
        \emph{\textbf{Input}}: & Two \changes{undirected} graphs $\graph{}(V, E)$ and $\graph{'}(V', E')$\\
        \emph{\textbf{Output}}: & An injective mapping $g: V \rightarrow V'$ such that, for any $v_1, v_2 \in V$, $\{v_1, v_2\} \in E$ iff $\{g(v_1), g(v_2)\} \in E'$.
    \end{problembody}
\end{problem}





\begin{proposition}
    \label{prop:stripsubiso_npc}
    The decision problem corresponding to \stripssubiso{} is \NP{}-complete
\end{proposition}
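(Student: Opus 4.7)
The plan is to prove NP-completeness in two steps: NP-membership, followed by NP-hardness via reduction from the Subgraph Matching problem.

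For NP-membership, a pair $(\isomsymb{}, \isompsymb{})$ of mappings is a polynomial-size certificate whose validity can be checked in polynomial time: one verifies that $\isomsymb{}$ is injective, that $\isom{I} = I'$ and $\isom{G} = G'$, and that for each $o \in O$ the image $\isomp{o}$ belongs to $O'$ and equals $\langle \isom{\pre{o}}, \isom{\eff{o}} \rangle$ (that is, conditions (\ref{eq:def_strips_iso_m})--(\ref{eq:def_strips_iso_g}) are satisfied). This places the decision problem in \NP{}.

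For NP-hardness, I would reduce from Subgraph Matching. Given an instance $(\graph, \graph')$ with $\graph = (V, E)$ and $\graph' = (V', E')$, the plan is to construct in polynomial time a pair of STRIPS instances $(P, P')$, using a variant of Construction~\ref{cons:symmetric_graph}, in which vertices become fluents and each edge becomes an operator of the form $\moveaction{v_s}{v_t}$; the initial state and goal are taken to be empty so that conditions (\ref{eq:def_strips_iso_i}) and (\ref{eq:def_strips_iso_g}) hold trivially. The correspondence to establish is: there exists a subinstance isomorphism from $P$ to $P'$ iff there exists an induced subgraph isomorphism $g: V \to V'$. The forward direction is routine: given $g$, I would set $\isomsymb{} = g$ and $\isomp{\moveaction{v_s}{v_t}} = \moveaction{g(v_s)}{g(v_t)}$, and verify condition (\ref{eq:def_strips_iso_m}) using the edge-preservation property of $g$ together with the fact that the preconditions and effects of $\moveaction{\cdot}{\cdot}$ depend only on the endpoints.

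The main obstacle will be the backward direction, together with a technicality inherited from Construction~\ref{cons:symmetric_graph}. As written, each $\moveaction{v_s}{v_t}$ has negative effects $\neg(V \setminus \{v_t\})$ that reference every fluent; condition (\ref{eq:def_strips_iso_m}) would then force $\isom{V} = V'$, collapsing \stripssubiso{} into \stripsiso{} and defeating the reduction. I would address this by adapting the construction so each operator has a \emph{localized} effect such as $\langle \{v_s\}, \{v_t, \neg v_s\} \rangle$, ensuring that the image of an operator of $P$ naturally coincides with an operator of $P'$. A subinstance isomorphism then yields directly the implication $\{v_1, v_2\} \in E \Rightarrow \{\isom{v_1}, \isom{v_2}\} \in E'$, i.e.\ a subgraph isomorphism in the non-induced sense, and the injectivity of $\isomsymb{}$ transfers straight to the vertex map. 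To recover the converse (and hence the iff condition of Subgraph Matching), I would either enrich $P$ and $P'$ with auxiliary distinguishing fluents or operators that penalize any extra edge appearing in the image, or equivalently reduce from the non-induced variant of subgraph isomorphism, which is likewise NP-complete and matches the one-sided preservation property of subinstance isomorphism more naturally.
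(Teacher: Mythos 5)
Your overall architecture is exactly the paper's: membership in \NP{} via the certificate $(\isomsymb{},\isompsymb{})$, and hardness by reducing Subgraph Matching through the graph-to-STRIPS encoding of Section~\ref{sec:graph_constructions}. Where you diverge is instructive, because the ``technicality'' you flag is real and the paper's own proof does not address it: it applies Construction~\ref{cons:symmetric_graph} verbatim, so every operator $\moveaction{v_s}{v_t}$ carries the negative effects $\neg(V\setminus\{v_t\})$. Since condition~(\ref{eq:def_strips_iso_m}) demands \emph{equality} of effect sets and $\isomsymb{}$ is injective, any subinstance isomorphism must satisfy $\isom{V\setminus\{v_t\}}=V'\setminus\{\isom{v_t}\}$ and hence $\isom{V}=V'$; so whenever $|V|<|V'|$ and $E\neq\emptyset$, a subgraph matching may exist while no subinstance isomorphism does, and the paper's $(\Rightarrow)$ direction fails. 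Your localized operator $\langle\{v_s\},\{v_t,\neg v_s\}\rangle$ repairs exactly this. You are also right about the second defect: a subinstance isomorphism only yields $\{v_1,v_2\}\in E\Rightarrow\{\isom{v_1},\isom{v_2}\}\in E'$, whereas Problem~\ref{prob:subgraph_matching} asks for an ``iff''; the paper's chain of equivalences in its $(\Leftarrow)$ direction, read right-to-left, presupposes $\moveaction{v_1}{v_2}\in O$ and so establishes only the one inclusion. Of your two proposed remedies, reducing from the non-induced subgraph containment problem (NP-complete already with Clique or Hamiltonian Cycle as the pattern graph) is the cleaner choice and, combined with the localized operators, closes both directions; the gadget route to recover the induced condition is unnecessary extra work. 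In short: same strategy as the paper, but your variant is the one that is actually watertight, and the issues you anticipated are genuine flaws in the published argument rather than obstacles of your own making.
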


\begin{proof}
In order to prove that $\stripssubiso$ belongs to \NP{}, it suffices to resort to the certificate-based definition of the class \NP{}, and observe that the mappings $\isomsymb$ and $\isompsymb$ constitute a polynomial size certificate that can be checked in polynomial time.

The proof that $\stripssubiso{}$ is \NP{}-hard consists in a reduction from the Subgraph Matching problem, which is straightforward with the construction that we proposed earlier.

Let $(\graph{}, \graph{'})$ be \changes{a pair of undirected graphs}, an instance of the Subgraph Matching problem, and let us follow Construction~\ref{cons:symmetric_graph} to build planning problems $P_{\graph{}}$ and $P_{\graph{'}}$. We show that there exists a subgraph matching $g$ between $\graph{}$ and $\graph{'}$ iff there exists a subinstance isomorphism between $P_{\graph{}}$ and $P_{\graph{'}}$.

$(\Rightarrow)$ Suppose that there exists a subgraph matching $g: V \rightarrow V'$ between $\graph{}$ and $\graph{'}$. Then by construction, as $F = V$ and $F' = V'$, $g$ is also an injective mapping between $F$ and $F'$. In addition, let us define the mapping $\isompsymb{}: O \rightarrow O'$ such that $\isompsymb{} : \moveaction{v_1}{v_2} \mapsto \moveaction{g(v_1)}{g(v_2)}$. $\isompsymb{}$ is well-defined, as $\{v_1, v_2\} \in E$ iff $\{g(v_1), g(v_2)\} \in E'$, so $\moveaction{v_1}{v_2} \in O$ iff $\moveaction{g(v_1)}{g(v_2)} \in O'$. In addition, as $g$ is injective, so is $\isompsymb{}$. As a consequence, $(g, \isompsymb{})$ is a subinstance isomorphism between $P_{\graph{}}$ and $P_{\graph{'}}$.

$(\Leftarrow)$ Suppose that there exists a subinstance isomorphism $(\isomsymb{}, \isompsymb{})$ between $P_{\graph{}}$ and $P_{\graph{'}}$. As above, $\isomsymb{}: V \rightarrow V'$ is an injective mapping. In addition, we have that%
\begin{align*}
    &(v_1, v_2) \in E \\
    \text{ iff } &\moveaction{v_1}{v_2} \in O \ \ \ \ \ \changes{\text{(by definition of Construction~\ref{cons:symmetric_graph})}} \\
    \text{ iff } &\isomp{\moveaction{v_1}{v_2}} \in O' \ \ \ \ \ \changes{\text{(since $\isompsymb$ is a mapping from $O$ to $O'$)}} \\
    \text{ iff } &\moveaction{\isom{v_1}}{\isom{v_2}} \in O' \ \ \ \ \ \changes{\text{(by definition of a subinstance isomorphism)}} \\
    \text{ iff } &(\isom{v_1}, \isom{v_2}) \in E' \ \ \ \ \ \changes{\text{(by definition of Construction~\ref{cons:symmetric_graph})}}
\end{align*}
As a consequence, $\isomsymb{}$ is a subgraph matching between $\graph{}$ and $\graph{'}$.
\end{proof}

In addition, it is clear that $\stripssubisogeneral{}$ is in \NP{}.
As the above proof of NP-hardness of $\stripssubiso{}$ is independent of the initial and goal states,
it also applies to the problem $\stripssubisogeneral{}$.

\begin{corollary}
    The decision problem corresponding to \stripssubisogeneral{} is \NP{}-complete
\end{corollary}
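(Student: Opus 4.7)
The plan is to follow directly in the footsteps of Proposition~\ref{prop:stripsubiso_npc}, observing that both the membership and hardness arguments transfer essentially for free.

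For membership in \NP{}, I would invoke the certificate-based definition: a pair $(\isomsymb{}, \isompsymb{})$ of mappings from $F \to F'$ and $O \to O'$ has size polynomial in $|P| + |P'|$, and verifying that $\isomsymb{}$ is injective and that condition~(\ref{eq:def_strips_iso_m}) of Definition~\ref{def:stripsiso} holds for every $o \in O$ can be done in polynomial time by iterating over the operators and comparing the image sets of their preconditions and effects. This is strictly easier to check than for \stripssubiso{}, since conditions~(\ref{eq:def_strips_iso_i}) and (\ref{eq:def_strips_iso_g}) are dropped.

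For \NP{}-hardness, I would reuse verbatim the reduction from the Subgraph Matching problem used in the proof of Proposition~\ref{prop:stripsubiso_npc}, producing $P_{\graph{}}$ and $P_{\graph{'}}$ via Construction~\ref{cons:symmetric_graph}. The key remark is that in this construction, $I = G = \varnothing$ for both instances, so conditions~(\ref{eq:def_strips_iso_i}) and~(\ref{eq:def_strips_iso_g}) are trivially satisfied by any pair $(\isomsymb{}, \isompsymb{})$. Consequently, a pair $(\isomsymb{}, \isompsymb{})$ is a subinstance isomorphism from $P_{\graph{}}$ to $P_{\graph{'}}$ if and only if it is a homogeneous subinstance isomorphism from $P_{\graph{}}$ to $P_{\graph{'}}$. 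The chain of equivalences in the proof of Proposition~\ref{prop:stripsubiso_npc} that links subgraph matchings of $(\graph{}, \graph{'})$ to subinstance isomorphisms of $(P_{\graph{}}, P_{\graph{'}})$ therefore applies unchanged, yielding the desired polynomial-time many-one reduction from Subgraph Matching to \stripssubisogeneral{}.

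There is no real obstacle here: the only thing to double-check is that Construction~\ref{cons:symmetric_graph} indeed produces instances with empty $I$ and $G$, so that the two notions of isomorphism coincide on the reduction's image, and that the injectivity requirement on $\isomsymb{}$ (which is retained in Definition~\ref{def:stripssubiso_gen}) is exactly what is needed to recover the injectivity of a subgraph matching. Both facts are immediate from the definitions, so the corollary follows by a one-line appeal to the existing proof.
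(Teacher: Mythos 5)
Your proposal is correct and matches the paper's own argument: the paper likewise notes that membership in \NP{} is immediate via the certificate $(\isomsymb{}, \isompsymb{})$, and that the Subgraph Matching reduction for \stripssubiso{} carries over unchanged because Construction~\ref{cons:symmetric_graph} produces instances with $I = G = \varnothing$, making the initial-state and goal conditions vacuous. Your write-up simply makes explicit the details the paper leaves as "clear".
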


\section{An algorithm for \stripssubiso{}}
\label{sec:ssi_algorithmic}

In this section, we present an algorithm for \changes{solving the \stripssubiso{}} problem, for which the pseudo-code is presented in Algorithm~\ref{alg:subiso_finder}. This algorithm is based on a compilation of the problem into a propositional formula, which is then passed to a SAT solver. It is completed by a preprocessing step, based on constraint propagation, that allows us to prune impossible mappings early on.

Given two STRIPS instances $P$ and $P'$, the algorithm outputs, when possible, a subinstance isomorphism $(\isomsymb{}, \isompsymb{})$. Algorithm~\ref{alg:subiso_finder} consists in two main phases.
The first phase, that spans lines~\ref{algline:cp_start} to~\ref{algline:cp_end}, consists in pruning as many associations between fluents (resp. operators) of problem $P$ and fluents (resp. operators) of problem $P'$ that are impossible, because of some syntactical inconsistencies (described below) that are then propagated. The second phase, that starts at line~\ref{algline:encode_SAT}, consists in a search phase, by means of an encoding of the problem into a CNF formula, that is then passed to a SAT solver.

\begin{algorithm}[tb]
    \caption{to find a subinstance isomorphism}
    \label{alg:subiso_finder}
    \textbf{Input}: Two STRIPS instances $P$ and $P'$\\
    \textbf{Output}: A subinstance isomorphism between $P$ and $P'$ if one exists
    \begin{algorithmic}[1] 
    \State \changes{$\domainsymb{} := $ \text{Initialize\_domains}$(P, P')$} \label{algline:subiso_finder_init}
    \Statex /* Prune impossible associations */
    \State $Q := F \cup O$ \label{algline:cp_start}
    \While{$Q \not= \varnothing$}
    \State $v := Q.\text{Pop()}$
    \State $\changes{\text{Revise\_domain}(\domainsymb{}, v)}$
    \If {\changes{\text{$\domainsymb{}$ has been updated}}}
    \If {$\domain{v} = \varnothing$}  \textbf{return} UNSAT
    \Else \text{ } $Q.\text{Add}( \{ v' \mid v' \text{ related to } v \} $) \EndIf \label{algline:cp_end}
    \EndIf
    \EndWhile
    \Statex /* Search phase through a SAT solver */
    \State $\varphi := \text{Encode\_to\_SAT}(P, P', \mathcal{D})$ \label{algline:encode_SAT}
    \State \textbf{return} $\text{Interpret}(\text{Solver.Find\_model}(\varphi))$ 
    \end{algorithmic}
\end{algorithm}

\subsection{Pruning invalid associations}
\label{sec:ssi_pruning}

By \emph{association} between fluents, we mean a pair $(f, f') \in F \times F'$ such that $f'$ is a candidate for the value of $\isom{f}$. Similarly, we call an \emph{association} between operators a pair $(o, o') \in O \times O'$ such that $o'$ is a candidate for the value of $\isomp{o}$. Detecting early on associations that can not be part of a valid subinstance isomorphism reduces the size of the search space.

In order to prune as many inconsistent associations as possible, we use a technique similar to constraint propagation, as commonly found in the constraint programming literature.
The general idea is to maintain, for each fluent $f \in F$ of $P$, a \emph{domain} $\domain{f} \subseteq F'$ of fluents of $P'$, that consists of the plausible candidates for the value of $\isom{f}$. Similarly, each operator $o \in O$ is assigned a domain $\domain{o} \subseteq O'$, containing the plausible candidates for $\isomp{o}$. In the following, we will call fluents and operators \emph{variables}. The aim of the procedure presented below is to trim the domains of the different variables, thus alleviating the load left to the SAT solver. 

The first step is to initialize the domains. For each fluent $f \in F$, we set its initial domain $\domain{f}$ to either \changes{$I' \setminus G'$}, \changes{$G' \setminus I'$}, $I' \cap G'$ or $F' \setminus (I' \cup G')$, depending if $f$ is in $I$ \changes{only}, $G$ \changes{only}, in both or in neither, respectively. Note that if the algorithm is given an $\stripssubisogeneral{}$ instance, the distinction between domains made above does not hold anymore. In this case, for all fluents $f \in F$, we set $\domain{f} = F'$. 

The initial assignment of the domains of operators $o \in O$, however, is based on \emph{operator profiles}.
For each operator $o \in O \cup O'$, we define the vector $\opprofile{o} \in \mathbb{N}^4$, called the \emph{profile} of $o$. This vector numerically abstracts some characteristics of the operator, so that an operator $o \in O$ cannot be associated to operator $o' \in O'$ if $\opprofile{o} \not= \opprofile{o'}$.
In practice, $\opprofile{o}$ consists of the number of fluents in the precondition of $o$, the number of positive and negative fluents in the effect of $o$, and its number of \emph{strict-delete} fluents. A fluent $f$ is said to be  \emph{strict-delete} if $f \in \pre{o} \wedge f \in \effm{o}$. Then, we initialize the domain of each $o \in O$ so that 
\[
    \domain{o} = \left\{ o' \in O' \mid \opprofile{o'} = \opprofile{o} \right\}
\]


The second step is to propagate the additional constraints posed by these newly-found restrictions of the domains. The technique we propose is based on the concept of arc consistency, which is ubiquitous in the field of constraint programming. The idea consists in eliminating, from the domains of fluents (resp. operators), the candidate fluents (resp. operators) that have no support in the domain of some operator (resp. fluent). 

More specifically, let us consider a fluent $f \in F$. When an operator $o \in O$ is such that $f$ appears, negated or not, in its precondition or effect, then we say that $o$ \emph{depends} on $f$. Let us denote $d(f)$ the set of operators that depend on $f$. When $f' \in F'$, we define $d(f')$ in a similar fashion. Now suppose that $\isom{f} = f'$. As a consequence of equation (\ref{eq:def_strips_iso_m}) of Definition~\ref{def:stripsiso}, each operator of $d(f)$ must have its image by $\isompsymb{}$ in $d(f')$. Otherwise, $f$ would appear in $\pre{o}$ or $\eff{o}$, but $\isom{f}$ would not appear in $\isom{\pre{o}}$ nor $\isom{\eff{o}}$. Thus, if for some operator $o \in d(f)$ no candidate operator for its image is in $d(f')$ (\emph{i.e.}, $\domain{o} \cap d(f') = \varnothing$), then it means that $f'$ can not be chosen as the image of $f$.

In the following, we refine the argument of last paragraph by identifying $\pre{o}$ with $\pre{o'}$ and $\eff{o}$ with $\eff{o'}$. 
We thus have the following constraint for $\domain{f}$, where $\mathcal{C} = \{\presymb{}, \effpsymb{}, \effmsymb{} \}$:
%
%
\begin{equation}
    \label{eq:ac_fluents}
    \domain{f} \subseteq \Set{ f'\ | \begin{array}{l}
        \forall o \in O, \forall \absrel{} \in \mathcal{C} \text{ s.t. } f \in \absrel{}(o), \\
        \exists o' \in \domain{o} \text{ s.t. } f' \in \absrel{}(o')
  \end{array}}
\end{equation}
A similar case can be made for operators. Let $o \in O$ be any operator, and consider a candidate operator $o' \in O'$. In order for the morphism property to hold, in the case where $\isomp{o} = o'$, for every fluent $f$ of $\pre{o}$, for instance, there must exist in $\pre{o'}$ a fluent that belongs to $\domain{f}$. More generally and more formally, we have the following:
\begin{equation}
    \label{eq:ac_operators}
    \domain{o} \subseteq \left\{ o' \mid \forall \absrel{} \in \mathcal{C}, \forall f \in \absrel{}(o), \exists f' \in \domain{f} \cap \absrel{}(o') \right\}
\end{equation}
Algorithmically, we enforce these constraints using an adaptation of AC3~\cite{mackworth1977consistency,rossi2006handbook}. The algorithm revolves around the revision of the variables' domains.
Revising a variable $v$ boils down to checking that all elements of its domain still comply with the necessary condition evoked earlier, which is either equation~(\ref{eq:ac_fluents}) if $v$ is a fluent, or equation~(\ref{eq:ac_operators}) if $v$ is an operator.
The main loop, depicted in Algorithm~\ref{alg:subiso_finder}, then consists in revising all fluents and operators iteratively, by maintaining a queue $Q$ of variables to revise (line 1). The algorithm begins by revising once each variable.
If, during the revision of a variable $v$, the domain of $v$ is altered by the procedure, then all variables that are related to $v$ are added to the set of variables to revise later on (lines 5 to 9). We say that $v'$ is related to $v$ if $v$ is a fluent and $v' \in d(v)$, or conversely.

If the domain of a variable is empty, then no isomorphism exists, and the procedure ends prematurely (line 7). Otherwise, the loop ends when there is no variable left to revise. 

This procedure is often not sufficient to conclude, but greatly alleviates the pressure on the search phase, which we present in the following section.

\subsection{Propositional encoding for \stripssubiso{} and \stripssubisogeneral{}}
\label{sec:ssi_encoding}

In this section, we build the propositional formula $\varphi$ evoked earlier, from whose models an isomorphism can be extracted. $\varphi$ is built on the set of variables $\dom{\varphi}$, where
\[    
\dom{\varphi} = 
\Set{ \fassoc{i}{j} | i \in F, j \in F' } \ \cup \ 
    \Set{ \oassoc{r}{s} | r \in O, s \in O' }
\]
The propositional variable $\fassoc{i}{j}$ represents the association of fluent $i \in F$ to fluent $j \in F'$. Likewise, $\oassoc{r}{s}$ represents the association of $r \in O$ to $s \in O'$.

In the rest of this section, we show how to build formula $\varphi$, which encodes the \stripssubiso{} problem input to  Algorithm~\ref{alg:subiso_finder}. $\varphi$ consists of the conjunction of the following formulas, each of which enforces a different property. 

The formula presented in~(\ref{eq:fluents_image}) enforces that each fluent has an image which is unique. Similarly, by swapping $\fassoc{i}{j}$ variables for $\oassoc{i}{j}$ and adapting the domains of $i$ and $j$, we enforce that each operator has an image by $\isompsymb$.

\begin{align}
    \label{eq:fluents_image}
    \bigwedge_{i \in F}
    \left(
        \bigvee_{j \in \domain{i}} \fassoc{i}{j} \hspace{1ex} \wedge 
        \bigwedge_{\substack{j, k \in \domain{i} \\ j \not= k}} (\neg \fassoc{i}{j} \vee \neg \fassoc{i}{k})
    \right)
\end{align}

%
We now need to ensure that $\isomsymb{}$ \changes{is injective, which is done through~(\ref{eq:injectivity})}
\begin{equation}
    \label{eq:injectivity}
    \bigwedge_{i \in F'} \bigwedge_{\substack{j, k \in F \\ j \not= k}} \neg \fassoc{j}{i} \vee \neg \fassoc{k}{i}
\end{equation}
\changes{Note that, in Definition~\ref{def:stripssubiso_gen}, we did not require the mapping on operators $\isompsymb{}$ to be injective. However, even when not specifically enforced, $\isompsymb{}$ is still injective in any (homogeneous) subinstance isomorphism, except potentially when two operators that have the exact same precondition and effects exist in $P$. Since this is never the case in our set of benchmarks, we also enforce the injectivity of $\isompsymb{}$, in order to make the search for a mapping more efficient.}

The morphism property is enforced by formulas~(\ref{eq:morphism_inclusion}) and~(\ref{eq:morphism_reverse_inclusion}), for each $\absrel \in \mathcal{C}$.
More precisely, (\ref{eq:morphism_inclusion}) ensures that, for any $\absrel \in \mathcal{C}$ and for any operator $o \in O$, we have $\isom{\absrel{}(o)} \subseteq \absrel{}(\isomp{o})$. Conversely,~(\ref{eq:morphism_reverse_inclusion}) ensures that $\absrel{}(\isomp{o}) \subseteq \isom{\absrel{}(o)}$.
\begin{equation}
    \label{eq:morphism_inclusion}
    \bigwedge_{\substack{r \in O \\ s \in O'}} \left(
        \oassoc{r}{s} \longrightarrow \bigwedge_{i \in \absrel{}(r)} \bigvee_{j \in \absrel{}(s)} \fassoc{i}{j}
    \right)
\end{equation}
\begin{equation}
    \label{eq:morphism_reverse_inclusion}
    \bigwedge_{\substack{r \in O \\ s \in O'}} \left(
        \oassoc{r}{s} \longrightarrow \bigwedge_{j \in \absrel{}(s)} \bigvee_{i \in \absrel{}(r)} \fassoc{i}{j}
    \right)
\end{equation}
%
%
%
%
%

The formulas presented in (\ref{eq:fluents_image}) \changes{and} (\ref{eq:injectivity}),
are immediately in CNF, and the size of their conjunction is in 
$\mathcal{O}(\vert F \vert \cdot \vert F' \vert^2 + \vert O \vert \cdot \vert O' \vert^2)$ assuming $\vert F \vert \leq \vert F' \vert$ and $\vert O \vert \leq \vert O' \vert$.
In addition, the formulas presented in~(\ref{eq:morphism_inclusion}) and~(\ref{eq:morphism_reverse_inclusion}) can be readily converted into CNF by duplicating the implication in each clause, and then have a size $\mathcal{O}(\vert O \vert \cdot \vert O' \vert \cdot \vert F \vert \cdot \vert F' \vert)$.

The preprocessing step presented in Section~\ref{sec:ssi_pruning} allows us to find $\varphi$. Indeed, if it is known that fluent $i \in F$ (resp. $r \in O$) cannot be mapped to fluent $j \in F'$ (resp. $s \in O'$), then $\fassoc{i}{j}$ (resp. $\oassoc{r}{s}$) is necessarily false in any model of $\varphi$. As a consequence, as all formulas are in CNF, every positive occurrence of $\fassoc{i}{j}$ is removed in the clauses of $\varphi$, while clauses where $\fassoc{i}{j}$ appears negatively are simplified.

In order to adapt the algorithm for $\stripssubisogeneral{}$, it suffices to \changes{initialize the domain of all fluents to $F'$}. The others formulas and the rest of the algorithm remains the same.

An experimental evaluation of the algorithm can be found in Section~\ref{sec:experimental_evaluation}, along with an evaluation of the adaptation of the same algorithm for a different notion of homomorphism, that we present in the next section.

\section{STRIPS Embedding Problem}
\label{sec:strips_embedding_complexity}

In this section, we introduce the notion of \emph{embedding} between two planning instances $P$ and $P'$, which allows us to deduce that $P$ has no solution-plan if $P'$ has no solution-plan.
We can define the notion of embedding of $P'$ in $P$ informally by saying that
for each fluent $f' \in F'$ of $P'$ there is a corresponding fluent $\isom{f'} \in F$ in $P$ and that when
all fluents that are not in the image of $\isomsymb$ are ignored, $P'$ is isomorphic to 
a \changes{simplified version} of $P$. In this context, simplifying a problem means (possibly) weakening its goals and the preconditions of operators, 
and (possibly) strengthening its initial state.
This implies that if $P'$ has no solution, then $P$ also has no solution. Embedding an unsolvable problem into another problem $P$ can help understand why $P$ can not be solved. An example with Rubik's cubes is presented in Figure~\ref{fig:cube_embed}.

\begin{figure}
    \centering
    \subfloat{\tikz[remember
    picture]{\node(1ALcubeunsat){\includegraphics[width=4cm]{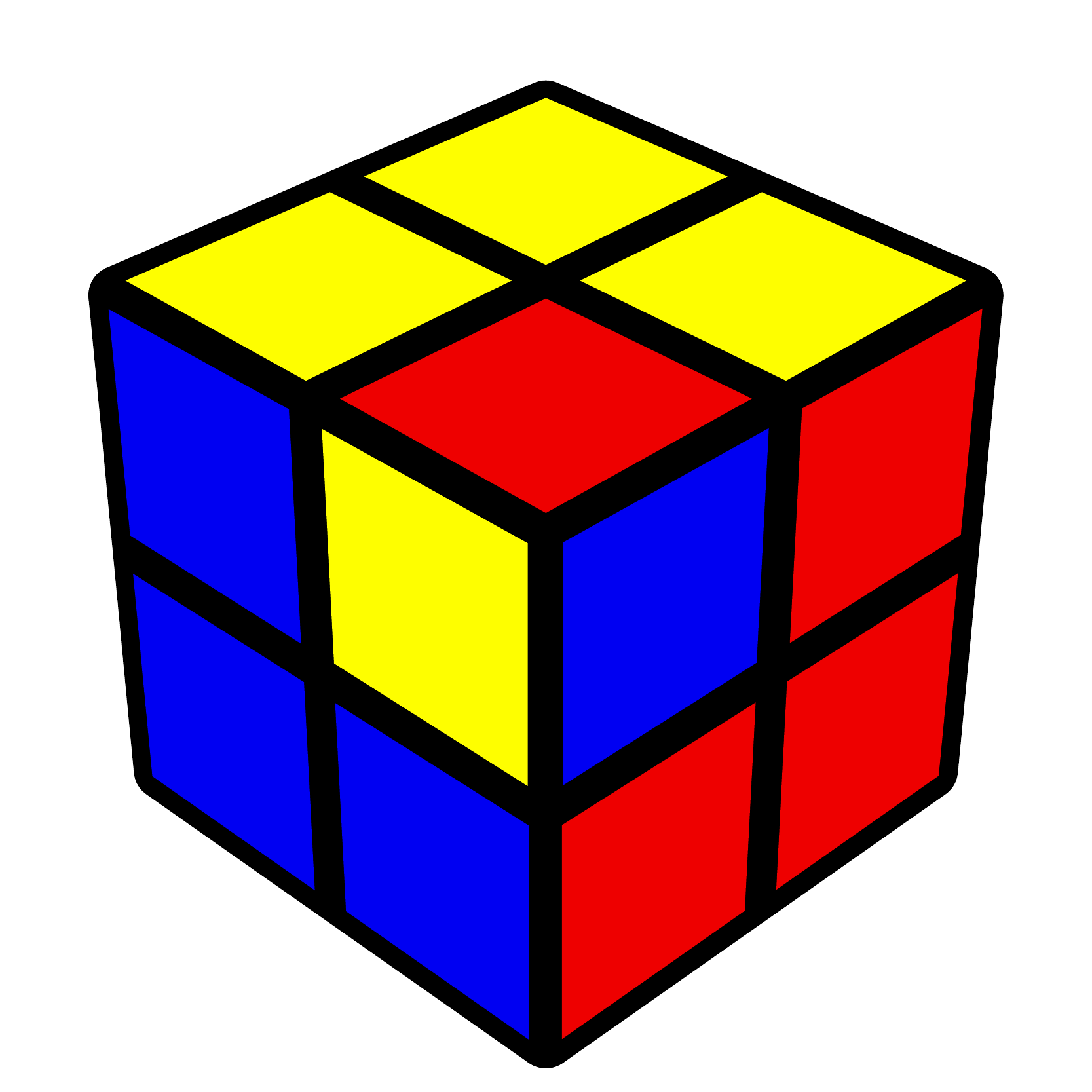}};}}%
    \hspace*{8ex}%
    \subfloat{\tikz[remember picture]{\node(1ARcubeunsat){\includegraphics[width=4cm]{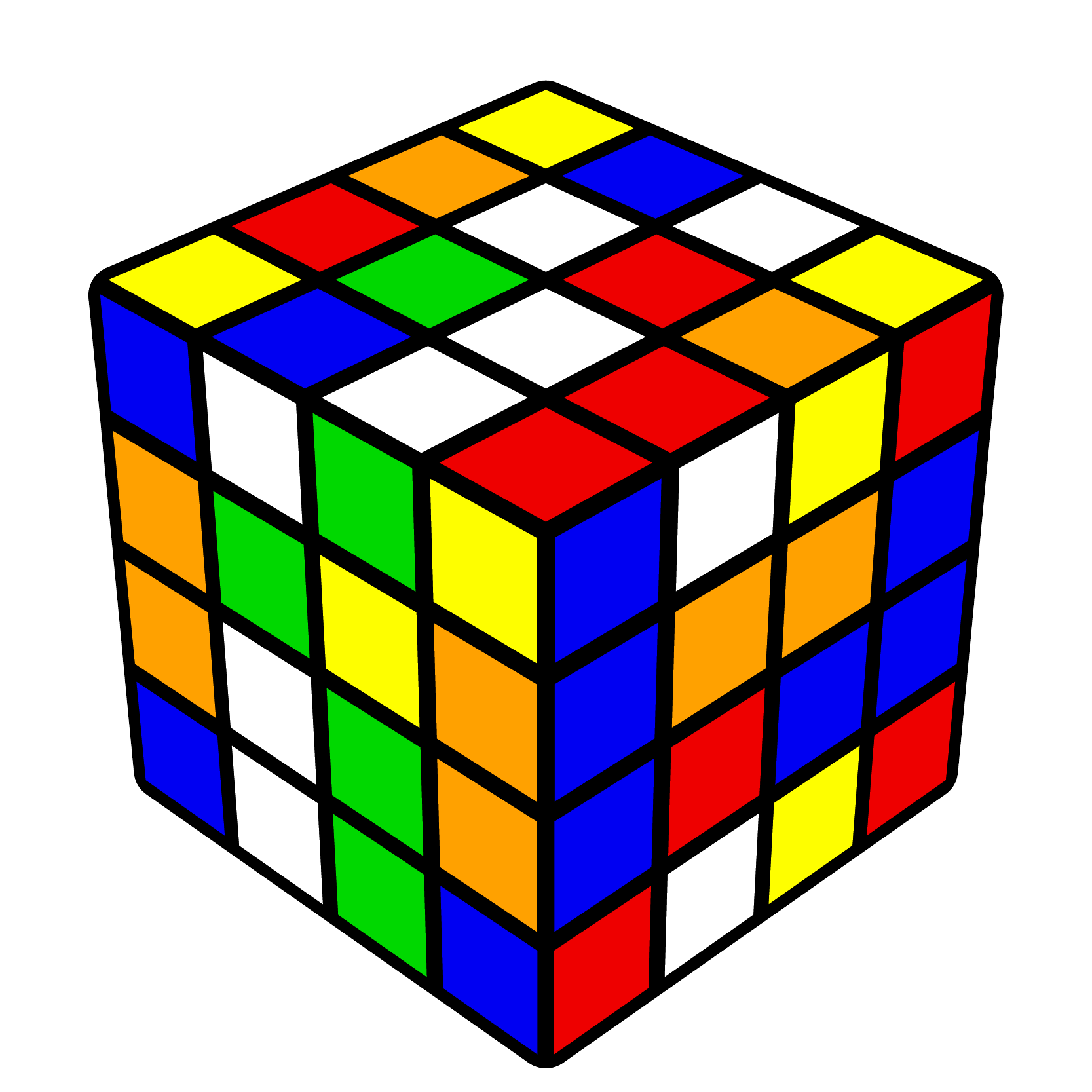}};}}
    \tikz[overlay,remember picture]{\draw[-latex,thick] (1ALcubeunsat) -- (1ALcubeunsat-|1ARcubeunsat.west) node[midway,below,text width=2.5cm]{\hspace{7ex}};}
    \caption{Through an invariance argument, it can be proven that the 2x2x2 cube on the left can not be solved. This cube can be embedded into the 4x4x4 cube on the right: the corners of the 2x2x2 cube can be mapped onto the corners of the 4x4x4 cube. When one only considers these parts of the 4x4x4 cube, we end up with a puzzle where each move that has an effect on the corner pieces can be mimicked on the smaller cube, and where the goal is to correctly position the corners. Other moves are ignored: for instance, moves that rotate one of the center slices leave the corners in place and are of no interest in this case. As no solution exists for the 2x2x2 cube, no sequence of moves exists that would correctly position the corner pieces on the 4x4x4, as we would end up with a contradiction otherwise. Thus, the 4x4x4 cube presented here can not be solved.}
    \label{fig:cube_embed}
\end{figure}

\begin{definition}
\label{def:embedding}
Let $P=\langle F, O, I, G \rangle$ and $P'=\langle F', O', I', G' \rangle$ be two planning problems.
\changes{An \emph{embedding} of $P'$ in $P$ is a pair of}
functions
$\isomsymb{} : F' \rightarrow F$ and $\isompsymb{} : O \rightarrow O'$ such that $\isomsymb{}$ is injective and
for each operator $o \in O$ that verifies
\begin{equation}
    (\effp{o} \cup \effm{o}) \cap \isom{F'} \neq \emptyset \label{eq:active_condition}
\end{equation}
we have:
\begin{align}
    &\isom{\effp{\isomp{o}}} = \effp{o} \cap \isom{F'} \label{eq:effp_morphism} \\
    &\isom{\effm{\isomp{o}}} = \effm{o} \cap \isom{F'} \label{eq:effm_morphism} \\
    &\isom{\pre{\isomp{o}}} \subseteq \pre{o} \cap \isom{F'} \label{eq:pre_morphism} 
\end{align}

In addition, we require that
\begin{align}
    &\isom{G'} \subseteq G \cap \isom{F'} \label{eq:goal_morphism}  \\
    &\isom{I'} \supseteq I \cap \isom{F'} \label{eq:init_morphism} 
\end{align}
\end{definition}

\changes{When there exists an embedding of $P'$ in $P$, we simply say that $P'$ \emph{embeds} in $P$.}

\changes{
\subsection{State-space of an embedded problem}    

In the following, we study the state-space that underlies an embedded instance. We relate embeddings to a class of abstractions, which can be used to compute heuristic values during a state-space search. Abstractions thus have strong ties with unsolvability detection, since various abstraction-based heuristics are efficient at detecting unsolvable states~\cite{hoffmann2014distance}, sometimes including the initial state. In that regard, they share this purpose with embeddings. 

An abstraction of a transition system $\lts{} = \langle S, L, T, s_I, S^G \rangle$ is a function $\alpha: S \rightarrow \alpha(S)$ that maps states of $\lts$ into abstract states while preserving the paths that already exist in the LTS, with the aim of obtaining a more compact transition system $\lts^\alpha$, called the \emph{abstract transition system}. This transition system is such that $\lts^\alpha = \langle S^\alpha, L, T^\alpha, s_I^\alpha, S^{G\alpha}\rangle$, where $S^\alpha = \alpha(S)$, $s_I^\alpha = \alpha(s_I)$, $S^{G\alpha} = \alpha(S^G)$, and $T^\alpha = \left\{ \langle \alpha(s_1), \ell, \alpha(s_2)\rangle \mid \langle s_1, \ell, s_2 \rangle \in T \right\}$. Abstracting the state-space of a planning problem $P$ (i.e. the LTS $\lts^P$) is a common technique for reducing the size of the search space, thus making it easier to explore by shrinking (but preserving the existing) paths. If no path from the initial state to a goal state exists in some abstracted state-space $\lts^{\alpha}$, where $\alpha$ is an abstraction of $\lts^P$, then the planning problem $P$ has no solution. 
A particular class of abstractions is the one that stems from \emph{projections}.
Given a planning instance $P = \langle F, I, O, G\rangle$, its state-space $\lts = \langle 2^F, O, T, I, S^G \rangle$ and a set of fluents $E \subseteq F$, the projection of $\lts$ onto $E$ is the abstraction $\alpha_E: 2^F \rightarrow 2^E$ such that, for any $s \in 2^F$, $\alpha_E(s) = s \cap E$.

Projections and embeddings have in common that they both select a subset of fluents of $P$, in order to obtain a more concise representation of the state-space (directly and indirectly, respectively). However, they differ in that the projection of $P$ over a subset $E$ of fluents is unique (and can be computed given $E$), while various embeddings that range over the same set of fluents $E := \isom{F'}$ can exist (and thus, the embedded problem $P'$ must be provided). More precisely, projections have a canonical way of dealing with operators, while embeddings offer more leeway with regard to how they occur in the more concise representation. 

In the following, we suppose that there exists an embedding $(\isomsymb{}, \isompsymb{})$ of $P'$ in $P$, and we compare the state-space $\lts^{P'}$ of a problem $P'$ that embeds in $P$, to the abstracted state-space $\lts^{\alpha}$ that results from the projection of $P$ over $\isom{F'}$, that we simply denote $\alpha$. More precisely, we show that the former is more succinct and as informative as the latter, in the sense that a path between two states $s_1$ and $s_2$ exists in $\lts^\alpha$ iff a path between $s_1$ and $s_2$ also exists in $\lts^{P'}$. More specifically, the only transitions that one may find in the abstract state space $\lts^{\alpha}$, and not in the state space of the embedded problem $\lts^{P'}$, are reflexive
(and hence superfluous). 



\begin{lemma}
    \label{lem:abstraction_transition_preservation}
    Suppose that $(\isomsymb{}, \isompsymb{})$ is an embedding of $P'$ in $P$, and let $\lts^{P'}$ be the state-space of $P'$. Let $\alpha$ be the projection of $P$ over the set of fluents $\isom{F'}$ over which the embedding ranges, and let $\lts^{\alpha}$ be the abstract state space. Then
    \begin{itemize}
        \item There exists a bijection $b$ between the states of $\lts^{\alpha}$ and the states of $\lts^{P'}$;
        \item For any two different states $s^\alpha_1 \not= s^\alpha_2$ of $\lts^\alpha$, if there exists a transition from $s^\alpha_1$ to $s^\alpha_2$, then there exists a transition from $b(s^\alpha_1)$ to $b(s^\alpha_2)$ in $\lts^{P'}$
    \end{itemize}
     
\end{lemma}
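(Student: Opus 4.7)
The plan is to construct the bijection $b$ explicitly from the injectivity of $\isomsymb{}$, and then verify transition preservation by exhibiting, for each abstract transition between distinct states, an operator of $P'$ that witnesses the corresponding concrete transition in $\lts^{P'}$.

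For the bijection, observe that since $\isomsymb{}: F' \rightarrow F$ is injective, it restricts to a bijection from $F'$ onto $\isom{F'}$, which lifts to a bijection $s \mapsto \isom{s}$ between $2^{F'}$ and $2^{\isom{F'}}$. The states of $\lts^{\alpha}$ are exactly $\alpha(2^F) = 2^{\isom{F'}}$, since any $T \subseteq \isom{F'}$ is realised as $\alpha(T) = T \cap \isom{F'} = T$, while the states of $\lts^{P'}$ are $2^{F'}$. I therefore define $b$ as the inverse map, $b(s^{\alpha}) := \isomsymb^{-1}(s^{\alpha})$, which is clearly a bijection between $2^{\isom{F'}}$ and $2^{F'}$.

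For transition preservation, consider a transition $\langle s^{\alpha}_1, o, s^{\alpha}_2 \rangle$ of $\lts^{\alpha}$ with $s^{\alpha}_1 \not= s^{\alpha}_2$. By definition of $\lts^{\alpha}$, this lifts to a transition $\langle s_1, o, s_2 \rangle$ of $\lts^P$ with $\alpha(s_i) = s^{\alpha}_i$. The strict inequality $s^{\alpha}_1 \not= s^{\alpha}_2$ forces $o$ to modify at least one fluent of $\isom{F'}$, so condition~(\ref{eq:active_condition}) is activated, and the embedding identities~(\ref{eq:effp_morphism})--(\ref{eq:pre_morphism}) apply to $o' := \isomp{o}$. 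I would then verify that $o'$ is applicable in $b(s^{\alpha}_1)$ by chaining $\isom{\pre{o'}} \subseteq \pre{o} \cap \isom{F'} \subseteq s_1 \cap \isom{F'} = s^{\alpha}_1$ and pulling back through $\isomsymb^{-1}$. The equality $b(s^{\alpha}_1)[o'] = b(s^{\alpha}_2)$ would then be obtained by computing $s_2 \cap \isom{F'}$, distributing the intersection over the set difference and union that define $s_1[o]$, substituting the identities $\isom{\effp{o'}} = \effp{o} \cap \isom{F'}$ and $\isom{\effm{o'}} = \effm{o} \cap \isom{F'}$, and finally applying $\isomsymb^{-1}$ to both sides.

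The main subtlety is that these set-theoretic manipulations require $\isomsymb^{-1}$ to distribute over unions and set differences of subsets of $\isom{F'}$; the union case is automatic, but the difference case relies crucially on the injectivity of $\isomsymb{}$. A secondary point of care is that the abstract transition relation is defined existentially, so several pairs of concrete states may project onto the same pair $(s^{\alpha}_1, s^{\alpha}_2)$; however, since the argument above produces a valid transition in $\lts^{P'}$ from any such lift, this poses no real difficulty. The reflexive abstract transitions (those arising from operators $o$ with $(\effp{o} \cup \effm{o}) \cap \isom{F'} = \emptyset$) are precisely the ones excluded by the hypothesis $s^{\alpha}_1 \not= s^{\alpha}_2$, which explains why the statement is restricted to distinct states.
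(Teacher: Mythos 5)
Your proposal is correct and follows essentially the same route as the paper's proof: the bijection $b$ is the inverse of $\isomsymb$ composed with the projection, the non-reflexivity of the abstract transition is used (contrapositively to the paper's phrasing) to activate condition~(\ref{eq:active_condition}), and applicability plus the successor-state equality are obtained by intersecting with $\isom{F'}$, substituting the embedding identities~(\ref{eq:effp_morphism})--(\ref{eq:pre_morphism}), and pulling back through $\isomsymb^{-1}$. Your explicit remarks on why $\isomsymb^{-1}$ distributes over set difference (injectivity) and on the existential definition of abstract transitions are sound and, if anything, slightly more careful than the paper's own write-up.
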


For the sake of readability, we provide the proofs of Lemma~\ref{lem:abstraction_transition_preservation} and Proposition~\ref{prop:embedding_paths} in Appendix~\ref{appendix:embeddings_proofs}.
Note that reflexive transitions found in $\lts^\alpha$ are not guaranteed to have an equivalent in $\lts^{P'}$. This is due to the fact that the operators that have no tangible effects in $P'$ (i.e. the operators that do not satisfy (\ref{eq:active_condition})) are not carried over from the concrete state space $\lts$ to $\lts^{P'}$, while they are carried over to $\lts^\alpha$.
}

\changes{
    \begin{proposition}
        \label{prop:embedding_paths}
        Suppose that $(\isomsymb{}, \isompsymb{})$ is an embedding of $P'$ in $P$, and let $\lts^{P'}$ be the state-space of $P'$. Let $\alpha$ be the projection of $P$ over the set of fluents $\isom{F'}$ over which the embedding ranges, and let $\lts^{\alpha}$ be the abstract state space. Then
        if there exists a path from the initial state to a goal state in $\lts^\alpha$, there also exists a path from the initial state to a goal state in $\lts^{P'}$.
    \end{proposition}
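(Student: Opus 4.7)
The plan is to lift the given path in $\lts^\alpha$ to a plan of $P'$ using the bijection $b$ from Lemma~\ref{lem:abstraction_transition_preservation}, and then to reconcile the mismatch between $b(s_I^\alpha)$ and the actual initial state $I'$ of $\lts^{P'}$ by a standard STRIPS monotonicity argument.

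First I would take a path $s_0^\alpha, \ldots, s_k^\alpha$ in $\lts^\alpha$ with $s_0^\alpha = \alpha(I) = I \cap \isom{F'}$ and $s_k^\alpha \in S^{G\alpha}$, and apply $b$ to each state. Consecutive duplicates in the image sequence $b(s_0^\alpha), \ldots, b(s_k^\alpha)$ correspond exactly to reflexive transitions in $\lts^\alpha$, which come from operators of $P$ that violate~(\ref{eq:active_condition}) and hence affect no fluent of $\isom{F'}$; after deleting them, each pair of consecutive distinct states is connected in $\lts^{P'}$ by a transition provided by Lemma~\ref{lem:abstraction_transition_preservation}. Collecting the labels yields a plan $o'_1, \ldots, o'_m$ of $P'$ that is applicable from $b(s_0^\alpha)$ and whose execution ends in $b(s_k^\alpha)$.

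I would then verify that the endpoints satisfy the initial and goal conditions of $\lts^{P'}$. For the goal: since $s_k^\alpha \in S^{G\alpha} = \alpha(S^G)$, there is some $s \supseteq G$ with $\alpha(s) = s_k^\alpha$, so $s_k^\alpha \supseteq G \cap \isom{F'} \supseteq \isom{G'}$ by~(\ref{eq:goal_morphism}), and hence $b(s_k^\alpha) \supseteq G'$. For the initial state, (\ref{eq:init_morphism}) gives $\isom{I'} \supseteq s_0^\alpha$, so $I' \supseteq b(s_0^\alpha)$, but this inclusion can be strict, so one cannot yet conclude that the plan $o'_1, \ldots, o'_m$ is executable from $I'$ itself.

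The closing step is the STRIPS monotonicity property: if $o'$ is applicable in $s_1$ and $s_1 \subseteq \tilde s_1$, then $o'$ is applicable in $\tilde s_1$ (preconditions are purely positive) and $\apply{\tilde s_1}{o'} \supseteq \apply{s_1}{o'}$, since the add/delete list of $o'$ acts identically on both states. Iterating this along $o'_1, \ldots, o'_m$ starting from $I' \supseteq b(s_0^\alpha)$ produces a valid execution ending in some state $s' \supseteq b(s_k^\alpha) \supseteq G'$, which is therefore a goal state of $\lts^{P'}$. I expect the main subtlety to lie in this last step: everything else is routine bookkeeping around the bijection $b$ and the inclusions in Definition~\ref{def:embedding}, whereas the monotonicity argument is the only place where the concrete semantics of STRIPS operators really enters the proof.
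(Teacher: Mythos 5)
Your proof is correct, and it reaches the conclusion by a genuinely different route from the paper. Both arguments must cope with the same obstacle: the bijection $b$ from Lemma~\ref{lem:abstraction_transition_preservation} sends $\alpha(I)$ to $\isomsymb^{-1}(I \cap \isom{F'})$, which by~(\ref{eq:init_morphism}) is only a \emph{subset} of $I'$, not $I'$ itself. The paper resolves this \emph{before} lifting: it remarks that in $\lts^\alpha$ a path to a goal from $\alpha(I)$ yields a path to a goal from any abstract state containing $\alpha(I)$, applies this to $s^\alpha_{I'} = \alpha(\isom{I'}) \supseteq \alpha(I)$, and then lifts a path that starts exactly at $b^{-1}(I')$. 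You resolve it \emph{after} lifting: you transport the original path into $\lts^{P'}$ and then repair the starting state by iterating the standard STRIPS monotonicity property (positive preconditions, so applicability and the resulting state are monotone in the current state) from $I' \supseteq b(\alpha(I))$. The two approaches need essentially the same monotonicity fact, just at different levels; yours has the advantage that the monotonicity is invoked for ordinary STRIPS operator application in $P'$, where it is immediate from the definitions, whereas the paper's ``first remark'' concerns the induced abstract transition system $\lts^\alpha$ and is stated without proof even though justifying it requires constructing suitable concrete preimages. One small inaccuracy in your write-up: a reflexive transition of $\lts^\alpha$ need not come from an operator violating~(\ref{eq:active_condition}) (an active operator can also fix the projected state, e.g.\ when its relevant add effects are already true); this does not affect your argument, since you discard all reflexive steps regardless of their origin and invoke the lemma only for the non-reflexive ones.
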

}

\changes{As a conclusion, when compared to the projection that an embedding is associated to, the embedding has a state-space that contains fewer uninformative transitions (since they are reflexive), and that has an initial state that is at least as close to some goal state than its equivalent in the state-space of the projection. This is why, given an embedded problem and a projection, it is more likely to be easier to prove that the embedded problem is unsolvable than the projection.

We move on to the main result of this section.
}

\begin{proposition}
If $P'$ embeds in $P$ and $P'$ has no solution-plan, then neither does $P$.
\end{proposition}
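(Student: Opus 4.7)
The plan is to prove the contrapositive: assume $P$ admits a solution-plan $\pi = o_1, \ldots, o_k$, and exhibit a solution-plan $\pi'$ for $P'$. Given the embedding $(\isomsymb{}, \isompsymb{})$, the natural candidate for $\pi'$ is obtained by going through $\pi$ and, for each operator $o_i$ that is \emph{active} (i.e.\ satisfies condition~(\ref{eq:active_condition})), including $\isomp{o_i}$ in $\pi'$, while discarding every $o_i$ that touches no fluent in $\isom{F'}$. Intuitively, the embedding conditions say that $P'$ mirrors exactly what happens to the fluents of $\isom{F'}$ under these active operators.

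To make this precise, first I would introduce the projection $\phi: 2^F \to 2^{F'}$ defined by $\phi(s) = \isomsymb{}^{-1}(s \cap \isom{F'})$, which is well-defined because $\isomsymb{}$ is injective. Denote by $s_0 = I, s_1, \ldots, s_k$ the states traversed by $\pi$ in $P$, and by $s'_0, s'_1, \ldots$ the states traversed by $\pi'$ in $P'$ (re-indexed to skip the inactive operators). The key invariant to maintain is
\begin{equation*}
    s'_i \supseteq \phi(s_i)
\end{equation*}
with equality allowed to fail only because condition~(\ref{eq:init_morphism}) gives $I' \supseteq \phi(I)$, a one-sided containment.

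The proof is then by induction on $i$. The base case $s'_0 = I' \supseteq \phi(I)$ is immediate from~(\ref{eq:init_morphism}). For the inductive step, if $o_{i+1}$ is inactive then $s_{i+1}$ and $s_i$ agree on $\isom{F'}$, so $\phi(s_{i+1}) = \phi(s_i) \subseteq s'_i = s'_{i+1}$. If $o_{i+1}$ is active, I need to check two things: (a)~that $\isomp{o_{i+1}}$ is applicable at $s'_i$, which follows because $\pre{o_{i+1}} \subseteq s_i$ implies, via~(\ref{eq:pre_morphism}) and injectivity of $\isomsymb{}$, that $\pre{\isomp{o_{i+1}}} \subseteq \phi(s_i) \subseteq s'_i$; and (b)~that the invariant survives one step, which reduces to a set-theoretic computation using the equalities $\effp{\isomp{o_{i+1}}} = \isomsymb{}^{-1}(\effp{o_{i+1}} \cap \isom{F'})$ and $\effm{\isomp{o_{i+1}}} = \isomsymb{}^{-1}(\effm{o_{i+1}} \cap \isom{F'})$ coming from~(\ref{eq:effp_morphism}) and~(\ref{eq:effm_morphism}). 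Again injectivity of $\isomsymb{}$ is essential so that $\isomsymb{}^{-1}$ distributes over unions, intersections, and differences of subsets of $\isom{F'}$.

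Once the invariant is established, reaching the goal is immediate: since $G \subseteq s_k$, we have $G \cap \isom{F'} \subseteq s_k \cap \isom{F'}$, so $\isomsymb{}^{-1}(G \cap \isom{F'}) \subseteq \phi(s_k) \subseteq s'_k$; combined with~(\ref{eq:goal_morphism}), which yields $G' \subseteq \isomsymb{}^{-1}(G \cap \isom{F'})$, we get $G' \subseteq s'_k$. Hence $\pi'$ is a solution-plan for $P'$, contradicting the hypothesis. The main obstacle is purely bookkeeping: keeping track of why the invariant is a containment rather than an equality (forced by~(\ref{eq:init_morphism})), and repeatedly invoking injectivity of $\isomsymb{}$ when pulling back unions and set-differences through $\isomsymb{}^{-1}$. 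The result could also be derived as a corollary of Proposition~\ref{prop:embedding_paths} by reading the solution-plan of $P$ as a path in $\lts^\alpha$, but a direct argument seems cleaner.
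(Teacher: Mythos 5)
Your proof is correct and follows essentially the same route as the paper's: argue by contraposition, delete from the solution-plan of $P$ every operator that fails condition~(\ref{eq:active_condition}), and map the remaining operators through $\isompsymb$. The only difference is that you make explicit, via the invariant $s'_i \supseteq \isomsymb^{-1}(s_i \cap \isom{F'})$ and the induction that maintains it, the claim the paper simply asserts in its final sentence.
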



\begin{proof}
Let $\isomsymb{}, \isompsymb{}$ be the functions defining the embedding.
We define $\isom{P'}$ in the natural way, as a copy of $P'$ in which each fluent $f \in F'$ is
replaced throughout by $\isom{f}$. Clearly, since $\isomsymb{}$ is injective, $P'$ and $\isom{P'}$ are isomorphic, and as $P'$ does not have a solution-plan, then neither does $\isom{P'}$.
Now suppose by contraposition that $P$ has a solution-plan $\plan{}$. Let $\tilde{O}$ be the set of operators $o \in O$ such
that $(\effp{o} \cup \effm{o}) \cap \isom{F'} \neq \emptyset$. Then deleting from $\plan{}$ all operators
not in $\tilde{O}$ leaves a sequence of operators $\tilde{\plan{}}$ such that $\isomp{\tilde{\plan{}}}$ is a solution-plan for $\isom{P'}$.
\end{proof}

\changes{\subsection{Relations to subinstance isomorphisms}}

As mentioned in Section~\ref{sec:strips_subiso_complexity}, 
the notion of embedding is a generalisation of the notion of subinstance isomorphism
in the sense that it allows us to detect more unsolvable 
instances than the contrapositive of Proposition~\ref{prop:stripsubiso_sol}.
In fact, as the following proposition shows, we obtain an embedding by considering the \emph{inverse} of the mapping between fluents in the subinstance isomorphism.

\begin{proposition} \label{prop:SSIimpliesSE}
Let $P$ and $P'$ be STRIPS planning instances with non-empty sets of fluents.
If $(\isomsymb,\isompsymb)$ is a subinstance isomorphism from $P$ to $P'$, then 
there is an embedding $(\tilde{\isomsymb},\isompsymb)$ of $P'$ in $P$.
\end{proposition}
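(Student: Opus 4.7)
The plan is to keep $\isompsymb{}$ unchanged and to take $\tilde{\isomsymb{}}$ to be (an injective extension to all of $F'$ of) the partial inverse $\isomsymb{}^{-1}$, which is well-defined on $\isom{F} \subseteq F'$ by injectivity of $\isomsymb{}$. The key identity I will use throughout is $\tilde{\isomsymb{}} \circ \isomsymb{} = \mathrm{id}_F$, so that applying $\tilde{\isomsymb{}}$ to any set of the form $\isom{S}$ simply returns $S$.

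With this choice, the verification of Definition~\ref{def:embedding} is largely mechanical. By the homomorphism condition~(\ref{eq:def_strips_iso_m}) of the subinstance isomorphism, for every $o \in O$ we have $\pre{\isomp{o}} = \isom{\pre{o}}$, $\effp{\isomp{o}} = \isom{\effp{o}}$ and $\effm{\isomp{o}} = \isom{\effm{o}}$. Applying $\tilde{\isomsymb{}}$ and using $\tilde{\isomsymb{}} \circ \isomsymb{} = \mathrm{id}_F$ collapses each right-hand side to $\pre{o}$, $\effp{o}$ and $\effm{o}$ respectively. Since these sets are already contained in $F = \tilde{\isomsymb{}}(\isom{F}) \subseteq \tilde{\isomsymb{}}(F')$, the intersections with $\tilde{\isomsymb{}}(F')$ that appear in~(\ref{eq:effp_morphism})--(\ref{eq:pre_morphism}) are redundant, and the required equalities (and inclusion for $\presymb{}$) drop out. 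The same pattern handles~(\ref{eq:goal_morphism}) and~(\ref{eq:init_morphism}): from $\isom{I} = I'$ and $\isom{G} = G'$ we get $\tilde{\isomsymb{}}(I') = I$ and $\tilde{\isomsymb{}}(G') = G$, and since $I, G \subseteq F \subseteq \tilde{\isomsymb{}}(F')$ the two set-containments hold (in fact with equality).

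The main obstacle, in my view, is producing $\tilde{\isomsymb{}}$ as a genuine injective function from the whole of $F'$ into $F$. The subinstance isomorphism only guarantees that $\isomsymb{}: F \to F'$ is injective, so $\isomsymb{}^{-1}$ lives on $\isom{F}$ alone; and since $\tilde{\isomsymb{}}$ restricted to $\isom{F}$ already uses up all of $F$, extending it injectively to $F' \setminus \isom{F}$ forces $\isom{F} = F'$, i.e.\ $\isomsymb{}$ must in fact be bijective. I expect the proof either to argue that this is implicit in the setting of the proposition (the ``non-empty fluents'' hypothesis is suggestive) or to observe that the embedding conditions of Definition~\ref{def:embedding} only ever evaluate $\tilde{\isomsymb{}}$ on $\isom{F}$, so the way $\tilde{\isomsymb{}}$ is defined on $F' \setminus \isom{F}$ is immaterial to the verification and any convenient extension can be chosen.
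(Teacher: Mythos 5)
Your construction and verification are essentially the paper's: take $\tilde{\isomsymb}$ to be $\isomsymb^{-1}$ on $\isom{F}$, observe that $\tilde{\isomsymb}(F')=F$ so the intersections with $\tilde{\isomsymb}(F')$ in (\ref{eq:effp_morphism})--(\ref{eq:init_morphism}) become vacuous, and collapse each subinstance-isomorphism equality by applying $\tilde{\isomsymb}$ to both sides via $\tilde{\isomsymb}\circ\isomsymb=\mathrm{id}_F$. That part of your argument is correct and matches the paper step for step.

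On the one point you leave open --- how to define $\tilde{\isomsymb}$ on $F'\setminus\isom{F}$ --- the paper takes neither of the two routes you anticipate: it sends every $q\in F'\setminus\isom{F}$ to a single arbitrary fluent $p_0\in F$ (this is the only place the non-emptiness hypothesis is used). That makes $\tilde{\isomsymb}$ total, and it suffices for conditions (\ref{eq:effp_morphism})--(\ref{eq:init_morphism}) because every set to which $\tilde{\isomsymb}$ is applied there lies inside $\isom{F}$, thanks to the equalities $\eff{\isomp{o}}=\isom{\eff{o}}$, $I'=\isom{I}$ and $G'=\isom{G}$. But the obstacle you flag is real: the constant extension is not injective whenever $F'\setminus\isom{F}\neq\varnothing$, Definition~\ref{def:embedding} demands injectivity of the fluent map on all of $F'$, and no injective map $F'\to F$ can exist at all when $\vert F'\vert>\vert F\vert$ --- a situation a subinstance isomorphism permits. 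The paper's proof simply omits this check (it verifies only $\tilde{\isomsymb}(F')=F$ and the three displayed conditions), so what you have identified is a genuine loose end in the published argument rather than a defect of your proposal; repairing it requires either restricting the injectivity requirement of Definition~\ref{def:embedding} to the fluents that actually occur in the conditions (on $\isom{F}$, $\tilde{\isomsymb}$ is even a bijection onto $F$), or adding the hypothesis $\isom{F}=F'$ to the proposition.
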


\begin{proof}
It suffices to define $\tilde{\isomsymb}$ and show that the conditions of an embedding are satisfied.
Since $(\isomsymb,\isompsymb)$ is a subinstance isomorphism, $\isomsymb$ is injective, hence invertible
on its image $\isomsymb(F)$. Define $\tilde{\isomsymb}(q) = \isomsymb^{-1}(q)$ for $q \in \isom{F}$
and  $\tilde{\isomsymb}(q) = p_0$ otherwise, where $p_0 \in F$ is an arbitrary fluent.
Clearly $\tilde{\isomsymb}(F') = F$, so to show that  $(\tilde{\isomsymb},\isompsymb)$ is an embedding
$P'$ in $P$, it suffices to show:
\begin{enumerate}
\item $\forall o \in O$, $\tilde{\isomsymb}(\eff{\isomp{o}}) = \eff{o}$ and $\tilde{\isomsymb}(\pre{\isomp{o}}) \subseteq \pre{o}$,
\item $\tilde{\isomsymb}(G') \subseteq G$, and
\item $\tilde{\isomsymb}(I') \supseteq I$
\end{enumerate}
But, since $(\isomsymb,\isompsymb)$ is a subinstance isomorphism from $P$ to $P'$, we have
\begin{enumerate}
\item $\forall o \in O$, $\eff{\isomp{o}} = \isom{\eff{o}}$ and $\pre{\isomp{o}} = \isom{\pre{o}}$,
\item $G' = \isom{G}$, and
\item $I' = \isom{I}$
\end{enumerate}
Applying $\tilde{\isomsymb}$ to both sides of each equation, we obtain:
\begin{enumerate}
\item $\forall o \in O$, $\tilde{\isomsymb}(\eff{\isomp{o}}) = \tilde{\isomsymb}(\isom{\eff{o}}) = \eff{o}$ and 
$\tilde{\isomsymb}(\pre{\isomp{o}}) = \tilde{\isomsymb}(\isom{\pre{o}}) = \pre{o}$,
\item $\tilde{\isomsymb}(G') = \tilde{\isomsymb}(\isom{G}) = G$, and 
\item $\tilde{\isomsymb}(I') = \tilde{\isomsymb}(\isom{I}) = I$
\end{enumerate}
which completes the proof.
\end{proof}

Let us now give examples to highlight some differences between subinstance isomorphisms and embeddings. As a first example, there is an embedding between the two instances shown in Figure~\ref{fig:cube_embed} but no subinstance isomorphism since the number of preconditions and effects of each operator is strictly greater in the 4x4x4 cube compared to the 2x2x2 cube.
As a second example, consider the generic path-finding problem in a graph $\graph{}$ encoded as a STRIPS instance
$P_\graph{}$, described in Section~\ref{sec:graph_constructions}. 
If the initial state is $a$ and the goal state $b$, then a solution-plan is a path from vertex $a$ to vertex $b$. For simplicity, in the discussion that follows, we assume the same initial state and goal in each instance.
If $\mathcal{G}_1$ is a proper subgraph of $\mathcal{G}_2$, in the sense that $\mathcal{G}_2$ has extra edges, then there is a subinstance isomorphism from 
$P_{\mathcal{G}_1}$ to $P_{\mathcal{G}_2}$, but no embedding from 
$P_{\mathcal{G}_1}$ to $P_{\mathcal{G}_2}$
(since the operators corresponding to the extra edges in $\mathcal{G}_2$ have no image in $P_{\mathcal{G}_1}$). Proposition~\ref{prop:SSIimpliesSE} tells us that
there is an embedding in the opposite direction, from $P_{\mathcal{G}_2}$ to $P_{\mathcal{G}_1}$.
If, on the other hand, the set of vertices $V_1$ of $\mathcal{G}_1$ is a proper subset
of the set of vertices $V_2$ of $\mathcal{G}_2$ and that $\mathcal{G}_2$ restricted
to the vertices $V_1$ is a proper subgraph of $\mathcal{G}_1$, then $P_{\mathcal{G}_1}$
embeds in $P_{\mathcal{G}_2}$, but there is no subinstance isomorphism,
in either direction, between $P_{\mathcal{G}_1}$ and $P_{\mathcal{G}_2}$
(since there is either some operator in $P_{\mathcal{G}_1}$ that has no image in $P_{\mathcal{G}_2}$ or there is some fluent in $P_{\mathcal{G}_2}$ with no image in $P_{\mathcal{G}_1}$). 

\changes{\subsection{Complexity results}}

In the rest of this section, we prove the \NP{}-completeness of the problem concerned with finding whether an embedding of $P'$ in $P$ exists.

\begin{problem}{STRIPS Embedding \stripsembedding{}}
    \begin{problembody}
        \emph{\textbf{Input}}: & Two STRIPS instances $P$ and $P'$\\
        \emph{\textbf{Output}}: & An embedding $(\isomsymb{}, \isompsymb)$ of $P'$ in $P$, if one exists 
    \end{problembody}
\end{problem}

The proof is based on a reduction from the decision problem $k$-\textsf{Independent Set}, which we introduce below.

\begin{definition}[Independent Set]
Let $\graph{}(V, E)$ be a graph. An independent set is a set $V' \subseteq V$ such that, for any $v_i, v_j \in V'$, we have $\{v_i, v_j\} \not\in E$.
\end{definition}

\begin{problem}{$k$-\textsf{Independent Set}}
    \begin{problembody}
        \emph{\textbf{Input}}: & A graph $\graph{}(V, E)$\\
                               & An integer $k$ \\
        \emph{\textbf{Output}}: & \emph{Yes} \emph{iff} there exists an independent set of size $k$ in $\graph$ 
    \end{problembody}
\end{problem}

\begin{proposition}
    $k$-\textsf{Independent Set} is \NP{}-complete\cite{Garey1979computers}
\end{proposition}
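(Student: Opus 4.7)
The plan is to prove this classical fact by the two standard steps of any $\NP{}$-completeness argument: establishing membership in $\NP{}$, then exhibiting a polynomial-time reduction from a known $\NP{}$-hard problem. Since the cited reference is Garey and Johnson, any textbook reduction suffices; I would use the direct reduction from 3-\textsf{SAT} (justified by Cook--Levin), as it is self-contained and avoids chaining through intermediate problems such as \textsf{Clique} or \textsf{Vertex Cover}.

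Membership in $\NP{}$ is immediate: a candidate subset $V' \subseteq V$ with $|V'| = k$ is a polynomial-size certificate, and one can verify in time $O(k^2 + |E|)$ that $|V'| = k$ and that no pair $v_i, v_j \in V'$ is connected by an edge of $E$.

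For $\NP{}$-hardness, given a 3-CNF formula $\phi$ with clauses $C_1, \ldots, C_m$ over variables $x_1, \ldots, x_n$, I would build a graph $\graph{}(V, E)$ as follows. For each clause $C_j$ with literals $\ell_{j,1}, \ell_{j,2}, \ell_{j,3}$, introduce three vertices $v_{j,1}, v_{j,2}, v_{j,3}$ joined pairwise into a triangle. Then, for every pair $v_{j,a}, v_{j',b}$ with $j \neq j'$ whose associated literals are contradictory (one of the form $x_i$, the other $\neg x_i$), add the edge $\{v_{j,a}, v_{j',b}\}$. Finally set $k = m$. This construction is clearly polynomial in $|\phi|$.

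The correctness claim to verify is that $\phi$ is satisfiable iff $\graph{}$ contains an independent set of size $m$. In the forward direction, from any satisfying assignment $\alpha$ one selects, in each clause, one vertex whose literal is satisfied by $\alpha$: no two selected vertices share a triangle (they come from different clauses) and no two are joined by a contradiction edge (since they are all true under $\alpha$), yielding an independent set of size $m$. In the backward direction, any independent set of size $m$ must contain at most one vertex per triangle, hence exactly one per clause by counting; the absence of contradiction edges among the chosen vertices means the corresponding literals can all be simultaneously assigned \textit{true}, and extending this partial assignment arbitrarily satisfies every clause of $\phi$. The subtle step, which is also where the construction earns its keep, is this backward direction: the triangle constraint combined with the cardinality bound is what forces the one-vertex-per-clause selection, while the inter-clause contradiction edges are what enforce the consistency of the induced truth assignment, so both families of edges are essential and interact precisely as needed.
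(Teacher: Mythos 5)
Your proof is correct: the membership argument is immediate, and the reduction from 3-\textsf{SAT} (triangles per clause plus contradiction edges between complementary literals across clauses, with $k=m$) is the standard construction, and your verification of both directions — in particular the counting argument forcing exactly one vertex per triangle in the backward direction — is sound. Note, however, that the paper does not prove this proposition at all: it is stated as a known classical result with a citation to Garey and Johnson, and the paper immediately uses it as the source problem for its own reduction to \stripsembedding{}. So there is no in-paper argument to compare against; your contribution is to make the cited fact self-contained. One cosmetic remark: restricting the contradiction edges to pairs with $j \neq j'$ is harmless, since two complementary literals occurring in the same clause already lie in a common triangle and are therefore adjacent, but it is worth saying so explicitly to preempt the worry that an independent set could pick two contradictory literals from a single clause.
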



\begin{proposition}
    The decision problem corresponding to \stripsembedding{} is \NP{}-complete
\end{proposition}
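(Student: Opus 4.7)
The plan is to prove NP-membership by a standard certificate argument, and NP-hardness by a polynomial-time reduction from $k$-\textsf{Independent Set}.

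For NP-membership, a pair $(\isomsymb{}, \isompsymb{})$ has description size polynomial in the input, and verifying the injectivity of $\isomsymb{}$, evaluating condition (\ref{eq:active_condition}) for each $o \in O$, and then checking each of the equations (\ref{eq:effp_morphism})--(\ref{eq:init_morphism}) of Definition~\ref{def:embedding} clearly takes polynomial time.

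For NP-hardness, given an instance $(\graph{}=(V,E),k)$ of $k$-\textsf{Independent Set}, I would construct two STRIPS instances $P$ and $P'$ realising the following intuition. Since $|F'|=k$ and $\isomsymb{}$ is injective, $\isom{F'}$ picks out $k$ distinct fluents of $P$, which I identify with $k$ distinct vertices of $\graph{}$; the operators of $P$ and $P'$ are then designed so that selecting two adjacent vertices would force some operator of $P$ to be mapped by $\isompsymb{}$ to an operator in $O'$ whose positive effect has too large a cardinality. Concretely, I would set $F=V$ and $I=I'=G=G'=\varnothing$; let $O$ contain, for each edge $\{v_i,v_j\}\in E$, an operator $o_{ij}$ with $\pre{o_{ij}}=\effm{o_{ij}}=\varnothing$ and $\effp{o_{ij}}=\{v_i,v_j\}$; and let $F'=\{f_1,\ldots,f_k\}$ and $O'=\{o_n\}\cup\{o'_\ell \mid \ell \in\{1,\ldots,k\}\}$, where $o_n=\action{\varnothing}{\varnothing}$ and each $o'_\ell$ has $\pre{o'_\ell}=\effm{o'_\ell}=\varnothing$ and $\effp{o'_\ell}=\{f_\ell\}$. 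The conditions (\ref{eq:goal_morphism}) and (\ref{eq:init_morphism}) on initial and goal states are then trivial.

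The correctness argument splits into two directions. From an independent set $S=\{u_1,\ldots,u_k\}$, I set $\isom{f_\ell}=u_\ell$; for each $o_{ij}$, I put $\isomp{o_{ij}}=o_n$ when neither endpoint lies in $S$, and $\isomp{o_{ij}}=o'_\ell$ when exactly one endpoint, say $v_i=u_\ell$, lies in $S$. Independence of $S$ rules out the two-endpoint case, and the remaining verifications are direct. Conversely, from any embedding $(\isomsymb{},\isompsymb{})$ of $P'$ in $P$, I show $\isom{F'}$ is an independent set of size $k$: if some edge $\{v_i,v_j\}$ had both endpoints in $\isom{F'}$, then $o_{ij}$ would satisfy (\ref{eq:active_condition}) and (\ref{eq:effp_morphism}) would force $\isom{\effp{\isomp{o_{ij}}}}$ to have cardinality $2$; but no operator in $O'$ has a positive effect of cardinality greater than $1$, a contradiction.

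The main obstacle is designing $P'$ so that (i) every operator of $P$ can always be assigned \emph{some} image in $O'$, which is why $O'$ includes the null operator $o_n$ to absorb all operators violating (\ref{eq:active_condition}), and (ii) the unit-effect operators $o'_\ell$ in $O'$ collectively cover all $k$ possible ``singleton'' positive effects that an active edge-operator might need to match once a vertex $v_i \in \isom{F'}$ is fixed. Keeping initial states, goals, and preconditions empty throughout eliminates every other constraint and ensures the reduction is polynomial.
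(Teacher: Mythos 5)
Your proposal is correct and follows essentially the same route as the paper: NP-membership via the obvious certificate, and NP-hardness by reducing from $k$-\textsf{Independent Set} using exactly the same gadget (edge operators with two-element positive effects in $P$, versus a null operator plus $k$ unit-effect operators in $P'$, with all preconditions, initial states and goals empty). The correctness argument in both directions --- the cardinality clash on $\effpsymb$ forced by condition~(\ref{eq:active_condition}) and~(\ref{eq:effp_morphism}) when two selected vertices are adjacent, and the explicit assignment of images to edge operators given an independent set --- matches the paper's proof.
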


\begin{proof}
The problem is immediately in $\NP{}$. In the following, in order to prove the \NP{}-hardness of the problem, we build a reduction from $k$-\textsf{Independent Set}.

Let $\graph{}(V, E)$ be an instance of $k$-\textsf{Independent Set}, and let us denote $V = \{v_1, \ldots, v_{\vert V  \vert}\}$. We can assume, without loss of generality, that $\graph{}$ does not contain any reflexive edge (otherwise, we can remove the associated node, as they can not be part of any solution). Let us define the planning problem $P = \langle V, \varnothing, O, \varnothing \rangle$ such that 
\[ 
    O = \Set{\action{\varnothing}{\{v_i, v_j\}} | \{v_i, v_j\} \in E}
\]

In addition, we build $P' = \langle F', \varnothing, O', \varnothing \rangle$ such that $F' = \{ 1, \ldots, k \}$ and 
\[
    O' = \Set{\action{\varnothing}{\varnothing}} \cup \Set{\action{\varnothing}{\{i\}} | i \in \{1, \ldots, k\}}
\]

Let us show that there exists an embedding of $P'$ in $P$ iff $\graph{}$ is a positive $k$-\textsf{Independent Set} instance.

$(\Rightarrow)$ Suppose that there exists an embedding $\isomsymb{}: F' \rightarrow V$, $\isompsymb{}: O \rightarrow O'$, and let us show that $\isom{F'}$ is an independent set of $\graph{}$ of size $k$.

Firstly, as $\isomsymb$ is injective, we immediately have that $\isom{F'}$ is of size $k$. Secondly, let $v_i, v_j \in \isom{F'}$, with $v_i \not= v_j$. Let us show that $\{v_i, v_j\} \not\in E$.


Suppose by contradiction that $\{v_i, v_j\} \in E$. Then by construction of $O$, $o = \action{\varnothing}{\{v_i, v_j\}} \in O$. We immediately have that $o$ verifies condition $(\ref{eq:active_condition})$, as $\effp{o} = \{v_i, v_j\} \subseteq \isom{F'}$. As a consequence, $o$ also verifies condition $(\ref{eq:effp_morphism})$, i.e., 
\[
    \isom{\effp{\isomp{o}}} = \effp{o} \cap \isom{F'}
\]

On the right-hand-side, we have that $\effp{o} \cap \isom{F'} = \{ v_i, v_j \}$, which is a set of size exactly 2. However, on the left-hand-side, we have, by definition of the domain of $\isompsymb{}$, that $\isomp{o} = \action{\varnothing}{\varnothing}$ or $\isomp{o} = \action{\varnothing}{\{v_l\}}$ for some $v_l \in V$. In both cases, we have $\vert \effp{\isomp{o}} \vert \leq 1$, and thus $\vert \isom{\effp{\isomp{o}}} \vert \leq 1$, which is a contradiction. 

As a consequence, we have that $\{v_i, v_j\} \not \in E$, and $\isom{F'}$ is a $k$-independent set.

$(\Leftarrow)$ Suppose that $\graph{}$ is a positive instance of $k$-\textsf{Independent Set}. Then there exist $k$ different vertices of $V$, that form an independent set. Let us denote them $v_1, \ldots, v_k \in V$ without loss of generality.

Let us build $\isomsymb{}: F' \rightarrow V$ such that $\isom{i} = v_i$ for all $i \in \{1, \ldots, k\}$, so that $\isom{F'}$ is an independent set. $\isomsymb{}$ is immediately injective. In addition, we define $\isompsymb{}: O \rightarrow O'$ such that
\[
    \isomp{\action{\varnothing}{\{v_i, v_j\}}} = \left\{
                \begin{array}{ll}
                    \action{\varnothing}{\{l\}} \textit{ if } l = i \textit{ or } l = j \textit{ for some } l \in F'\\
                    \action{\varnothing}{\varnothing} \textit{ otherwise }
                \end{array}
                \right.
\]

Let us show that this forms a correct embedding. As $\pre{o} = \effm{o} = \varnothing$, and $\pre{o'} = \effm{o'} = \varnothing$ for any $o' \in O'$, conditions (\ref{eq:effm_morphism}) and (\ref{eq:pre_morphism}) are always satisfied. All that is left to show is that condition $(\ref{eq:effp_morphism})$ is satisfied when required, i.e., when $(\ref{eq:active_condition})$ is satisfied. 

Let $o = \action{\varnothing}{\{v_i, v_j\}} \in O$, and let us then suppose that $\effp{o} \cap \isom{F'} \not= \varnothing$.
%
We necessarily have $\effp{o} \cap \isom{F'} = \{ v_i \}$ for some $v_i$. Otherwise, we would have $\effp{o} \cap \isom{F'} = \{ v_i, v_j \}$, where $v_i \not= v_j$, but where $v_i, v_j \in \isom{F'}$. By construction of the elements of $O$, we would have $\{v_i, v_j \} \in E$, which is impossible because they are vertices of the independent set, by hypothesis.

As a consequence, $o = \action{\varnothing}{\{v_i, v_j\}}$, with $i \in \{1, \ldots, k\} = F'$ and $j \in \{k+1, \ldots, \vert V \vert\}$. We have the following:
\begin{align*}
    \isom{\effp{\isomp{o}}} &= \isom{\effp{\action{\varnothing}{\{i\}}}} \\
    &= \isom{\{i\}} \\
    &= \{ v_i \} \\
    &= \effp{o} \cap \isom{F'}
\end{align*}
Condition~$(\ref{eq:effp_morphism})$ is thus satisfied. As the initial state and goal are empty for both problems, conditions $(\ref{eq:goal_morphism})$ and  $(\ref{eq:init_morphism})$ are immediately satisfied. $\isomsymb{}$ and $\isompsymb{}$ therefore form a correct embedding of $P'$ into $P$.
\end{proof}

\section{Adapted algorithm for \stripsembedding{}}
\label{sec:se_algorithmic}

In this section, we show how to adapt Algorithm~\ref{alg:subiso_finder}, originally proposed for subinstance isomorphisms, to the case of \stripsembedding{}. More precisely, we propose adapted constraints for the preprocessing step, as well as an adequate propositional encoding for \stripsembedding{}. These steps being the only required modifications, the outline of the algorithm remains unchanged.


\subsection{Constraint Propagation}
\label{sec:se_pruning}

In this section, we show how constraint propagation allows us to prune a number of inconsistent associations, as previously.
For a start, we consider a variable for each operator $o \in O$, and a variable for each fluent $f' \in F'$, so that the respective domains of each of those variables contain the candidates for their image through $\isomsymb$ or $\isompsymb$.

In addition, to make the propagation of constraints more efficient, we introduce variables of the form $\fuseful{f}$, where $f \in F$ and $\domain{\fuseful{f}} \subseteq \{\top, \bot\}$. They translate the \emph{usefulness} of their associated fluents. A fluent $f$ is said to be useful in an embedding if it belongs to $\isom{F'}$. More generally, we will say that a fluent is \emph{useful} if it belongs to the image of \emph{every} possible embedding.
Intuitively, useful fluents are fluents that are likely to play a part in the enforcement of the properties imposed in Definition~\ref{def:embedding}.
This is why keeping in memory the domains of fluents of the form $\fuseful{f}$ allows us to know if $f$ is useful ($\domain{\fuseful{f}} = \{\top\}$), potentially useful ($\domain{\fuseful{f}} = \{\top, \bot\}$) or not useful ($\domain{\fuseful{f}} = \{\bot\}$).
Informally, when an operator of $O$ has a useful fluent in its effect (appearing positively or negatively) and thus satisfies (\ref{eq:active_condition}), we say that the operator is \emph{active}. It means that the equations (\ref{eq:effp_morphism}) to (\ref{eq:pre_morphism}) have to be satisfied for this operator. This notion, however, is mostly useful in the propositional encoding, and we thus elaborate further on this point in the appropriate section.

\subsubsection{Initial domains}


First, as with subinstance isomorphisms, the initialization of the domains of fluents is straightforward: each fluent $f' \in F'$ has an initial domain set to either \changes{$I \setminus G$}, \changes{$G \setminus I$}, $I \cap G$ or $F \setminus (I \cup G)$, depending 
on whether $f'$ is in $I'$ \changes{only}, $G'$ \changes{only}, both or neither, respectively.

We also use operator profiles to initialize the domains of operators. As previously, an operator profile is a vector of $\mathbb{N}^4$ that bears the information of the size of the precondition, the (positive and negative) effect of the operator, as well as the number of fluents that are strict-delete. 
However, in our current case, an operator $o \in O$ that has a different profile from $o' \in O'$ can still be mapped to $o'$, depending on the set of fluents that constitute the image of $\isomsymb{}$.

We will only consider the domains of active operators, in the sense that operators that are not active do not have to fulfill any condition.
So, let us suppose that $o$ is active, and that we wish to map $o$ to $o'$ (i.e., $\isomp{o} = o'$). Then it has to satisfy the condition:
\[
\underbrace{\isom{\pre{\isomp{o}}}}_{\vert \cdot \vert = \vert \pre{o'} \vert} \subseteq \underbrace{\pre{o} \cap \isom{F'}}_{\vert \cdot \vert \leq \vert \pre{o} \vert}
\]

The equality on the left-hand side uses the fact that $\isomsymb{}$ is injective. So as a consequence, we have that necessarily, $\vert \pre{o'} \vert \leq \vert \pre{o} \vert$. A similar case can be made for effects, which allows us to define an operator profile $\opprofile{o}$ for each operator $o$ of either problem.
As a consequence, an operator $o$ can only be mapped to $o'$ if $\opprofile{o'} \leq \opprofile{o}$, where $\leq$ compares each element of each tuple to its counterpart in the other tuple.


\subsubsection{Propagation}

        
        
        
    

In the following, we denote $\mathcal{C}_{\effsymb} = \{ \effpsymb{}, \effmsymb{} \}$. The constraint below indicates that fluent $f'$ can only be mapped to a fluent $f$ 
if all operators $o$ where $f$ appears in $\effsymb{}$ 
can be matched with some $o' \in O'$. We also check that $\top \in \domain{\fuseful{f}}$, because if it is not the case, then mapping $f'$ to $f$ leads to a contradiction.

\begin{equation}
    \domain{f'} \subseteq \Set{f | 
    \begin{array}{l}
        \top \in \domain{\fuseful{f}} \hspace{2ex} \wedge \\
        \forall o \in O, \; \forall \absrel{} \in \mathcal{C}_{\effsymb{}} \textit{ s.t. } f \in \absrel{}(o), \; \exists o' \in \domain{o} \textit{ s.t. } f' \in \absrel{}(o')
        \end{array}}
    \label{eq:embedding_fluent_cp}
\end{equation}


The constraint below only applies for operators that are active. We however maintain the domain of every operator as if it were active, as an inactive operator can be mapped to any other operator \-- in fact, our algorithm does not bother giving an image to inactive operators, as any operator would suffice.
Note that, by enforcing a constraint for active operators only, we do not interfere incorrectly with the other constraints. Indeed, (\ref{eq:embedding_fluent_cp}) and (\ref{eq:embedding_fuseful_cp}) are only concerned with operators that are active, and as a consequence, we are not at risk of pruning an otherwise possibly valid association.
The first line below indicates that mapping $o$ to $o'$ can only be done if all fluents in $\eff{o}$ that are necessarily useful can be matched by some fluent of $\eff{o'} \subseteq F'$. 
The second indicates that all fluents in $\eff{o'}$ (resp. $\pre{o'}$) 
must map to a fluent in $\eff{o}$ (resp. $\pre{o}$).
%
%
\begin{equation}
\begin{array}{ll}
    \domain{o} \subseteq &\Set{o' | \forall \absrel{} \in \mathcal{C}_{\effsymb}, \forall f \in \absrel{}(o) \textit{ s.t. } \domain{\fuseful{f}} = \{\top \}, \exists f' \in \absrel{}(o') \textit{ s.t. } f \in \domain{f'}} \cap\\  
     &\Set{o' | 
     \forall \absrel{} \in \mathcal{C}, \forall f' \in \absrel{}(o'), \exists f \in \domain{f'} \textit{ s.t. } \top \in \domain{\fuseful{f}} \ \land \ f \in \absrel{}(o)}
\end{array}
      \label{eq:embedding_operator_cp}
\end{equation}



%

Despite the constraint only being applicable for active operators, it still allows us to simplify a number of clauses. Clauses found in (\ref{eq:operators_images}), (\ref{eq:morph_effect_inclusion}) and (\ref{eq:morph_effect_inclusion_reverse})
(in the propositional encoding given in  Section~\ref{sec:encSE}), which are the only formulas involving variables that model associations between operators, all have $\oactive{o}$ for hypothesis, which is a variable that represents that $o$ is active. In other words, the clauses that are simplified as a consequence of (\ref{eq:embedding_operator_cp}) are actually clauses that would have been satisfied nonetheless if $o$ weren't active. All that is left to do is ensuring that some fluents can be detected as useful, as otherwise, the above constraint would be powerless. This is why the rest of this section, as well as the next, is dedicated to methods for finding useful fluents, both through constraint propagation techniques and through a more ad-hoc criterion. 

Some useful fluents can be detected through the following constraint,  which allows us to revise partially the usefulness of a fluent. Its role is mainly to prune $\top$ of the domain of the fluent, and thus marking a fluent as not useful. Conversely, in some rare cases, it can also detect that a fluent is useful, but detecting fluent usefulness is mostly left to a criterion that we present in the next section.

The second line, in the first set, translates the fact that if $f$ were to make $o$ active, then $o$ has to have a possible image in which $f$ finds a match.

\begin{align}
\label{eq:embedding_fuseful_cp}
    \domain{\fuseful{f}} \subseteq 
    &\Set{ \top | \begin{array}{l}
         \exists f' \in F' \textit{ s.t. } f \in \domain{f'} \; \wedge  \\
         \forall o \in O, \forall \absrel{} \in \mathcal{C}_{\effsymb} \textit{ s.t. } f \in \absrel{}(o),  \exists o' \in \domain{o}, \exists f' \in \absrel{}(o'), f \in \domain{f'} 
        \end{array} } \cup \\ 
    &\Set{ \bot | \not \exists f' \in F' \textit{ s.t. } \domain{f'} = \{f\}} \nonumber
\end{align}





\subsubsection{Fluent usefulness}
\label{sec:fluent_usefulness}

As shown in (\ref{eq:embedding_operator_cp}), detecting that a fluent is useful is crucial for operators to be pruned efficiently. However, (\ref{eq:embedding_fuseful_cp}) can only detect fluent usefulness in the very specific case where a fluent's domain is a singleton. It is reasonable to assume that this will not occur often. We thus propose an additional method to detect useful fluents more efficiently, which is based on the following simple lemma that generalizes the above criterion:

\begin{lemma} \label{lem:3}
Let $U \subseteq F$ and $f'_1, \ldots, f'_n \in F'$ be such that $\domain{f'_1} = \ldots = \domain{f'_n} = U$. 
\begin{itemize}
    \item If $\vert U \vert = n$, then every $f \in U$ is useful.
    \item If $\vert U \vert < n$, then no embedding exists.
\end{itemize}

\end{lemma}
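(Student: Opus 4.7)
The plan is to exploit two facts about an embedding $(\isomsymb, \isompsymb)$: first, that $\isomsymb$ is injective by definition, and second, that for any $f' \in F'$, the image $\isom{f'}$ must lie in the current domain $\domain{f'}$ (otherwise the domain computed by constraint propagation would be unsound). Given the hypothesis $\domain{f'_1} = \ldots = \domain{f'_n} = U$, this means that for \emph{every} embedding, the elements $\isom{f'_1}, \ldots, \isom{f'_n}$ are $n$ pairwise distinct elements of $U$.

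For the first claim, suppose $|U| = n$. Then $\{\isom{f'_1}, \ldots, \isom{f'_n}\}$ is a set of $n$ distinct elements contained in $U$ which itself has cardinality $n$, so the two sets must be equal. Hence $U \subseteq \isom{F'}$, and since this reasoning applies to \emph{any} embedding $(\isomsymb, \isompsymb)$, each $f \in U$ belongs to the image of every possible embedding; by the definition of usefulness given in Section~\ref{sec:se_pruning}, each such $f$ is useful.

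For the second claim, suppose $|U| < n$. Then a pigeonhole argument shows that it is impossible to fit $n$ pairwise distinct images $\isom{f'_1}, \ldots, \isom{f'_n}$ into $U$, contradicting the injectivity of $\isomsymb$. Hence no embedding $(\isomsymb, \isompsymb)$ of $P'$ in $P$ can exist.

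I do not anticipate a real obstacle here: both parts are immediate consequences of the injectivity of $\isomsymb$ combined with the soundness invariant that $\isom{f'} \in \domain{f'}$ throughout constraint propagation. The only mild care to take is to make explicit that the conclusion about usefulness must be quantified over \emph{all} embeddings (not just some fixed one), so that the lemma really certifies usefulness as defined in Section~\ref{sec:se_pruning} rather than just membership in the image of one particular embedding.
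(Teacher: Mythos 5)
Your proposal is correct and follows essentially the same argument as the paper: the first claim is the observation that $n$ pairwise distinct images of $f'_1,\ldots,f'_n$ under the injective map $\isomsymb$ must exhaust $U$ when $\vert U\vert = n$ (so $\isomsymb$ is surjective onto $U$ for every embedding), and the second claim is the same pigeonhole contradiction with injectivity. Your explicit remark that the conclusion must be quantified over all embeddings, and that soundness of the domains guarantees $\isom{f'} \in \domain{f'}$, only makes precise what the paper's terser proof leaves implicit.
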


\begin{proof}
In the first case, if $n$ fluents of $U \subseteq F$ have to be the images of $n$ fluents of $F'$ through an injective mapping $\isomsymb$, then $\isomsymb{}$ is surjective on $U$, if such a mapping exists.

In the second case, the restriction $\isomsymb': \{f'_1, \ldots, f'_n\} \rightarrow U$ of $\isomsymb$ should be injective, but can not be, as $n > \vert U \vert$. Thus, such a mapping does not exist.
\end{proof}

A stronger version of the lemma could be obtained by replacing the condition on the equality of the domains by the condition $\domain{f'_i} \subseteq U $ for all $i \in \{1, \ldots, m\}$. However, finding sets $U$ of fluents for which the number of domains that are subsets of $U$ is greater than $\vert U \vert$ is a computationally costly problem. This justifies our choice to use our simpler version of the lemma, along with the fact that our experiments tend to indicate that it is sufficient in practice. This is not surprising: as our STRIPS representations come from planning instances encoded in PDDL, fluents and operators follow structures similar to one another. In particular, in typed PDDL instances, STRIPS fluents that result from PDDL objects of the same type often have identical domains.

Algorithmically, this criterion is used during the execution of AC3. More precisely, we add a token in the queue of variables to revise, that indicates that the above procedure (based on Lemma~\ref{lem:3}) should be called. This token is inserted between the fluent variables, that are revised first, and the operator variables, that are revised next. This ensure\changes{s} that as many useful fluents as possible are detected before any operator is revised, hence maximising the effectiveness of the revisions of operators.

\subsection{Propositional encoding for \stripsembedding{}} \label{sec:encSE}

We now build the formula $\varphi$ such that models of $\varphi$ correspond to embeddings as defined previously (Definition~\ref{def:embedding}). 
The variables of $\varphi$ are:
\begin{align}
    \dom{\varphi} = &\Set{ \fassoc{i}{j} | i \in F, j \in F' } \cup \label{eq:dom_fassoc} \\
    &\Set{ \oassoc{r}{s} | r \in O, s \in O' } \cup \label{eq:dom_oassoc} \\
    &\Set{ \fuseful{i} | i \in F} \cup \label{eq:dom_fuseful} \\
    &\Set{ \oactive{r} | r \in O} \label{eq:dom_oactive}
\end{align}

The propositional variable $\fassoc{i}{j}$ represents that fluent $j \in F'$ is mapped to $i \in F$. Similarly, $\oassoc{r}{s}$ represents that operator $r \in O$ is mapped to $s \in O'$. For the sake of clarity, despite the homomorphisms $\isomsymb{}$ and $\isompsymb{}$ being asymmetric, we use a consistent notation for fluents and operators. Indeed, even though we have $\isomsymb{}: F' \rightarrow F$ and $\isompsymb{}: O \rightarrow O'$, subscripts of our propositional variables represent elements (fluents or operators) of $P$, while superscripts always represent elements of $P'$. This allows us to remain consistent with the notation used in Section~\ref{sec:ssi_algorithmic}.

The notion of \emph{active} operator, introduced earlier
(Section~\ref{sec:se_pruning}), also allows us to make our encoding more succinct. Propositional variable $\oactive{r}$ represents that $r \in O$ is active, which means that there is a useful fluent in its effect. Only active operators have to verify the morphism properties of Definition~\ref{def:embedding}. For similar reasons, we use variables of the form $\fuseful{i}$ for each fluent $i \in F$, which represent the fact that $i$ is \emph{useful}, as introduced in Section~\ref{sec:se_pruning}.

We now show how to build the propositional encoding in itself. The formula $\varphi$ consists in the conjunction of the sets of formulas presented below.

The following formula enforces that $\isomsymb{}$ is correctly defined, and has an image for each element of its domain. We also add a similar formula where $\fassoc{i}{j}$ is replaced by $\oassoc{i}{j}$, in order to ensure an image for each element of the domain of $\isompsymb{}$, adapting the domain and codomain as needed. An additional adaptation that we perform is that we check if the operator is active: if it is not, its image can be chosen to be an arbitrary operator, and indeed there is no need to find the image itself.

\begin{align}
    \label{eq:fluents_image_embed}
    \bigwedge_{j \in F'}
    \left(
        \bigvee_{i \in \domain{j}} \fassoc{i}{j} \hspace{1ex} \wedge 
        \bigwedge_{\substack{i_1, i_2 \in \domain{j} \\ i_1 \not= i_2}} (\neg \fassoc{i_1}{j} \vee \neg \fassoc{i_2}{j})
    \right)
\end{align}

\begin{align}
    \label{eq:operators_images}
    \bigwedge_{r \in O}
    \left(
        \left(
        \oactive{r} \rightarrow 
        \bigvee_{s \in \domain{r}} \oassoc{r}{s} \hspace{0.5ex}
        \right)
        \wedge 
        \bigwedge_{\substack{s, t \in \domain{r} \\ s \not= t}} (\neg \oassoc{r}{s} \vee \neg \oassoc{r}{t})
    \right)
\end{align}

In addition, we enforce the injectivity of $\isomsymb{}$ with the following formula:

\begin{equation}
    \label{eq:fluents_injectivity}
    \bigwedge_{i \in F} \bigwedge_{\substack{j, k \in F' \\ j \not= k}} \neg \fassoc{i}{j} \vee \neg \fassoc{i}{k}
\end{equation}

Let $\absrel{} \in \{ \effpsymb{}, \effmsymb{} \}$ and recall that $\isomsymb{}: F' \rightarrow F$. The following formula enforces that $\isom{\absrel(\isomp{o})} \subseteq \absrel{}(o) \cap \isom{F'}$ for operators $o$ that are active.
It means that, if operator $r \in O$ is active and mapped to $s \in O'$ (\emph{i.e.}, if $\isomp{r} = s$), then for each fluent $j$ in $\absrel{}(s) = \absrel{}(\isomp{r})$, there must exist a fluent $i \in \absrel{}(r) \subseteq F$ such that $\isom{j} = i$ (and thus $i$ is useful).

\begin{equation}
    \bigwedge_{r \in O} \left( \oactive{r} \rightarrow \bigwedge_{s \in O'} \left( \oassoc{r}{s} \rightarrow \bigwedge_{j \in \absrel{}(s)} \left( \bigvee_{i \in \absrel{}(r)} \fassoc{i}{j} \right) \right) \right) \label{eq:morph_effect_inclusion}
\end{equation}

We also apply the above formula with $\absrel{} = \presymb{}$,
which immediately enforces (\ref{eq:pre_morphism}).

Similarly, the following formula enforces that $\isom{\absrel(\isomp{o})} \supseteq \absrel{}(o) \cap \isom{F'}$. It starts by considering an operator $r \in O$ that is active (\emph{i.e.}, that must satisfy conditions (\ref{eq:effp_morphism}) and (\ref{eq:effm_morphism})), and then all fluents $i \in F$ that are useful and in $\absrel{}(r)$ (i.e. 
fluents $i$ that are necessarily in $\absrel{}(r) \cap \isom{F'}$). 
We then find the operator $s \in O'$ that is such that $\isomp{r} = s$ (when $\oassoc{r}{s}$ is not verified, the implication is true). In $\absrel{}(s) = \absrel{}(\isomp{r})$, we must then find some fluent $j$ that is such that $\isom{j} = i$. Hence the formula.

\begin{equation}
    \bigwedge_{r \in O} \left( \oactive{r} \rightarrow \bigwedge_{i \in \absrel{}(r)} \left( \fuseful{i} \rightarrow \bigwedge_{s \in O'} \left( \oassoc{r}{s} \rightarrow \bigvee_{j \in \absrel{}(s)} \fassoc{i}{j} \right) \right) \right) \label{eq:morph_effect_inclusion_reverse}
\end{equation}


Finally, the following formula enforces (\ref{eq:init_morphism}). It considers all fluents $i$ that are useful and in $I$, and requires that they are associated to some fluent in $I'$.
\begin{equation}
    \bigwedge_{i \in I} \left( \fuseful{i} \rightarrow \bigvee_{j \in I'} \fassoc{i}{j} \right)
\end{equation}

Propositional variables of the form $\fuseful{i}$ and $\oactive{o}$ do not immediately represent objects of $P$ or $P'$, but rather properties about the homomorphism that we are building. We express them in the following formulas.

A fluent $i \in F$ is useful if it belongs to $\isom{F'}$. As a consequence, we have, for every $i \in F$:
\begin{equation*}
    \fuseful{i} \leftrightarrow \bigvee_{j \in F'} \fassoc{i}{j}
\end{equation*}
which is, in CNF, 
\begin{equation}
    \left( \neg \fuseful{i} \vee \bigvee_{j \in F'} \fassoc{i}{j} \right) \hspace{2ex} \wedge \hspace{2ex}
    \bigwedge_{j \in F'} \left( \fuseful{i} \vee \neg \fassoc{i}{j} \right)
\end{equation}
%
%
%
In addition, an operator $o \in O$ is active if it satisfies condition (\ref{eq:active_condition}). In the following formula, we write $f \in \textsf{var}(\eff{o})$ to denote the set of fluents $f \in F$ such that either $f \in \eff{o}$ or $\neg f \in \eff{o}$. We thus have:
%
%
\begin{equation*}
    \bigwedge_{o \in O} \ \bigwedge_{i \in \textsf{var}(\eff{o})} \fuseful{i} \rightarrow \oactive{o}
\end{equation*}

Note that this does not perfectly translate our definition of active operator: we do \emph{not} have that $o \in O$ is active \emph{iff} it satisfies (\ref{eq:active_condition}), but only that, if (\ref{eq:active_condition}) is satisfied, then necessarily, it has to be marked active. It suffices for our encoding, as no problem arises when an operator $o$ is marked as active during solving, even though it's not: an inactive operator can be mapped to any other operator. In particular, the set of models of the formula remains unchanged, compared to the case where we enforce strictly the activeness of operators.


\section{Experimental evaluation}
\label{sec:experimental_evaluation}

We implemented Algorithm~\ref{alg:subiso_finder} in Python 3.10, and used it to solve $\stripssubiso{}$, $\stripssubisogeneral{}$ and $\stripsembedding{}$ instances.
We used the parser and translator of the Fast Downward planning system~\cite{helmert2006fast}.
The SAT solver we used was Maple LCM~\cite{luo2017effective}, winner of the main track at SAT 2017. Experiments were run on a machine running Rocky Linux 8.5, powered by an Intel Xeon E5-2667 v3 processor, using at most 8GB of RAM, 4 threads, and 300 seconds per test.
The code, sets of benchmarks, as well as our full results are available online
\footnote{\url{https://github.com/arnaudlequen/PDDLIsomorphismFinder}}.
%

Our set of benchmarks is formed out of ten sets found in previous International Planning Competitions, namely Barman, Blocks, Ferry, Gripper, Hanoi, Rovers, Satellite, Sokoban, TSP and Visitall. For each of these domains, we created what we call \emph{STRIPS matching instances}, which are pairs of instances of the same domain. We did this for each possible pair of planning instances of each considered domain. Consequently, each domain of our set of benchmarks has a number of STRIPS matching instances that is quadratic in the number of planning instances in the associated IPC domain. We did not generate any new instance.

A STRIPS matching instance is a valid input for problems \stripssubiso{}, \stripssubisogeneral{} and \stripsembedding{} simultaneously. We thus evaluated our algorithm adapted for all three problems on the same set of benchmarks. We only report domains for which at least one instance can be solved. In particular, as no instance of Rovers was solved for any of the problems we considered, the domain does not appear in the results.


The goal of the experiments is twofold. First, the aim is to demonstrate that, despite the theoretical hardness of the problems, it is possible to find (homogeneous) subinstance isomorphisms and embeddings in reasonable time for problems of non-trivial size \-- or at least prove the absence of such mappings. Second, the goal is to show the efficiency of the pruning techniques presented in Section~\ref{sec:ssi_pruning} and Section~\ref{sec:se_pruning}, that is to say, to prove that the additional cost of the preprocessing is outbalanced by the speed-up it provides to the search phase. 

\subsection{Absolute coverage}

\begin{table}
    \centering
    \caption{Number of instances of \stripssubisogeneral{}, \stripssubiso{} and \stripsembedding{} on which the implementation of our algorithm terminates within 300 seconds. For each problem, the first pair of columns shows the number of STRIPS matching instances solved with and without the constraint propagation-based preprocessing, respectively. The last column shows the average percentage of clauses that were eliminated from the propositional encoding, thanks to the pruning step. When the pruning step allowed us to conclude immediately (thus skipping the encoding into SAT altogether), we consider that all clauses have been simplified. As no \stripssubisogeneral{} or \stripssubiso{} instance of Barman could be decided, the average number of simplified clauses is not applicable.
    }
    \begin{tabular}{l||rr|r||rr|r||rr|r}
    	\toprule
    	\multirow{2}{*}{Domain} & \multicolumn{3}{c||}{\stripssubisogeneral{}} & \multicolumn{3}{c||}{\stripssubiso{}} & \multicolumn{3}{c}{\stripsembedding{}} \\
    	 & CP & NoCP & Av. Simp. & CP & NoCP & Av. Simp. & CP & NoCP & Av. Simp. \\
    	\midrule 
    	\textbf{barman} & 0 & 0 & N/A\% & 0 & 0 & N/A\% & 19 & 0 & 100.0\% \\ 
    	\textbf{blocks} & 125 & 61 & 81.2\% & 118 & 57 & 81.3\% & 261 & 136 & 80.9\% \\ 
    	\textbf{ferry} & 100 & 0 & 65.9\% & 134 & 0 & 66.1\% & 199 & 161 & 90.0\% \\ 
    	\textbf{gripper} & 196 & 134 & 70.1\% & 62 & 60 & 69.5\% & 210 & 73 & 89.0\% \\ 
    	\textbf{hanoi} & 48 & 43 & 41.5\% & 52 & 47 & 41.5\% & 153 & 109 & 84.6\% \\ 
    	\textbf{satellite} & 26 & 16 & 87.1\% & 30 & 16 & 93.6\% & 167 & 41 & 97.2\% \\ 
    	\textbf{sokoban} & 189 & 0 & 100.0\% & 189 & 0 & 100.0\% & 73 & 0 & 100.0\% \\ 
    	\textbf{tsp} & 332 & 324 & 34.3\% & 263 & 255 & 38.5\% & 444 & 252 & 95.1\% \\ 
    	\bottomrule
    \end{tabular}
    \label{tab:eval_cp}
\end{table}
The coverage of our implementation on our set of benchmarks is shown in Table~\ref{tab:eval_cp} for $\stripssubiso{}$, $\stripssubisogeneral{}$, and $\stripsembedding{}$. The table shows the absolute and relative numbers of instances of each problem on which our implementation terminates within the time and memory cutoffs. 

The first point we notice is that problems $\stripssubisogeneral{}$ and $\stripssubiso{}$ are often closely comparable in terms of hardness, except for some particular domains. These include domains TSP and Gripper, for which 25\% and 216\% more instances are solved when imposing no condition on the initial state and goal. 
For both domains, we believe this to be due to the additional constraints in \stripssubiso{}, that turn all instances of these domains into negative ones. Not only are these negative instances harder for the SAT solver to decide, but also the pruning step is of little to no help. This is a consequence of the abundance of symmetries in these domains, on which we elaborate in the next section.

Finding embeddings, however, is significantly different from finding subinstance isomorphisms. In our entire set of benchmarks, our algorithm could not identify a single positive \stripsembedding{} instance, except for STRIPS matching instances that were built upon two identical STRIPS instances. This leads us to believe that embeddings between real-world planning instances are rare, which does not necessarily render the notion of embedding impractical. Indeed, a small embeddable unsolvable instance could possibly be  synthesized out of a larger unsolvable instance, and serve as a certificate of unsolvability, for example. 
We can also note that the IPC planning instances have been crafted to be
solvable, which precludes the application of fast detection of
unsolvability via a small embeddable unsolvable instance.


In Table~\ref{tab:absolute_size}, we present a few results on the absolute sizes of the problems that we solved during our experiments, within the time and memory limits.
For a STRIPS planning problem $P = \langle F, I, O, G \rangle$, we denote $\vert P \vert = \vert F \vert + \vert O \vert$. As a STRIPS matching instance has two main dimensions, represented by the respective sizes of the planning instances that constitute it, we present two different ways of measuring the size of a matching     instance.
In the first set of columns of Table~\ref{tab:absolute_size}, the sum of both planning instances is considered, and we report the size of the STRIPS matching instance that maximizes that sum. With this metric, the larger instance is often disproportionately bigger than the smaller instance. This imbalance can be explained by the fact that the encoding into a propositional formula is of time and size $\mathcal{O}(\vert O \vert \cdot \vert O' \vert \cdot \vert F \vert \cdot \vert F' \vert)$, for all three problems.
This kind of situation often represents the limit of what could be dealt with with our program, when trying to extract the solutions of a small problem out of a larger database for the domain (in the case of \stripssubiso{})
or detecting unsolvability of a large instance by embedding a small instance which is known to be unsolvable (in the case of \stripsembedding{}). 

In the second set of columns, to measure the size of an \stripssubiso{} or \stripssubisogeneral{} instance, we consider the lexicographic order on pairs $(\vert P \vert, \vert P' \vert)$, and we report the biggest problem with respect to that metric.
Likewise, we consider the lexicographic order on pairs $(\vert P' \vert, \vert P \vert)$ for \stripsembedding{}.
This gives an order of magnitude of the size of the biggest problem that can be identified as a subinstance of another one, under ideal conditions on this other instance. 


\begin{figure}[h!]
    \centering
    \subfloat[\centering \stripssubiso{} instances with positive outcome]
    {\includegraphics[width=0.42\textwidth]{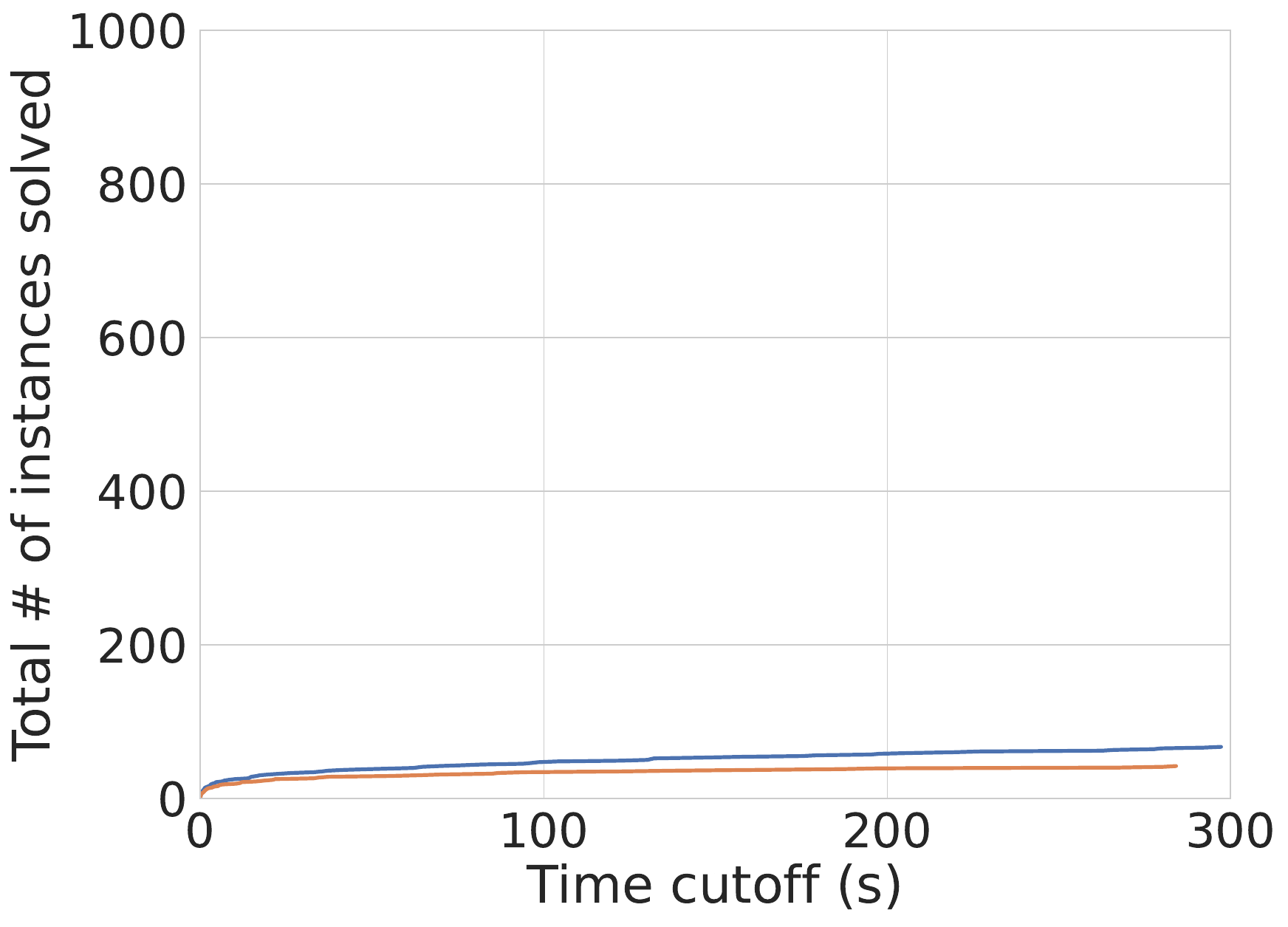}\label{fig:solved_vs_cutoff:ssi_positive}}%
    \qquad
    \subfloat[\centering \stripssubiso{} instances with negative outcome]
    {\includegraphics[width=0.42\textwidth]{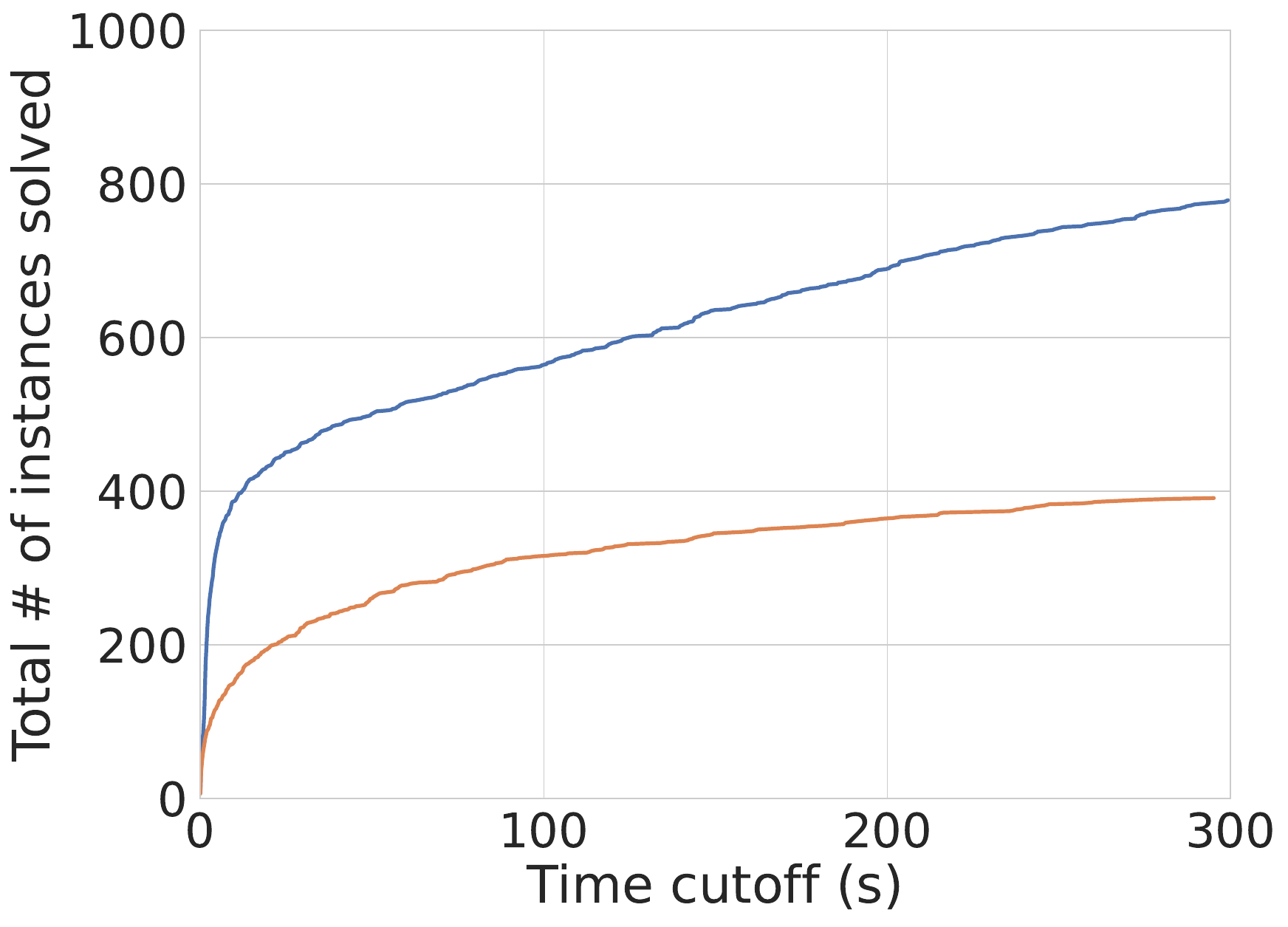}\label{fig:solved_vs_cutoff:ssi_negative}}%
    
    \vspace{3ex}
    
    \subfloat[\centering \stripssubisogeneral{} instances with positive outcome]
    {\includegraphics[width=0.42\textwidth]{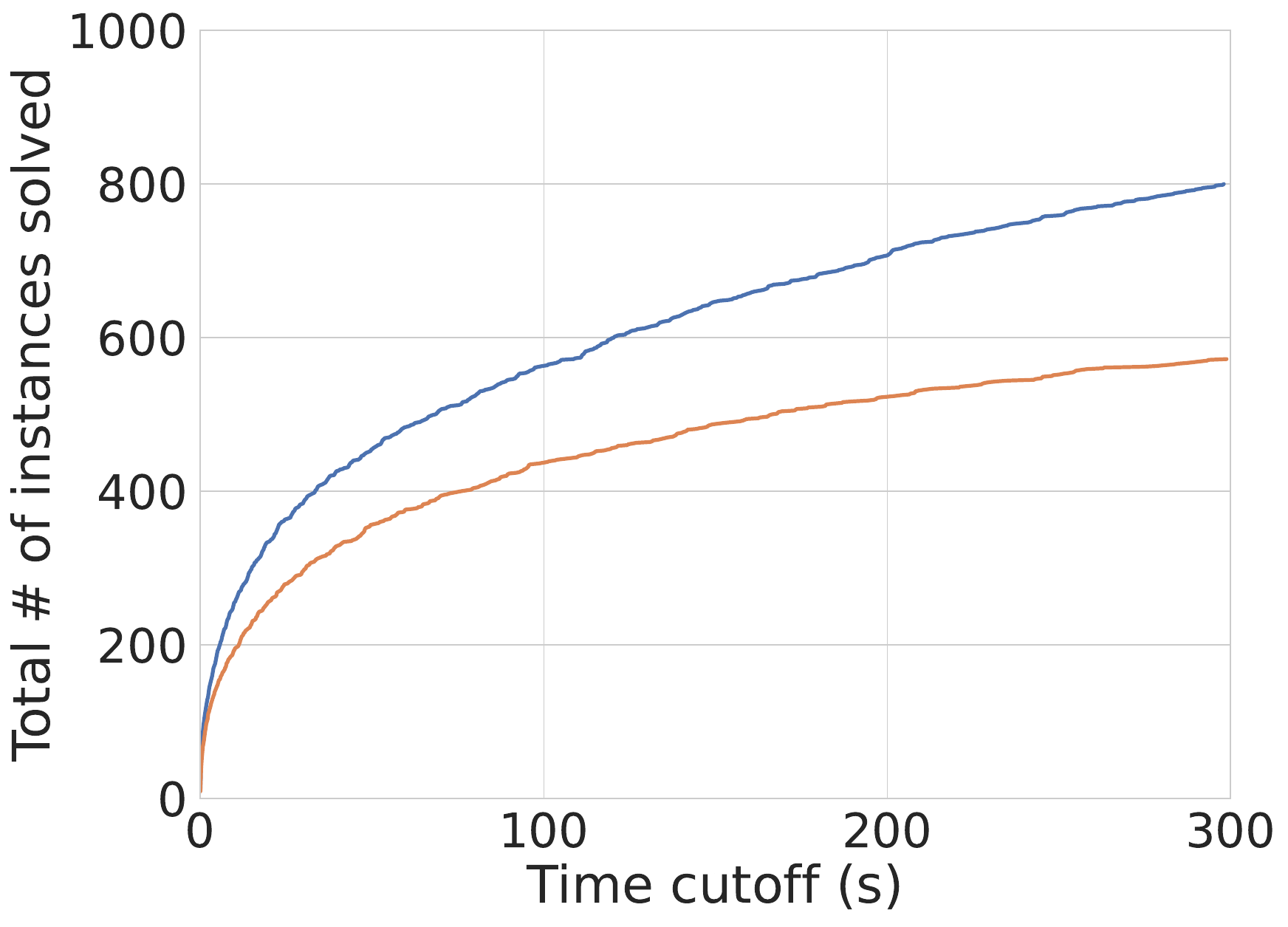}\label{fig:solved_vs_cutoff:ssig_positive}}%
    \qquad
    \subfloat[\centering \stripssubisogeneral{} instances with negative outcome]
    {\includegraphics[width=0.42\textwidth]{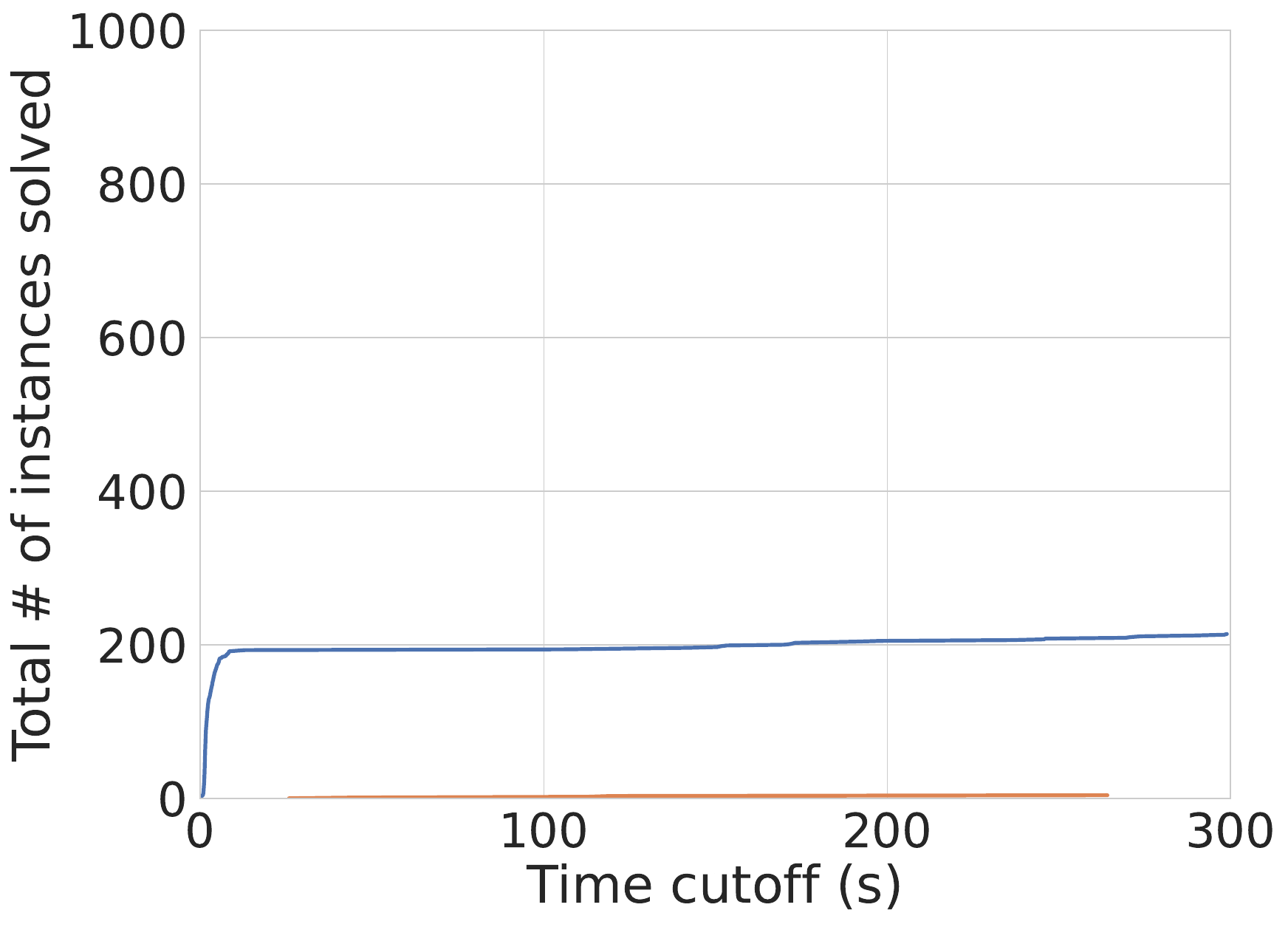}\label{    fig:solved_vs_cutoff:ssig_negative}}%
    
    \vspace{3ex}
    
    \subfloat[\centering \stripsembedding{} instances with positive outcome]
    {\includegraphics[width=0.42\textwidth]{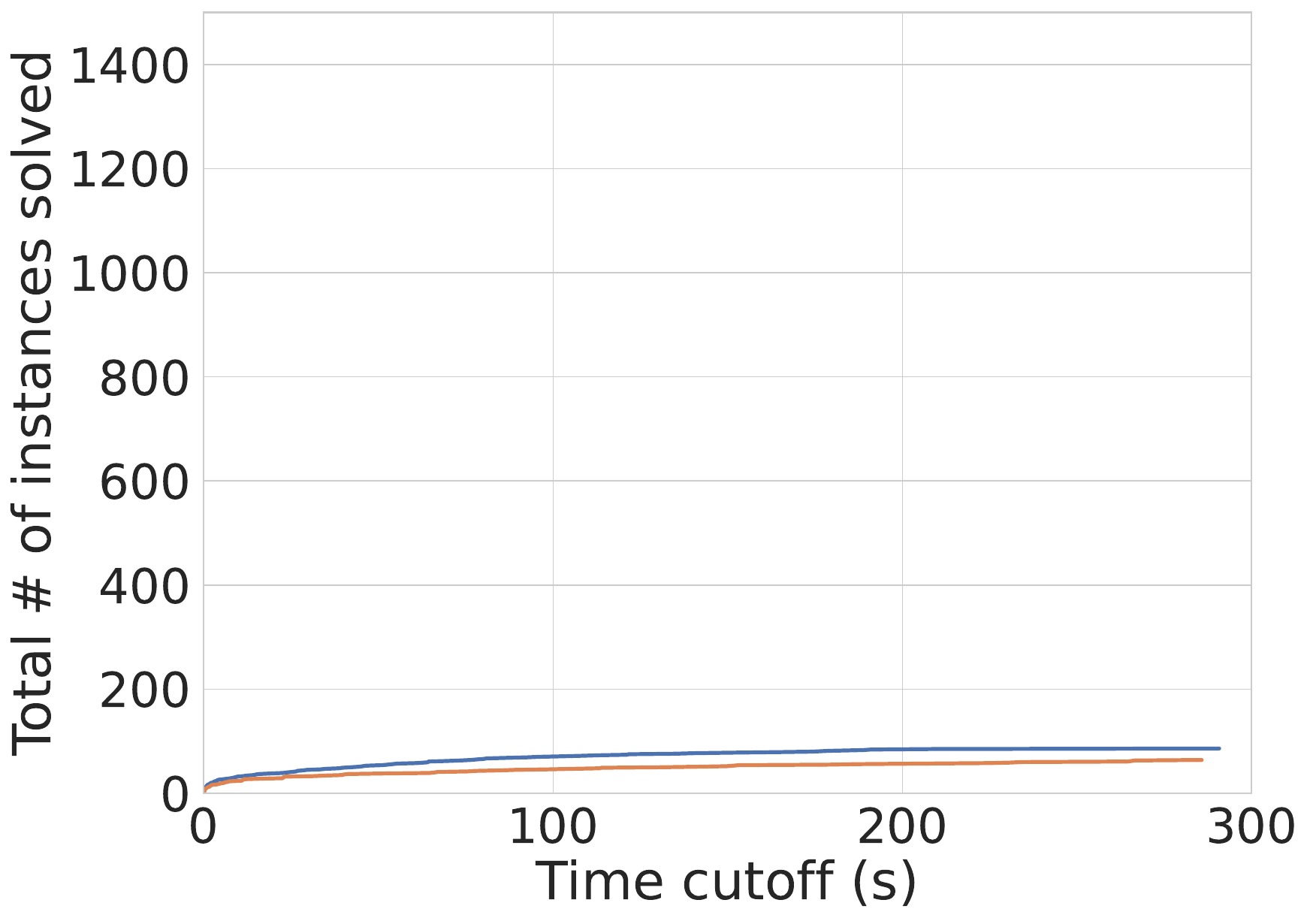}\label{fig:solved_vs_cutoff:se_positive}}%
    \qquad
    \subfloat[\centering \stripsembedding{} instances with negative outcome]
    {\includegraphics[width=0.42\textwidth]{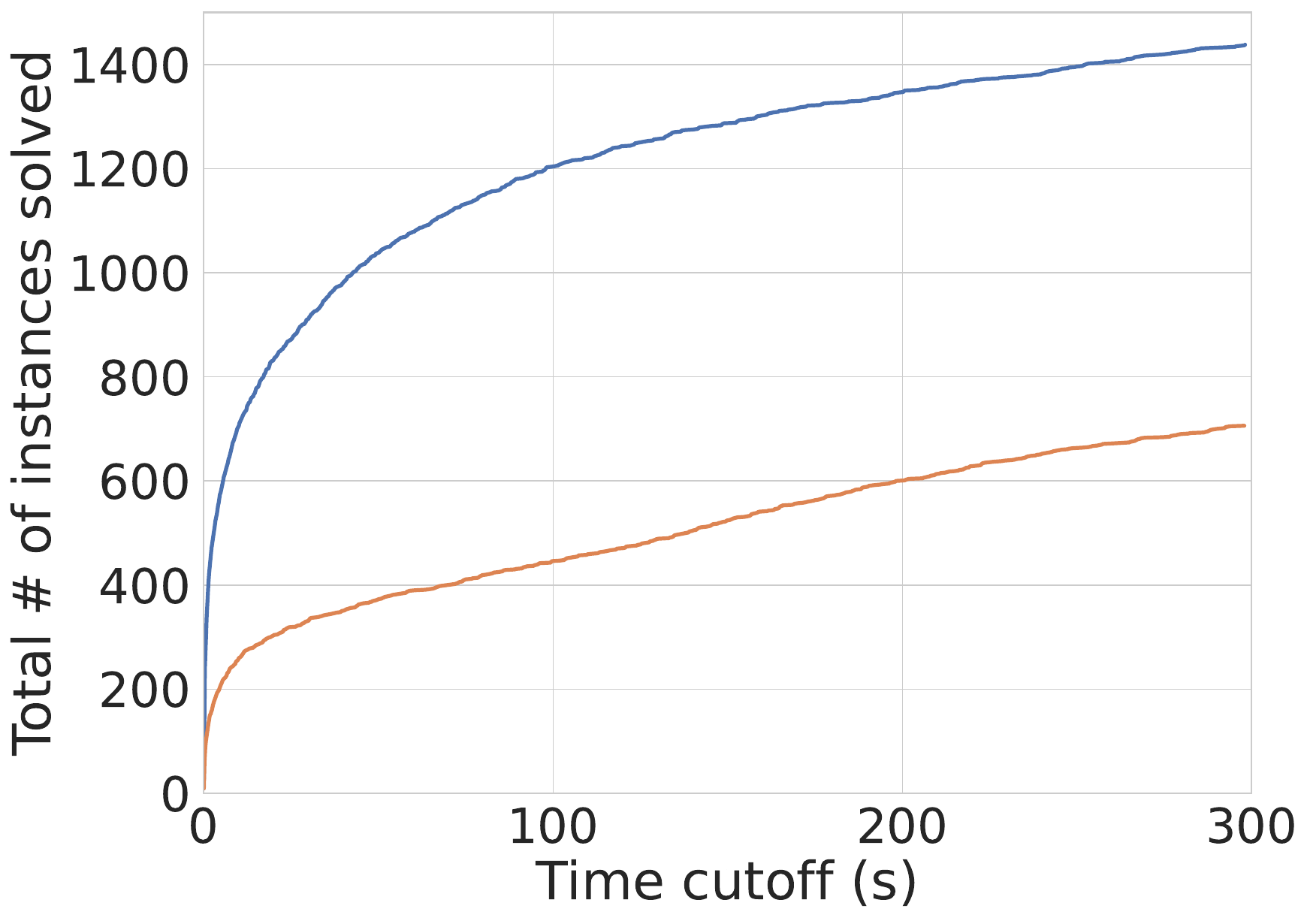}\label{    fig:solved_vs_cutoff:se_negative}}%
    
    \caption{Number of \stripssubiso{} (top), \stripssubisogeneral{} (middle) and \stripsembedding{} (bottom) instances that can be solved by our implementation, as a function of the time cutoff. Blue/orange (upper/lower) curves correspond respectively to with/without pruning (constraint propagation preprocessing). \changes{The left column reports positive instances (for which a homomorphism could be found), while the right column reports negative instances (for which no homomorphism exists).}}    
    \label{fig:solved_vs_cutoff}
\end{figure}

    

\begin{table}
    \centering
    \caption{Sizes of the biggest instances that can be solved by our implementation within the time and memory limits, for $\stripssubisogeneral{}$, $\stripssubiso{}$ and \stripsembedding{}, as well as statistics on the set of benchmarks. For each problem, in the first set of columns, we consider the sum of the sizes of the planning instances that constitute the STRIPS matching instance. In the second set, we consider the size of the smallest planning instance among the pair that constitutes the instance. The last column of the second table gives statistics for the whole domain, by reporting the average number of operators per fluent.}
    \begin{tabular}{l||rrr|rr||rrr|rr}
    	\toprule
    	\multirow{3}{*}{Domain} & \multicolumn{5}{c||}{\stripssubisogeneral{}} & \multicolumn{5}{c}{\stripssubiso{}} \\
    	 & \multicolumn{3}{c|}{Maximum sum} & \multicolumn{2}{c||}{Max $\vert P \vert$} & \multicolumn{3}{c|}{Maximum sum} & \multicolumn{2}{c}{Max $\vert P \vert$} \\	
    	       & $\vert P \vert$ & $\vert P' \vert$ & Sum
    	       & $\vert P \vert$ & $\vert P' \vert$
    	       & $\vert P \vert$ & $\vert P' \vert$ & Sum
    	       & $\vert P \vert$ & $\vert P' \vert$ \\
    	\midrule
		\textbf{blocks} & 73 & 3265 & 3338 & 505 & 586 & 73 & 3265 & 3338 & 505 & 505 \\
    	\textbf{ferry} & 434 & 730 & 1164 & 542 & 562 & 442 & 730 & 1172 & 562 & 562 \\
    	\textbf{gripper} & 464 & 604 & 1068 & 492 & 492 & 576 & 576 & 1152 & 576 & 576 \\
    	\textbf{hanoi} & 23 & 2853 & 2876 & 358 & 358 & 23 & 4887 & 4910 & 358 & 358 \\
    	\textbf{satellite} & 184 & 1845 & 2029 & 644 & 644 & 103 & 2187 & 2290 & 644 & 696 \\
    	\textbf{sokoban} & 2617 & 2703 & 5320 & 2617 & 2703 & 2617 & 2703 & 5320 & 2617 & 2703 \\
    	\textbf{tsp} & 108 & 990 & 1098 & 378 & 418 & 108 & 990 & 1098 & 418 & 418 \\
    	\bottomrule
    \end{tabular}
    
    \vspace{2ex}
    
    \begin{tabular}{l||rrr|rr||r}
    	\toprule
    	\multirow{3}{*}{Domain} & \multicolumn{5}{c||}{\stripsembedding{}} & \\
    	 & \multicolumn{3}{c|}{Maximum sum} & \multicolumn{2}{c||}{Max $\vert P' \vert$} & \multicolumn{1}{c}{Average}\\	
    	       & $\vert P \vert$ & $\vert P' \vert$ & Sum
    	       & $\vert P \vert$ & $\vert P' \vert$ & Op/Fluent\\
    	\midrule
    	\textbf{barman} & 3184 & 3184  & 6368 & 3184 & 3184 & 5.87 \\
    	\textbf{blocks} & 5290 & 73  & 5363 & 766 & 766 & 1.25 \\
    	\textbf{ferry} & 730 & 730  & 1460 & 730 & 730 & 1.09 \\
    	\textbf{gripper} & 604 & 604  & 1208 & 604 & 604 & 1.25 \\
    	\textbf{hanoi} & 13682 & 60  & 13742 & 625 & 625 & 8.04 \\
    	\textbf{satellite} & 646540 & 696  & 647236 & 15712 & 10895 & 84.77 \\
    	\textbf{sokoban} & 2703 & 1881  & 4584 & 2703 & 1881 & 0.72 \\
    	\textbf{tsp} & 990 & 378  & 1368 & 504 & 504 & 6.78 \\
    	\bottomrule
    \end{tabular}
    \label{tab:absolute_size}
\end{table}

\subsection{Impact of the preprocessing}

Our first observation is that the preprocessing step almost never holds back the algorithm: almost all instances of our test sets that can be solved without preprocessing are also solved when the preprocessing step is performed.
\changes{In all of the sets of benchmarks for which we report the results here, no significant slowdown of our algorithm was incurred by the preprocessing.}

Furthermore, in many sets of benchmarks and for all three problems, the preprocessing greatly improves the overall performance of our implementation, so much so that some previously infeasible domains are now within the range of our algorithm. Such extreme cases include Sokoban, for which our algorithm is powerless without the pruning step: all 189 (resp. 73) instances solved by our implementation are outside the range of the preprocessing-less version of the algorithm, for \stripssubisogeneral{} and \stripssubiso{} (resp. \stripsembedding{}). In most cases, however, we observe a significant increase in the coverage of the algorithm, which remains nonetheless within the same order of magnitude. For example, for \changes{the Satellite domain} in the case of \stripssubiso{}, 30 instances are solved when constraint propagation is enabled, whereas only 16 can be settled without it.

More specifically, in almost every case, the preprocessing step leads to a reduction of the size of the propositional encoding. This is shown by the columns labeled ``Av. Simp.'' in Table~\ref{tab:eval_cp}, which represent the average proportion of clauses that are simplified as a consequence of the pruning step. Note that these particular columns only consider the instances for which the algorithm did not terminate before the end of the preprocessing, and thus had to resort to a propositional encoding. We remark that the highest percentages of simplified clauses are found in domains that contain little to no symmetries. For example, in Rovers, fluents represent entities that often have different types, and that are affected in different ways by operators. For instance, operators of the form \lstinline{navigate(rover, x, y)} have a unique profile, and are not numerous. As a consequence, their respective domains remain small, which is something our algorithm makes the most of, especially in the case of subinstance isomorphisms.

On the contrary, for domains that contain lots of symmetries, the pruning step does not remove a significant number of associations, be it for \stripssubisogeneral{}, \stripssubiso{} or \stripsembedding{}. This is the case in Hanoi, where all operators have the same profile: except for the information provided by the initial and goal states, all disks are interchangeable, which does not allow our preprocessing to draw any conclusive result. The only information that can guide the search is encoded in the initial state, which we believe partly explains the slightly greater coverage of $\stripssubiso{}$ over $\stripssubisogeneral{}$.


In some instances of our set of benchmarks, pruning alone suffices to find negative matching instances \changes{(i.e. instances that have no solution)}, be it for subinstance isomorphisms or embeddings. This happens when the majority of associations between fluents or between operators are ruled out, and the domain of some variable becomes empty. In these cases, our algorithm can return UNSAT prematurely, skipping the search phase altogether. As a direct consequence, our algorithm is most effective in detecting negative instances of the problems we consider.
This is why the pruning step allows us to significantly increase our coverage on STRIPS matching instances that are negative, as shown in Figure~\ref{fig:solved_vs_cutoff}, while our performance on positive instances is more modest, although significant.

%
In Table~\ref{tab:time_decomposition}, we can also see that, in the majority of the domains we consider, the additional time required by the constraint propagation phase is negligible compared to the rest of the algorithm. In fact, be it in domains where it prunes many associations or in domains where its efficiency is limited, constraint propagation rarely takes more than a handful of seconds. As a consequence, some instances that would otherwise require a substantial amount of time are now solved almost immediately.
This is made clear in Figure~\ref{fig:solved_vs_cutoff:ssi_negative}, for instance:  solving the easiest 400 negative \stripssubiso{} instances requires 5 minutes when pruning is not enabled, while it takes a handful of seconds when pruning is enabled.

The most notable exception to this, however, is domain Satellite, for which the preprocessing step seems to be the most costly in time. However, it is also crucial, as the  domain is generally hard for the preprocessing-less algorithm, which underperforms on this domain compared to other sets of benchmarks. In fact, for \stripsembedding{}, in about 65\% of this set of benchmarks, the preprocessing is not only necessary, but also sufficient to find that the instance is unsolvable: in most cases, it allows the algorithm to cut short, and detect a negative instance right away. When it comes to positive instances, only 2 additional positive instances are found with constraint propagation activated. Thus, most instances do not even require the SAT solving phase, hence the lower average time spent in compilation or solving. A similar phenomenon occurs for Barman, in the case of \stripsembedding{}.

\begin{table}[ht]
    \centering
    \caption{Average time, in seconds, spent in each of the main three steps of the algorithm: pruning (CP), compilation to SAT, and solving, respectively. The last column summarizes the average total running time of the algorithm for each domain of each problem. We only report instances that were successfully solved (either positively or negatively): results for \stripssubisogeneral{}, \stripssubiso{} and \stripsembedding{} are thus non-comparable, and Barman does not have statistics for \stripssubisogeneral{} nor \stripssubiso{}, as no such instance was solved.}
    {
    \begin{tabular}{l||rrr|r||rrr|r}
    	\toprule
    	\multirow{2}{*}{Domain} & \multicolumn{4}{c||}{\stripssubisogeneral{}} & \multicolumn{4}{c}{\stripssubiso{}} \\
    	 & CP & Comp. & Solving & Total & CP & Comp. & Solving & Total \\ 
    	\midrule 
    	\textbf{barman} & - & - & - & - & - & - & - & - \\
    	\textbf{blocks} & 0.3 & 58.3 & 32.4 & 90.7 & 0.3 & 52.4 & 40.8 & 93.3 \\
    	\textbf{ferry} & 0.2 & 76.3 & 114.3 & 190.7 & 0.2 & 78.3 & 106.8 & 185.2 \\
    	\textbf{gripper} & 0.2 & 26.2 & 27.6 & 53.8 & 0.1 & 16.4 & 28.4 & 44.9 \\
    	\textbf{hanoi} & 0.2 & 28.9 & 38.9 & 67.8 & 0.2 & 40.5 & 29.7 & 70.3 \\
    	\textbf{satellite} & 0.4 & 54.9 & 14.8 & 69.8 & 47.8 & 29.2 & 5.0 & 81.8 \\
    	\textbf{sokoban} & 2.5 & 0.0 & 0.0 & 2.5 & 2.5 & 0.0 & 0.0 & 2.5 \\
    	\textbf{tsp} & 0.1 & 26.8 & 21.4 & 48.2 & 0.1 & 14.4 & 19.3 & 33.8 \\
    	\bottomrule
    \end{tabular}

    \vspace{2ex}

    \begin{tabular}{l||rrr|r}
    	\toprule
    	\multirow{2}{*}{Domain} & \multicolumn{4}{c}{\stripsembedding{}} \\
    	 & CP & Comp. & Solving & Total \\ 
    	\midrule 
    	\textbf{barman} & 6.6 & 0.0 & 0.0 & 6.6 \\ 
    	\textbf{blocks} & 2.0 & 73.3 & 22.6 & 96.1 \\ 
    	\textbf{ferry} & 1.3 & 41.4 & 8.1 & 49.6 \\ 
    	\textbf{gripper} & 0.6 & 15.5 & 1.4 & 16.9 \\ 
    	\textbf{hanoi} & 0.4 & 51.1 & 15.5 & 66.8 \\ 
    	\textbf{satellite} & 34.1 & 25.3 & 2.8 & 62.0 \\ 
    	\textbf{sokoban} & 1.0 & 0.0 & 0.0 & 1.0 \\ 
    	\textbf{tsp} & 0.1 & 26.3 & 8.1 & 34.4 \\ 
    	\bottomrule
    \end{tabular}
    }
    \label{tab:time_decomposition}
\end{table}

\section{Related work}

Hor\v{c}{\'{\i}}k and Fi\v{s}er\cite{horcik2021endomorphisms} proposed a notion of endomorphism for classical planning tasks in finite domain representation (FDR). Mathematically, an endomorphism is a homomorphism where the domain and the codomain are identical. As STRIPS is a special case of FDR where variables have binary domains, their definition of an \emph{FDR endomorphism} is directly comparable to (a special case of) our notions of homomorphisms. In essence, compared to a subinstance isomorphism, 
an FDR endomorphism only requires that $\pre{\isomp{o}} \subseteq \isom{\pre{o}}$, while we require equality (even though in practice, the CSP encoding Hor\v{c}{\'{\i}}k and Fi\v{s}er propose also requires it). In addition, our mapping between fluents is injective, whereas FDR endomorphisms do not impose an 
\emph{injective} mapping between facts (i.e. literals, corresponding to variable-value assignments) but they do impose that each fact
maps to a fact representing an assignment to the same variable. In the case of boolean variables, such as \changes{the ones} we study in this paper,
this means that an endomorphism may map a fluent $p$ to its negation $\neg p$. However, if we assume all goals and preconditions are positive
(as we have done in this paper) and given that goals and preconditions must map respectively to goals and preconditions, 
this prevents fluents occurring in goals or preconditions mapping to negative fluents. Of course, generalising our notion of 
subinstance isomorphism to allow positive fluents to map to negative fluents would be an interesting extension if there may be
negative goals or preconditions.
The main difference between our notions resides in their intent: while we wish to keep the planning model intact, Hor\v{c}{\'{\i}}k and Fi\v{s}er try to reduce its size as much as possible, by folding it over itself and deleting redundant operators. 

Shleyfman \emph{et al.}~\cite{shleyfman2015heuristics} defined the notion of \emph{structural symmetry} for STRIPS planning instances. A structural symmetry is, in essence, a STRIPS Isomorphism as we defined, from an instance $\Pi$ to itself, except that Equation~(\ref{eq:def_strips_iso_i}) guaranteeing the stability of the initial state is not enforced. They show that a wide range of heuristics are invariant by structural symmetry, albeit their study excludes abstraction-based heuristics. Sievers \emph{et al.}~\cite{sievers2017structural} extended the notion of structural symmetry to PDDL instances, and defined it as a permutation of constants, variables and predicates that conserves the semantics of the action schemas, as well as the initial and goal states. They prove that a structural symmetry of a PDDL instance induces a structural symmetry on the STRIPS instance obtained through grounding. Later, R\"oger \emph{et al.}~\cite{roeger2018symmetry} showed that structural symmetries on PDDL instances can be leveraged to optimize its grounding, by pruning irrelevant operators through a relaxed reachability analysis.


A wide variety of notions of homomorphisms are defined not on the planning model itself, as we do in this paper, but on structures derived from it. Such structures notably include Labeled Transitions Systems (LTS), which are, in our context, graphs where edges are labeled with the names of operators, a node is designated as the initial state, and a set of nodes are designated as goal states. LTS's are naturally used to represent the state-space underlying any STRIPS instance. Symmetries of this state-space have also been studied, although through more compact representations, such as the Problem Description Graph (PDG)~\cite{pochter2011exploiting,shleyfman2015heuristics}.

Abstractions aim at creating equivalence classes between the states of the LTS $\mathcal{T}$, in order to build an abstract LTS $\mathcal{T}^\alpha$ on which some desirable properties and features of the original LTS $\mathcal{T}$ are carried over. Bäckström and Jonsson~\cite{backstrom2012abstracting} proposed a framework for analysing LTS-based abstractions, and understanding how some mathematical properties of an abstraction translate in terms of the structure of the set of paths it carries over to the abstract LTS. Hor\v{c}\'{i}k and Fi\v{s}er studied the connection between the notions of FDR endomorphism previously mentioned, and the notion of LTS endomorphism, which they introduced in the same article~\cite{horcik2021endomorphisms}. The aim is similar to their previous method: computing a homomorphism from (a factored version of) the state-space to itself allows them to remove redundant operators. 

In a preprocessing step, Pattern Databases (PDBs)~\cite{edelkamp2001planning} also reduce the size of the LTS underlying 
an FDR model by mapping it to 
a more concise one, obtained through \emph{syntactic projection}.
Syntactic projections take as input a set of fluents $S \subseteq F$, and forget all other fluents of the planning instance $P$, resulting in an  instance $P_{\mid S}$.
In our terminology, there is an embedding from the syntactic projection $P_{\mid S}$ to the original problem $P$: the mappings $\isomsymb,\isompsymb$ that define this embedding are both identity. Embedding can be said to be a more general notion, in that when $P'$ embeds in $P$, the problems $P'$ and $P_{\mid \isom{F'}}$ (where $F'$ is the set of fluents of $P'$) are not necessarily identical, notably because only inclusion rather than equality is imposed between preconditions, goals and the inital state (Equations~(\ref{eq:pre_morphism}), (\ref{eq:goal_morphism}), and (\ref{eq:init_morphism})).

More generally, computing the perfect heuristic on an abstract LTS $\mathcal{T}^\alpha$ is a common technique. The projection of the problem $\Pi$ onto a subset of its fluents induces an LTS that is an abstract search space for $\Pi$, on which the perfect heuristic can be computed, and extrapolated to the states of $\Pi$~\cite{edelkamp2001planning}. In addition to PDBs, methods based on abstractions of LTS include the Merge \& Shrink framework~\cite{sievers2021merge}, or Cartesian abstractions~\cite{seipp2018counterexample}.
However, in general, these works focus on computing the most relevant abstraction of a given structure, and the abstract LTS that results. Ultimately, the goal is to perform some computations on an abstract LTS, as they would have been impossible to achieve on the original LTS. In contrast, our work focuses on finding homomorphisms between pre-existing STRIPS models, without even considering the underlying state-space explicitly. In particular, the mappings that we create and work with are not necessarily surjective.

The embedding of an unsolvable planning instance $\Pi$ into another instance $\Pi'$ is, in itself, a proof that $\Pi'$ is unsolvable. Even more so, when $\Pi$ is small enough to be humanly understandable, it provides an explanation of why $\Pi'$ is, in fact, unsolvable. Even though research in planning was historically focused on finding plans, there has been, in the recent years, a surge in interest in unsolvability detection. A wealth of techniques have been tailored to better handle unsolvable instances. This includes heuristics such as Merge \& Shrink~\cite{hoffmann2014distance}, PDBs~\cite{torralba2016sympa}, or even logic-based heuristics~\cite{staahlberg2021learning}. Other such techniques include, for instance, dead-end detection~\cite{cserna2018avoiding} and traps~\cite{lipovetzky2016traps}. In addition to adapting already existing methods, some other work propose new, more specific approaches, sometimes inspired from other domains where unsatisfiability detection is more crucial, such as constraint programming~\cite{backstrom2013fast} or propositional logic satisfiability~\cite{steinmetz2017state}, but also sometimes more ad-hoc~\cite{christen2022detecting}.

In the case where an instance is solvable, a plan acts as a certificate of solvability. But in the case of unsolvable instance, there exists no immediate counterpart. Akin to proofs of unsolvability found in the SAT community, a few methods have been recently proposed to certify plan non-existence: Eriksson \emph{et al}~\cite{eriksson2017unsolvability} have shown that a form of inductive certificates are suitable for planning instances, and later extended their work with a more flexible proof system~\cite{eriksson2018proof}. Both methods can be adapted on existing planning systems. Other works have focused on explaining why a planner failed to find a plan, by providing humanly intelligible reasons and excuses~\cite{sreedharan2019can,gobelbecker2010coming}. \changes{In this article, we showed that an embedding from $P'$ to $P$, given that $P'$ is a smaller planning instance that was proven to be unsolvable, can serve as a certificate of unsolvability for $P$. More than that, if $P'$ is small enough that the reasons why it is unsolvable are clear, an embedding can help forge an intuition on why $P$ itself is unsolvable.}








\section{Conclusion}

In this article, we introduced the problem \stripsiso{} of finding an isomorphism between two planning problems, and showed that it is $\GI{}$-complete. Afterwards, we introduced the notions of subinstance isomorphism and embedding, as well as the associated problems \stripssubiso{} and \stripssubisogeneral{}, on the one hand, and \stripsembedding{} on the other. In addition to proving the \NP{}-completeness of these problems, we proposed a generic algorithm for them, based on constraint propagation techniques and a reduction to SAT.
We chose to use a reduction to SAT to take advantage of the efficiency of state-of-the-art SAT solvers, but, of course, a direct coding into CSP could take advantage of the automatic constraint propagation of CSP solvers.

The experimental evaluation of this algorithm shows that traditional constraint propagation in a preprocessing step can greatly improve the efficiency of SAT solvers.
Even though some planning domains benefited greatly from an almost costless and effective pruning step, some others were left almost unchanged after the (fortunately short) preprocessing. A common characteristic of these latter problems is that they all have a significant amount of symmetries, which might be harnessed to some extent. Recent theoretical results seem to indicate that, in practice, this could be done with a reasonable computational effort. Indeed, finding a set of generators for the group of symmetries of a planning instance is a \GI{}-complete problem~\cite{shleyfman2021}, as well as the first step in the direction of breaking symmetries in the search for a homomorphism between planning instances. 
More generally, as the study of symmetries in constraint programming is a well-established field~\cite{DBLP:journals/constraints/CohenJJPS06}, we believe this avenue of future research to be promising.

On a more general note, it remains an interesting open question to identify which characteristics of problems in NP make them amenable to this hybrid CP-SAT approach.

Subinstance isomorphism and embedding define two distinct partial orders between
STRIPS instances. This could help to perceive hidden structure
in the space of all planning instances. In this vein, an interesting avenue for future
research is the definition of formal explanations of (un)solvability via
minimal solvable isomorphic subinstances or minimal unsolvable embedded subinstances.

\bibliographystyle{plain}
\bibliography{wileyNJD-AMA}%

\changes{\appendix

\section{Proofs of the relations between embeddings and projections}
\label{appendix:embeddings_proofs}
\vspace*{12pt}}

\changes{
\noindent\textbf{Lemma~\ref{lem:abstraction_transition_preservation}.}
\emph{
    Suppose that $(\isomsymb{}, \isompsymb{})$ is an embedding of $P'$ in $P$, and let $\lts^{P'}$ be the state-space of $P'$. Let $\alpha$ be the projection of $P$ over the set of fluents $\isom{F'}$ over which the embedding ranges, and let $\lts^{\alpha}$ be the abstract state space. Then
    \begin{itemize}
        \item There exists a bijection $b$ between the states of $\lts^{\alpha}$ and the states of $\lts^{P'}$;
        \item For any two different states $s^\alpha_1 \not= s^\alpha_2$ of $\lts^\alpha$, if there exists a transition from $s^\alpha_1$ to $s^\alpha_2$, then there exists a transition from $b(s^\alpha_1)$ to $b(s^\alpha_2)$ in $\lts^{P'}$
    \end{itemize}
}

\begin{proof}
With $P' = \langle F', I', O', G'\rangle$, let us denote $\lts^{P'} = \langle 2^{F'}, O', T', I', {S^{G'}} \rangle$ the state-space of $P'$, and $\lts^\alpha = \langle S^\alpha, O, T^\alpha, \alpha(I), \alpha(S^G) \rangle$ the LTS associated to the projection $\alpha$ over $\isom{F'}$. We show that any transition of $T^\alpha$ between two \emph{different} states $s_1, s_2 \in S^\alpha \subseteq 2^F$ has an equivalent transition in $P'$.

For a start, notice that $\alpha \circ \isomsymb{}$ is a bijection from the set $2^{F'}$ of states of $\lts^{P'}$, to the set $S^\alpha = 2^{\isom{F'}}$ of states of $\lts^\alpha$. Indeed, since for any $s \in 2^F$, $\alpha(s) = s \cap \isom{F'}$, $\alpha$ is surjective when restricted to the domain $2^{\isom{F'}}$, and thus a bijection since $\alpha: 2^{F} \rightarrow 2^{\isom{F'}}$ (by definition of a projection, and using a cardinality argument). Likewise, $\isomsymb{}$ defines a bijection from $2^{F'}$ to $2^{\isom{F'}}$ since $\isomsymb{}$ is injective by hypothesis (and thus bijective from $F'$ to $\isom{F'}$). Let us denote $b = \left(\alpha \circ \isomsymb{}\right)^{-1}$ the bijection from $S^\alpha$ to $2^{F'}$.

Let us now show that every \emph{non-reflexive} transition $\langle s^\alpha_1, o, s^\alpha_2 \rangle \in T^\alpha$ of $\lts^\alpha$ has an equivalent transition $\langle b(s^\alpha_1), \isomp{o}, b(s^\alpha_2) \rangle \in \lts^{P'}$. Let $\langle s^\alpha_1, o, s^\alpha_2 \rangle \in T^\alpha$ be a transition of the abstract state space. By definition of the transitions $T^\alpha$ of an abstraction, there exists a transition $\langle s_1, o, s_2 \rangle \in T$ in the concrete state space, such that $\alpha(s_1) = s^\alpha_1$ and $\alpha(s_2) = s^\alpha_2$. 

First, suppose that $o$ does not satisfy condition (\ref{eq:active_condition}), i.e. that $(\effp{o} \cup \effm{o}) \cap \isom{F'} = \emptyset$. Then since $s_2 = \apply{s_1}{o}$, and $\alpha(s) =s \cap \isom{F'}$, we have that
\begin{align*}
  \alpha(s_2) &= \alpha(\apply{s_1}{o}) \\
              &= \left((s_1 \setminus \effm{o}) \cup \effp{o} \right) \cap \isom{F'} \\
              &= \left(\left(s_1 \cap \isom{F'}\right) \setminus \left(\effm{o} \cap \isom{F'}\right) \cup \left(\effp{o} \cap \isom{F'} \right) \right) \\
              &= s_1 \cap \isom{F'} &\text{(since (\ref{eq:active_condition}) is not satisfied)} \\
              &= \alpha(s_1)
\end{align*}  
As a consequence, the transition $\langle s^\alpha_1, o, s^\alpha_2 \rangle$ is reflexive.

Now suppose that condition (\ref{eq:active_condition}) is satisfied by $o$, i.e. that $(\effp{o} \cup \effm{o}) \cap \isom{F'} \not= \emptyset$. Then let us show that $\langle s'_1, o', s'_2 \rangle$ is a transition in $\lts^{P'}$, where $s'_1 = \isomsymb{}^{-1}(s_1)$, $s'_2 = \isomsymb{}^{-1}(s_2)$, and $o' = \isomp{o}$. For this, we have to show that $\pre{o'} \subseteq s'_1$ and that $s'_2 = \apply{s'_1}{o'}$.

We start by showing that $o'$ is applicable in $s'_1$. Since $\pre{o} \subseteq s_1$ and $\isom{\pre{\isomp{o}}} \subseteq \pre{o} \cap \isom{F'}$ (since $(\isomsymb{}, \isompsymb{})$ is an embedding), we have
\begin{align*}
    \isom{\pre{\isomp{o}}} \subseteq s_1 \cap \isom{F'}
\end{align*}
So, by the application of $\isomsymb{}^{-1}$ on both sides,
\begin{align*}
    \pre{\isomp{o}} &\subseteq \isomsymb{}^{-1}(s_1 \cap \isom{F'}) \\
                    &\subseteq s'_1
\end{align*}

We now show that $s'_2 = \apply{s'_1}{o'}$, and we follow a similar schema as above to prove this.
\begin{align*}
    s_2 &= \apply{s_1}{o} \\
        &= (s_1 \setminus \effm{o}) \cup \effp{o} \\
    \isomsymb{}^{-1}(s_2) 
        &= \isomsymb{}^{-1}\left((s_1 \setminus \effm{o}) \cup \effp{o}\right) \\
        &= \isomsymb{}^{-1}\left(\left((s_1 \setminus \effm{o}) \cup \effp{o}\right) \cap \isom{F'}\right) \\
        &= \isomsymb{}^{-1}\left(\left(\left(\left(s_1 \cap \isom{F'}\right) \setminus \left(\effm{o} \cap \isom{F'}\right)\right) \cup \left(\effp{o} \cap \isom{F'} \right)\right)\right) \\
        &= \isomsymb{}^{-1}\left(\left(\left(s_1 \cap \isom{F'}\right) \setminus \isom{\effm{\isomp{o}}}\right) \cup \isom{\effp{\isomp{o}}}\right) &\text{(using (\ref{eq:effp_morphism}) and (\ref{eq:effm_morphism}))} \\
        &= \isomsymb{}^{-1}\left(\left(\left(s_1 \cap \isom{F'}\right)\right) \setminus \effm{\isomp{o}}\right) \cup \effp{\isomp{o}} &\text{(since } \isomsymb{}^{-1} \circ \isomsymb{} = Id \text{)} \\
        &= \left(s'_1 \setminus \effm{\isomp{o}}\right) \cup \effp{\isomp{o}} \\
    s'_2
        &= \apply{s'_1}{o'}    
\end{align*}
As a consequence, we have $\langle s'_1, \isomp{o}, s'_2\rangle \in T'$, which is such that $\langle \alpha(\isom{s'_1}), o, \alpha(\isom{s'_2})\rangle = \langle s^\alpha_1, o, s^\alpha_2\rangle \in T^\alpha$.

In conclusion, we have showed that every non-reflexive transition in $\lts^\alpha$ has an equivalent transition in $\lts^{P'}$, with respect to the bijection $b = \left( \alpha \circ \isomsymb{}\right)^{-1}$ from the states of $\lts^{\alpha}$ to the states of $\lts^{P'}$.
\end{proof}
}

\changes{
\noindent\textbf{Proposition~\ref{prop:embedding_paths}}
        Suppose that $(\isomsymb{}, \isompsymb{})$ is an embedding of $P'$ in $P$, and let $\lts^{P'}$ be the state-space of $P'$. Let $\alpha$ be the projection of $P$ over the set of fluents $\isom{F'}$ over which the embedding ranges, and let $\lts^{\alpha}$ be the abstract state space. Then
        if there exists a path from the initial state to a goal state in $\lts^\alpha$, there also exists a path from the initial state to a goal state in $\lts^{P'}$.

    \begin{proof}
        Suppose that there exists a path from the initial state to a goal state in $\lts^\alpha$. Let us use the notations of Lemma~\ref{lem:abstraction_transition_preservation}. Lemma~\ref{lem:abstraction_transition_preservation} and its proof give a bijection $b$ between states of $\lts^\alpha$ and $\lts^{P'}$ such that if a path between two states $s^\alpha_1, s^\alpha_2$ of $\lts^\alpha$ exists, then there exists a path from $b(s^\alpha_1)$ to $b(s^\alpha_2)$ in $\lts^{P'}$. This is straightforward to show by induction.

        Note that the initial state $s^\alpha_I$ of $\lts^\alpha$ is not necessarily mapped by $b$ to the initial state $I'$ of $\lts^{P'}$ (this is due to condition (\ref{eq:init_morphism}), which does not enforce equality).
        
        %
        %

        Two points are to be shown. First, we show that if $s^\alpha_G \in \alpha(S^{G})$, then $b(s^\alpha_G) \in S^{G'}$ (i.e. that $b$ maps goals states of $\lts^\alpha$ to goal states of $\lts^{P'}$). Then, we show that there exists a state $s^\alpha \in S^{\alpha}$ such that $b(s^\alpha) = I'$ and there exists a path in $\lts^\alpha$ from $s^\alpha$ to some goal state $s^\alpha_G$.

        Let $s^\alpha_G \in \alpha(S^G)$. By definition of $\alpha$ and $S^G$, we have that $G \cap \isom{F'} \subseteq s^\alpha_G$. In addition, since $\isomsymb{}$ is an embedding, we have (by (\ref{eq:goal_morphism})) that $\isom{G'} \subseteq G \cap \isom{F'}$. Combining the two equations, we obtain $\isom{G'} \subseteq s^\alpha_G$. Applying $\alpha$ on both sides (which is possible since $\alpha: 2^F \rightarrow 2^{\isom{F'}}$ and $2^{\isom{F'}} \subseteq 2^F$), we have $\alpha \circ \isom{G'} \subseteq s^\alpha_G$ (since the projection $\alpha$ is idempotent). By applying $b = \left( \alpha \circ \isomsymb{} \right)^{-1}$ on both sides, we have that $G' \subseteq b(s^\alpha_G)$, which shows that $s^\alpha_G$ is mapped to a goal state in $\lts^{P'}$.

        Let us now show that there exists a state $s^\alpha \in S^{\alpha}$ such that $b(s^\alpha) = I'$ and there exists a path in $\lts^\alpha$ from $s^\alpha$ to some goal state $s^\alpha_G$. First remark that, if there exists a path from $\alpha(I)$ to some $s^\alpha_G \in \alpha(S^G)$ in $\lts^\alpha$, then there exists a path from any $s^\alpha \in S^\alpha$ such that $\alpha(I) \subseteq s^\alpha$ to some $s'^\alpha_G \in \alpha(S^G)$. Let us consider $s^\alpha_{I'} = \alpha(\isom{I'}) \in S^\alpha$. We have:
        \begin{align*}
            s^\alpha_{I'} &= \alpha(\isom{I'}) \\
                          &= \isom{I'} &\text{(Definition of }\alpha\text{)} \\
                          &\supseteq I \cap \isom{F'} &\text{(By (\ref{eq:init_morphism}))} \\
                          &= \alpha(I)
        \end{align*}
        So there exists a path from $s^\alpha_{I'}$ to a goal state $s'^\alpha_G$ in $\lts^\alpha$. Using the results that stems from the generalization of Lemma~\ref{lem:abstraction_transition_preservation}, as presented in the beginning of this proof, we have that there exists a path from $I' = b(s^\alpha_{I'})$ to some goal state $b(s'^\alpha_G)$ in $\lts^{P'}$, which concludes the proof. 
        



    \end{proof}
}

\end{document}